\def\@algocf@capt@plainruled{above}
\renewcommand{\algocf@caption@plainruled}{%
  \vskip\AlCapSkip%
  \box\algocf@capbox%
  \vskip 5\algoheightrule}%
\newcommand{%
	\scalebox{}{\input{}}
}[2]{%
	\scalebox{#1}{\input{#2}}
}
\newcommand{\zerodisplayskips}{%
  \setlength{\abovedisplayskip}{1.6pt}%
  \setlength{\belowdisplayskip}{1.6pt}%
  \setlength{\abovedisplayshortskip}{1.6pt}%
  \setlength{\belowdisplayshortskip}{1.6pt}}
\appto{\normalsize}{\zerodisplayskips}
\appto{\small}{\zerodisplayskips}
\appto{\footnotesize}{\zerodisplayskips}
\definecolor{darkblue}{rgb}{0.0, 0.0, 0.55}
\definecolor{cobalt}{rgb}{0.0, 0.28, 0.67}
\definecolor{cardinal}{rgb}{0.77, 0.12, 0.23}
\definecolor{tropicalrainforest}{rgb}{0.0, 0.46, 0.37}
\definecolor{viridian}{rgb}{0.25, 0.51, 0.43}
\definecolor{sapgreen}{rgb}{0.31, 0.49, 0.16}
\definecolor{sacramentostategreen}{rgb}{0.0, 0.34, 0.25}
\definecolor{officegreen}{rgb}{0.0, 0.5, 0.0}
\DeclareAcronym{caldera}{
  short = \textcolor{officegreen}{CALDERA},
  long = {\bf C}alibration {\bf A}ware {\bf L}ow-Precision {\bf DE}composition with Low-{\bf R}ank {\bf A}daptation
}
\newcommand{\ldlq}{\textcolor{blue}{\text{\sc LDLQ}}\xspace}
\newcommand{\blockldlq}{\textcolor{blue}{\text{\sc BlockLDLQ}}\xspace}
\newcommand{\lplrfactorize}{\textcolor{blue}{\text{\sc LPLRFactorize}}\xspace}
\newcommand{\caldera}{%
  \iftoggle{firstuseCaldera}{%
    \textcolor{officegreen}{CALDERA}: \acl{caldera}%
    \togglefalse{firstuseCaldera}
  }{%
    \ac{caldera}\xspace 
  }%
}
\newcommand\notsotiny{\@setfontsize\notsotiny\@vipt\@viipt}
\newcommand{\Br}[1]{\left[#1\right]}
\newcommand{\Paren}[1]{\left(#1\right)}
\newcommand{\abs}[1]{\left\lvert#1\right\rvert}
\newcommand{\norm}[1]{\left\lVert#1\right\rVert}
\newcommand{\fronorm}[1]{\norm{#1}_{\rm F}}
\newcommand{\fronormsq}[1]{\norm{#1}_{\rm F}^2}
\newcommand{\maxnorm}[1]{\norm{#1}_{\rm max}}
\newcommand{\specnorm}[1]{\norm{#1}_2}
\newcommand{\subgaussnorm}[1]{\norm{#1}_{\psi_2}}
\newcommand{\infnorm}[1]{\norm{#1}_\infty}
\newcommand{\fronormsqlh}[1]{\lVert#1\rVert_{\rm F}^2}
\newcommand{\maxnormlh}[1]{\lVert#1\rVert_{\rm max}}
\newcommand{\specnormlh}[1]{\lVert#1\rVert_2}
\newcommand{\argminimize}{\mathop{\rm arg \hspace{1mm} min}}
\newcommand{\argmaximize}{\mathop{\rm arg \hspace{1mm} max}}
\newcommand{\Tr}[1]{\textrm{Tr}\left(#1\right)}
\newcommand{\Var}[1]{\textrm{Var}\left(#1\right)}
\newcommand{\pinv}[1]{#1^\dagger}
\newtheorem{theorem}{Theorem}[section]
\newtheorem{lemma}[theorem]{Lemma}
\newcommand{\av}{\mathbf{a}}
\newcommand{\ev}{\mathbf{e}}
\newcommand{\uv}{\mathbf{u}}
\newcommand{\vv}{\mathbf{v}}
\newcommand{\xv}{\mathbf{x}}
\newcommand{\Ivo}{\overline{\Iv}}
\newcommand{\Zvo}{\overline{\Zv}}
\newcommand{\Av}{\mathbf{A}}
\newcommand{\Bv}{\mathbf{B}}
\newcommand{\Dv}{\mathbf{D}}
\newcommand{\Ev}{\mathbf{E}}
\newcommand{\Hv}{\mathbf{H}}
\newcommand{\Iv}{\mathbf{I}}
\newcommand{\Lv}{\mathbf{L}}
\newcommand{\Mv}{\mathbf{M}}
\newcommand{\Qv}{\mathbf{Q}}
\newcommand{\Rv}{\mathbf{R}}
\newcommand{\Uv}{\mathbf{U}}
\newcommand{\Vv}{\mathbf{V}}
\newcommand{\Wv}{\mathbf{W}}
\newcommand{\Yv}{\mathbf{Y}}
\newcommand{\Xv}{\mathbf{X}}
\newcommand{\Zv}{\mathbf{Z}}
\newcommand{\Ebb}{\mathbb{E}}
\newcommand{\Qbb}{\mathbb{Q}}
\newcommand{\Bm}{\mathrm{B}}
\newcommand{\Fm}{\mathrm{F}}
\newcommand{\Lm}{\mathrm{L}}
\newcommand{\Om}{\mathrm{O}}
\newcommand{\Qm}{\mathrm{Q}}
\newcommand{\Rm}{\mathrm{R}}
\newcommand{\Real}{\mathbb{R}}
\newcommand{\Ahv}{\widehat{\Av}}
\newcommand{\Ztv}{\widetilde{\Zv}}
\newcommand{\etav}{\boldsymbol{\eta}}
\newcommand{\Lambdav}{\boldsymbol{\Lambda}}
\newcommand{\Sigmav}{\boldsymbol{\Sigma}}
\newcommand{\Sigmatv}{\widetilde{\Sigmav}}
\newcommand{\Lvtk}{\grave{\Lv}}
\newcommand{\Rvtk}{\grave{\Rv}}
\newcommand{\Uvtk}{\grave{\Uv}}
\newcommand{\Vvtk}{\grave{\Vv}}
\newcommand{\Sigmavtk}{\grave{\Sigmav}}
\newcommand{\Evl}{\Ev_{\rm L}}
\newcommand{\Evr}{\Ev_{\rm R}}
\newcommand{\Bmq}{\Bm_{\rm Q}}
\newcommand{\Bml}{\Bm_{\rm L}}
\newcommand{\Bmr}{\Bm_{\rm R}}
\newcommand{\Qmq}{\Qm_{\rm Q}}
\newcommand{\Qml}{\Qm_{\rm L}}
\newcommand{\Qmr}{\Qm_{\rm R}}
\newcommand{\Hvl}{\Hv_{\rm L}}
\newcommand{\Hvr}{\Hv_{\rm R}}
\newcommand{\Rml}{\Rm_{\rm L}}
\newcommand{\Rmr}{\Rm_{\rm R}}
\newcommand{\Deltal}{\Delta_{\rm L}}
\newcommand{\Deltar}{\Delta_{\rm R}}
\newcommand{\Uvx}{\Uv_{\rm X}}
\newcommand{\Vvx}{\Vv_{\rm X}}
\newcommand{\Uvxr}{\Uv_{\rm XR}}
\newcommand{\Vvxr}{\Vv_{\rm XR}}
\newcommand{\Sigmatvx}{\Sigmatv_{\rm X}}
\newcommand{\Sigmatvxr}{\Sigmatv_{\rm XR}}
\newcommand{\Sigmavxr}{\Sigmav_{\rm XR}}
\title{Compressing Large Language Models using Low Rank and Low Precision Decomposition}
\author{%
  Rajarshi~Saha \\
  Stanford University \\
  \And
  Naomi~Sagan \\
  Stanford University \\
  \And
  Varun~Srivastava \\
  Stanford University \\
  \AND
  Andrea~J.~Goldsmith \\
  Princeton University \\
  \And
  Mert Pilanci \\
  Stanford University \\
}
\pgfplotsset{compat=1.18}
\begin{document}

\maketitle

\begin{abstract}
  The prohibitive sizes of Large Language Models (LLMs) today make it difficult to deploy them on memory-constrained edge devices.
  This work introduces CALDERA -- a new post-training LLM compression algorithm that harnesses the inherent low-rank structure of a weight matrix $\Wv$ by approximating it via a low-rank, low-precision decomposition as $\Wv \approx \Qv + \Lv\Rv$.
  Here, $\Lv$ and $\Rv$ are low rank factors, and the entries of $\Qv$, $\Lv$ and $\Rv$ are quantized. 
  The model is compressed by substituting each layer with its $\Qv + \Lv\Rv$ decomposition, and the zero-shot performance of the compressed model is evaluated. 
  Additionally, $\Lv$ and $\Rv$ are readily amenable to low-rank adaptation, consequently enhancing the zero-shot performance.
  CALDERA obtains this decomposition by formulating it as an optimization problem $\min_{\Qv,\Lv,\Rv}\fronormsqlh{(\Qv + \Lv\Rv - \Wv)\Xv^\top}$, where $\Xv$ is the calibration data, and $\Qv, \Lv, \Rv$ are constrained to be representable using low-precision formats.
  Theoretical upper bounds on the approximation error of CALDERA are established using a rank-constrained regression framework, and the tradeoff between compression ratio and model performance is studied by analyzing the impact of target rank and quantization bit budget.
  Results illustrate that compressing LlaMa-$2$ $7$B/$13$B/$70$B and LlaMa-$3$ $8$B models using CALDERA outperforms existing post-training LLM compression techniques in the regime of less than $2.5$ bits per parameter.
  The implementation is available at: \href{https://github.com/pilancilab/caldera}{https://github.com/pilancilab/caldera}.
\end{abstract}

\section{Introduction}
\label{sec:introduction}

Large Language Models (LLMs) stand out due to their remarkable ability to generate human-like text, thereby supporting a diverse range of applications ranging from writing assistance to code generation. 
These models leverage vast datasets and significant computational resources to achieve their impressive functionality. 
The architecture of LLMs typically includes multiple layers, each with weight matrices essential for encoding various aspects of the training data -- from simple syntactic patterns to complex semantic relationships.
However, the substantial size of these trained models leads to high computational costs and considerable energy consumption during inference, which can be challenging for deployment in resource-constrained environments. 
As LLMs continue to expand in scale, compression techniques to reduce the memory and computational requirements of the models are becoming crucial to ensure their broad accessibility.

Due to the correlated nature of language syntax and semantics learned during training, often, the weight matrices of LLMs exhibit redundancy, which manifests as a low-rank structure.
This redundancy suggests the potential for compression without substantial loss in performance.
This work introduces \caldera, which compresses LLMs by leveraging the approximate low rank structure inherent in these weight matrices.
Given a matrix $\Wv \in \Real^{n \times d}$, \caldera approximates it as $\Wv \approx \Qv + \Lv \Rv$, where $\Qv \in \Real^{n \times d}$, $\Lv \in \Real^{n \times k}$ and $\Rv \in \Real^{k \times d}$.
Here, the {\it left} and {\it right} low rank factors, respectively $\Lv$ and $\Rv$, are tall and wide matrices, and $k$ is the target rank.
Furthermore, the entries of $\Qv$, $\Lv$ and $\Rv$ are represented using low-precision formats with $\Bmq$, $\Bml$ and $\Bmr$ bits per entry, respectively. 

\begin{figure}[h]
    \centering
    \begin{minipage}{0.4\textwidth}
        \centering
        \includegraphics[width=\textwidth]{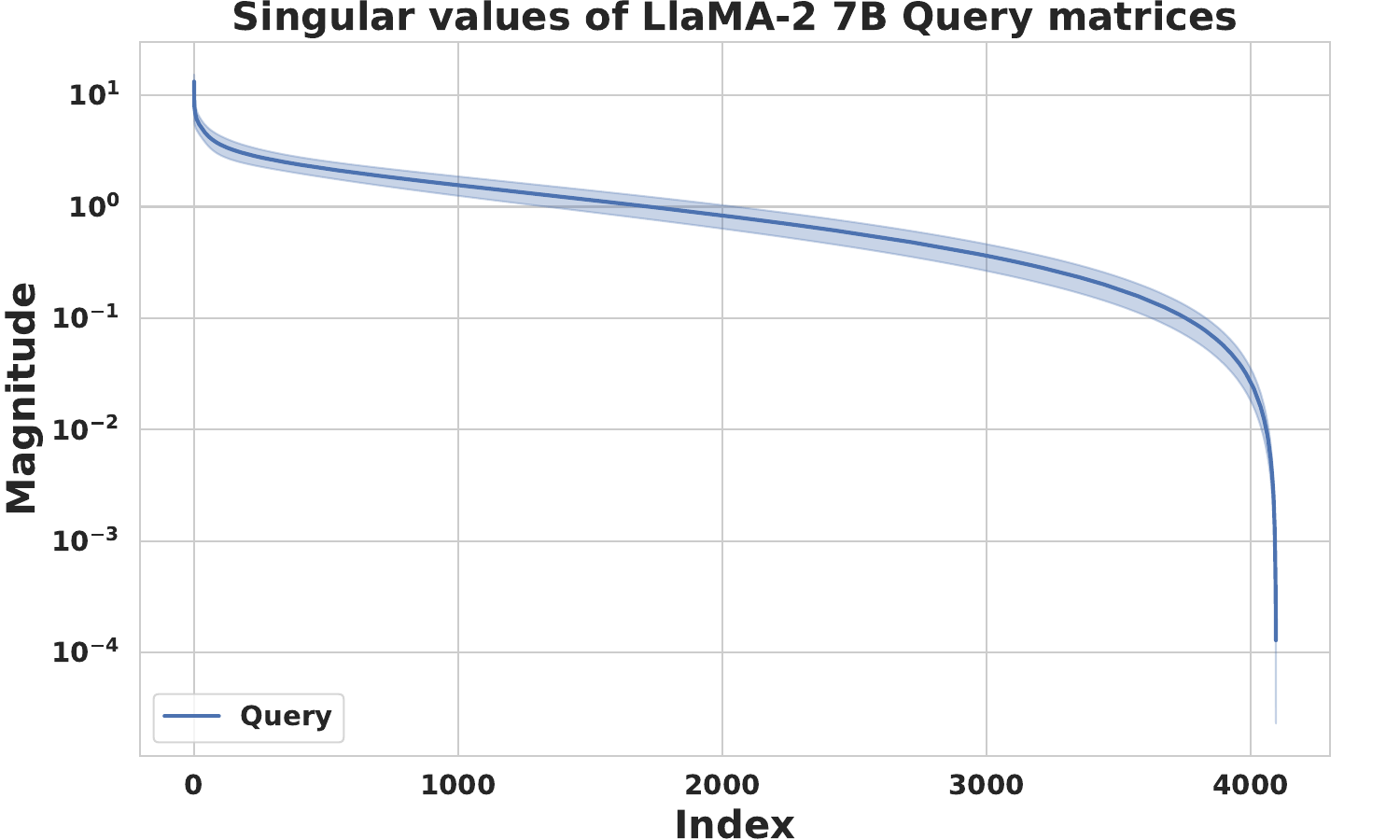} 
        \caption{\small Decaying spectrum of weight matrices
 (aka, {\it "approximate low-rank")}}
        \label{fig:decaying_spectrum_LLM_weights}
    \end{minipage}
    \hfill 
    \begin{minipage}{0.56\textwidth}
        \centering
        \includegraphics[width=\textwidth]{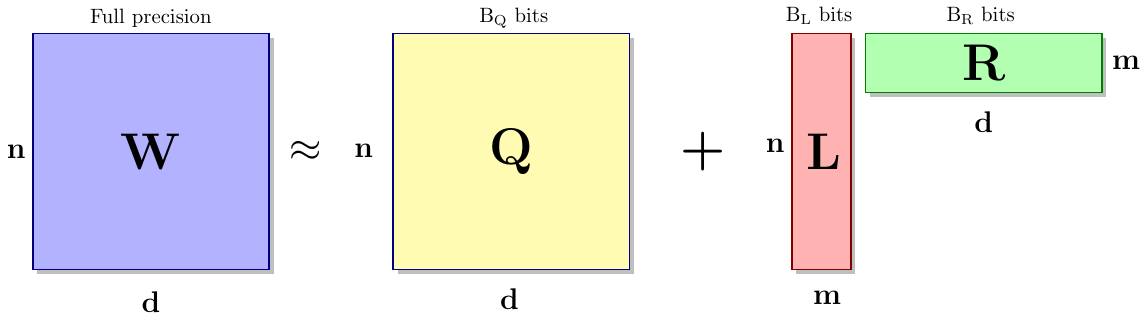}
        \caption{\small \caldera decomposes a full-precision weight matrix into a low-rank component ($\Lv\Rv$), which captures the contribution of the top singular values using $\Bml, \Bmr$ bits, and $\Qv$ for the trailing singular values with $\Bmq$ bits, enabling flexible precision settings for each component. Typically, $\Bmq < \Bml, \Bmr$.}
        \label{fig:fig1}
    \end{minipage}
\end{figure}

Since the singular value profile (aka spectrum) of the weight matrices of an LLM follow a decaying profile as shown in Fig. \ref{fig:decaying_spectrum_LLM_weights}, the low-rank factors $\Lv$ and $\Rv$ capture the effect of the large singular components of $\Wv$ with high fidelity.
Moreover, the backbone $\Qv$, which is quantized aggressively -- for instance, using $\Bmq = 2$ bits, coarsely captures the essence of the moderately decaying and low singular components of $\Wv$.
\caldera substitutes each weight matrix $\Wv$ in an LLM, with its approximate low-precision and low-rank decomposition $\Qv + \Lv\Rv$, resulting in a post-training quantization strategy that delivers state-of-the-art zero-shot performance.
In addition, since usually $k \ll {\rm min}\{n,d\}$, implying that the total number of parameters in $\Lv\Rv$ is much smaller compared to the number of entries in $\Wv$ (i.e., $k(n + d) \ll nd$), \caldera can readily fine-tune (or {\it "adapt"}) the low rank factors $\Lv$ and $\Rv$ in order to boost the zero-shot results.

\subsection{Significance and Related Works}
\label{subsec:significance-and-related-works}

Recent efforts have explored various avenues for compression, including but not limited to weight pruning, quantization, and the use of parameter-efficient training methods -- each approach offering distinct advantages and tradeoffs.
This section briefly reviews the current methodologies, highlighting the contributions and limitations of some studies closely related to this work.

{\bf LLM Compression and Outlier Mitigation}:
Recent studies like SmoothQuant \cite{xiao2023smoothquant}, OPTQ \cite{frantar2023optq}, QuIP \cite{chee2023quip}, AQLM \cite{egiazarian2024extreme}, and QuIP\# \cite{tseng2024quip} consider the challenging regime of sub-$4$ bit post-training LLM quantization.
These works collectively emphasize the need to manage the impact of outliers, i.e., weights with unusually high magnitudes.
Accommodating outliers necessitates choosing the dynamic range (or scale) of a quantizer to be high, consequently increasing the quantization error.
QuIP equalizes (and reduces) the weight matrices by using a randomized matrix transform, and subsequently, QuIP\# employs E8 lattice to make the weights more amenable to vector quantization.
Both QuIP and QuIP\# use a refined variant of the column-wise quantization method proposed in OPTQ, wherein error feedback from previously quantized columns of a matrix is used to compensate for the error incurred while quantizing subsequent columns.
\caldera utilizes this diverse arsenal of strategies and builds on top of QuIP\#, while capitalizing on the approximate low-rank structure of LLM weight matrices.
While it is possible to obtain even more aggressively compressed LLMs \cite{ma2024era}, this approach requires training from scratch, which is computationally demanding.

{\bf Parameter Efficient Fine-Tuning (PEFT)}:
In a related yet distinct vein of work, PEFT methods have gained significant momentum, aiming to adapt LLMs to specific tasks without extensive computational overhead.
Recent studies such as QLoRA \cite{dettmers2023qlora}, LoftQ \cite{li2023loftq}, and LQ-LoRA \cite{guo2023lqlora} have explored the intersection of PEFT and quantization, demonstrating that fine-tuning through low-rank updates, as originally proposed in LoRA \cite{hu2022lora}, can mitigate the performance losses due to quantization.
Given that \caldera yields a decomposition $\Qv + \Lv\Rv$, the low-rank components are particularly suitable for fine-tuning with any existing PEFT methods, thereby enhancing the zero-shot capabilities.

{\bf Low Rank Approximation}:
The rank-$k$ approximation of a matrix $\Av \in \Real^{n \times d}$ can be represented by the factorization $\Av \approx \Lv\Rv$, with $\Lv \in \Real^{n \times k}$ and $\Rv \in \Real^{k \times d}$, where $k \leq {\rm min}\{n,d\}$. 
Known as the Burer-Monteiro factorization, this method substantially decreases the number of parameters, thus reducing computational demands. 
Recent studies such as LoRD \cite{kaushal2023lord}, ASVD \cite{yuan2023asvd}, FWSVD \cite{hsu2022language}, LASER \cite{sharma2023truth}, LQER \cite{zhang2024lqer}, and ZeroQuant-V2 \cite{yao2023zeroquantv2} have explored the efficacy of low-rank structures in LLM weights, treating low-rank factorization and quantization independently. 
In contrast, LPLR \cite{saha2023matrix} approaches this by uniquely formulating a joint optimization problem for generic matrices, while simultaneously leveraging the equalization property of randomized transforms, as in \cite{chee2023quip}. 
\caldera formally leverages this inherent low-rank structure for LLM compression alongside existing frameworks such as QuIP\# \cite{tseng2024quip} and LoftQ \cite{li2023loftq}, providing additional flexibility for compression.
Furthermore, rigorous theoretical guarantees are derived using a rank-constrained regression framework for obtaining a low precision and low-rank decomposition, thereby also analytically demonstrating its superiority over rank-agnostic strategies.

\section{Problem Formulation}
\label{sec:problem-formulation}

In a neural network layer, a weight matrix $\Wv$ transforms an input activation $\xv$ into an output activation given by $\Wv\xv$.
This transformation can be succinctly described using the matrix's singular value decomposition (SVD).
For any matrix $\Av \in \Real^{n \times d}$, the SVD is $\Av = \sum_i\sigma_i \uv_i\vv_i$, where $\sigma_i, \uv_i, \vv_i$ are the $i^{\rm th}$ singular value and the corresponding left and right singular vectors, respectively.
The impact of each singular component $\uv_i\vv_i$ on the matrix’s transformation is determined by the magnitude of $\sigma_i$.
Given that weight matrices exhibit a decaying singular value profile (Fig. \ref{fig:decaying_spectrum_LLM_weights}), indicating an approximate low-rank structure, lesser contributing singular components can be pruned with minimal impact on the functionality of the matrix, ensuring minimal distortion in the output activations.

\caldera approximates the weight matrix of a neural network, $\Wv$, as a low-precision, low-rank decomposition, $\Wv \approx \Qv + \Lv\Rv$, with all components $\Qv, \Lv, \Rv$ in low-precision format.
Unlike previous works such as \cite{guo2023lqlora, li2023loftq, zhang2024lqer, yuan2023asvd}, which represent the low-rank factors $\Lv$ and $\Rv$ in high-precision (16 or 32-bit floating point), this work extends their representation to low-precision.
This further reduces the memory footprint while preserving performance.
Alternatively, for the same memory footprint, it allows the target rank $k$ to be higher, thereby capturing the low rank structure with higher fidelity by including more of the higher singular value components.
The following paragraph formalizes this as a constrained optimization problem.

For a given quantizer, let $\Qbb$ denote the set of discrete quantization points in $\Real$.
For $\Bm$-bit quantization, the cardinality of $\Qbb$ satisfies $\log_2 \lvert \Qbb \rvert \leq \Bm$.
Consider a matrix $\Wv \in \Real^{n \times d}$.
The goal of this work is to obtain an decomposition $\Wv \approx \Qv + \Lv\Rv$ by approximately solving the minimization problem
\begin{equation}
\label{eq:caldera_minimization_problem}
    \min_{\Qv, \Lv, \Rv}\fronormsq{(\Qv + \Lv\Rv - \Wv)\Xv^\top} \quad \text{subject to} \quad \Qv \in \Qbb_{\Qm}^{n \times d},\; \Lv \in \Qbb_{\Lm}^{n \times k},\; \text{and }\; \Rv \in \Qbb_{\Rm}^{k \times d}.
\end{equation}

Here, $\Qbb_{\Qm}$, $\Qbb_{\Lm}$ and $\Qbb_{\Rm}$ denote the lattice codebooks used to quantize $\Qv$, $\Lv$ and $\Rv$, using $\Bmq$, $\Bml$ and $\Bmr$ bits, respectively.
Furthermore, $\Xv \in \Real^{m \times d}$ is a calibration matrix that aims to preserve the Frobenius norm error of the compressed layer output activations. 
If $\Wv$ is the first layer's weight matrix, $\Xv$ includes input embeddings from a calibration dataset, such as a subset of RedPajama \cite{together2023redpajama}, with the $i^{\rm th}$ row representing the $i^{\rm th}$ datapoint. 
For intermediate layers, $\Xv$ contains the input activations, which are the output activations of the preceding layer.

Using the Frobenius norm of the output of a layer as a proxy objective for quantizing the weight matrices of an LLM is a popular strategy, and was used in prior work of \citet{nagel_2020} .
This proxy objective function is particularly useful for post-training quantization of LLMs because their large size makes it difficult to apply sophisticated compression methods.

\section{Proposed Algorithm: Calibration-Aware Low-Precision Decomposition with Low Rank Adaptation}
\label{sec:proposed_algorithm_caldera}

This section introduces \caldera to approximately solve \eqref{eq:caldera_minimization_problem} and get a $\Qv + \Lv\Rv$ decomposition of a weight matrix $\Wv$ using the calibration matrix $\Xv$.
The pseudocode is provided in Alg. \ref{alg:caldera}.
It consists of a nested loop for alternately optimizing the variables $\Qv, \Lv$ and $\Rv$.
Suppose $\Qmq$, $\Qml$ and $\Qmr$, respectively, denote quantizers used for quantizing $\Qv$, $\Lv$ and $\Rv$.
For instance, they can refer to uniformly dithered scalar quantizers, as described in App. \ref{subapp:uniformly-dithered-scalar-quantizer}.
Initially, the low-rank factors are set to $\mathbf{0}$, and $\Wv$ is quantized using the \ldlq quantizer proposed in \cite[\S 3.1]{chee2023quip}.
\ldlq is an adaptive quantization method that iteratively quantizes \cite{frantar2023optq} each column of $\Wv$ using $\Qmq$ to get $\Qv$ as
\begin{equation}
    \label{eq:LDLQ-update}
    \Qv^{(k)} = \Qmq(\Wv^{(k)} + (\Wv^{(1:k-1)} - \Qv^{(1:k-1)})\av_k),
\end{equation}
where $\Qv^{(k)},\Wv^{(k)}$ denote the $k^{\rm th}$ column, $\Wv^{(1:k-1)}$ denotes the first $k$ columns, $\Qmq$ has a bit-budget $\Bmq$, and $\av_k \in \Real^{k-1}$ is a learnable sequence of vectors.
Update Eq. \eqref{eq:LDLQ-update} incorporates linear feedback from already quantized columns, it can be seen that $\Qv$ satisfies $\Qv = \Qmq\Paren{\Wv + (\Wv - \Qv)\Mv}$, where the feedback matrix $\Mv$ is a strictly upper triangular matrix with columns $\av_k$.
Defining $\Hv \triangleq \frac{1}{m}\Xv^\top\Xv$ to be the (scaled) Hessian of the least squares objective in \eqref{eq:caldera_minimization_problem}, \cite{chee2023quip} show that the optimal feedback matrix is the $\Mv$ obtained from the LDL decomposition of $m\Hv $, given by $m\Hv = (\Mv + \Iv)\Dv(\Mv + \Iv)^\top$.

Subsequently, $\Qv$ is fixed and the Low-Precision Low-Rank (LPLR) factorization of the residual, $(\Wv - \Qv)$, is computed.
This is done by the {\bf \lplrfactorize} submodule (Alg. \ref{alg:data-aware-LPLR-factorization-submodule}), which is a refined version of the LPLR algorithm proposed in \cite{saha2023matrix}.
For a given matrix $\Av$, Alg. \ref{alg:data-aware-LPLR-factorization-submodule} minimizes
\begin{equation}
    \label{eq:data-aware-lplr-LeastSquaresObjective}
    \min_{\Lv, \Rv}\fronormsq{(\Lv\Rv - \Av)\Xv^\top} \quad \text{subject to} \quad \Lv \in \Qbb_{\Lm}^{n \times k},\; \text{and }\; \Rv \in \Qbb_{\Rm}^{k \times d},
\end{equation}
where $\Qml$ and $\Qmr$ use $\Bml$ and $\Bmr$ bits, respectively.
In contrast to \cite{saha2023matrix}, the objective in \eqref{eq:data-aware-lplr-LeastSquaresObjective} is calibration-data aware.
Therefore, the update equations are derived using a rank-constrained regression framework, as described in App. \ref{app:rank-constrained-regression}.
Moreover, lines $7$ to $14$ in {\bf \lplrfactorize} iteratively refine the estimates of $\Lv$ and $\Rv$, and can only yield a smaller Frobenius norm error.
The left and right low-rank factor update equations are described as follows.

{\bf Initialization}: 
In the absence of quantization constraints, a globally optimal solution to the optimization problem \eqref{eq:data-aware-lplr-LeastSquaresObjective} can be found as described later in lemma \ref{lem:rank-constrained-regression-problem-main}.
Consequently, the low-rank factors are initialized using rank-constrained regression in lines $2$ -- $4$.
Since subsequent quantization disrupts optimality of the solution, the factors are iteratively updated to minimize this distortion. 

{\bf Updating $\Lv$}:
To update the left factor $\Lv$, lines $5$ and $9$ of Alg. \ref{alg:data-aware-LPLR-factorization-submodule} solves \(\min_{\Zv} \fronormsqlh{(\Zv\Rv - \Av)\Xv^\top}\).
For a fixed $\Rv$, this is a least squares minimization, whose solution is available is closed form as $\Lvtk = \Paren{\Av\Xv^\top}\pinv{\Paren{\Rv\Xv^\top}} = \Av\Hv\Rv^\top(\Rv\Hv\Rv^\top)^{-1}$, as derived in App. \ref{subapp:update-equations-low-rank-factors}.

{\bf Updating $\Rv$}: Line $8$ of Alg. \ref{alg:data-aware-LPLR-factorization-submodule}, updates the right factor $\Rv$ by keeping $\Lv$ fixed and solving ${\rm min}_{\Zv}\fronormsqlh{(\Lv\Zv - \Av)\Xv^\top}$.
As this is an under-determined linear system, there exist multiple solutions for $\Zv$, all attaining the same objective function value.
It is shown in App. \ref{subapp:update-equations-low-rank-factors} that $\Rvtk = \pinv{\Lv}\Av\Hv\pinv{\Hv}$ is a solution.
The corresponding error is also obtained, which is used in the derivation of Thm. \ref{thm:approximation-error-Q-plus-LR}.

{\bf Computational Complexity}:
A high-level calculation is provided here, and detailed discussions can be found in App. \ref{app:caldera_computational_complexity}.
It is worthwhile to note that the closed form expressions of $\Lvtk$ and $\Rvtk$, which are iteratively quantized, are functions of the Hessian $\Hv = \frac{1}{m}\Xv^\top\Xv$.
Therefore, $\Hv$ can be computed offline initially, per LLM, by doing a single forward pass, and subsequently used for all model quantization experiments.
For each layer, this pre-processing includes computing $\Hv$ and its LDL decomposition, along with computing $\Hv\pinv{\Hv}$, requiring a total of $\Om(md^2 + 2d^3)$ multiplications.
Each outer iteration involves an \ldlq quantization.
Quantizing the $k^{\rm th}$ column has complexity $\Om(nk)$, since feedback from $k$ already quantized columns need to be incorporated.
Hence, quantizing a matrix in $\Real^{n \times d}$ entails $\Om(n^2 + 3n)$ complexity.
Moreover, \lplrfactorize requires $\Om(m^2(n+d))$ to initialize, and subsequently, each inner iteration entails $\Om(ndk)$.
Assuming $n, d \geq m \gg k$, and keeping only the dominant terms, the total complexity of \caldera, not including the complexity of the pre-processing discussed earlier, is $\Om\Paren{{\rm T_{out}}\Paren{n^2 + m^2(n+d) + ndk\;{\rm T_{in}}}}$.

{\bf Fine tuning via Low-Rank Adaptation}: 
Once the weight matrices of each layer are replaced by its $\Qv + \Lv\Rv$ approximation, the zero-shot performance of (post-training) quantized model can be evaluated. 
\S \ref{sec:numerical-simulations} shows that \caldera quantized models outperform existing strategies.
Additionally, if desired, the low-rank factors $\Lv$ and $\Rv$ can be further fine-tuned using low-rank adaptation \cite{hu2022lora, li2023loftq, guo2023lqlora} on a small task-specific dataset.
While the initialization of the fine-tuning step has quantized $\Qv, \Lv$ and $\Rv$, the fine-tuned factors are represented using $16$-bits ($\rm BF16$ format).
Although this leads to a slight increase in the memory footprint, the performance gains from fine-tuning are substantial.

\begin{algorithm}[b]
  \caption{\small{\sc \textbf{CALDERA}}: {\bf C}alibration {\bf A}ware {\bf L}ow-Precision {\bf DE}composition with Low-{\bf R}ank {\bf A}daptation}
  \label{alg:caldera}
  \KwIn{Matrix: $\Wv \in \Real^{n \times d}$, Target rank: $k$, Calibration matrix: $\Xv \in \Real^{m \times d}$, Outer and inner iterations: $\rm T_{out}$, $\rm T_{in}$, Quantizers: $\Qmq$, $\Qml$, $\Qmr$, Flag: \textcolor{darkblue}{EnableLoRA}, Fine-tune rank: $r$} 
  \BlankLine
  \KwOut{LPLR decomposition: $\Qv \in \Qbb_{\Qm}^{n \times d}$, $\Lv \in \Qbb_{\Lm}^{n \times k}$, $\Rv \in \Qbb_{\Rm}^{k \times d}$ s.t. $\Wv\Xv^{\top} \hspace{-0.5mm} \approx \hspace{-0.5mm} (\Qv + \Lv\Rv)\Xv^{\top}$}

  \BlankLine
  \SetKw{Initialize}{Initialize:}
  \Initialize{$t \leftarrow 0$, $\Lv_0 \gets \mathbf{0}$, $\Rv_0 \gets \mathbf{0}$, $\textcolor{cardinal}{{\rm MinError}} \gets \infty$} 
  
  \While{$t < {\rm T_{out}}$}{

      Update $\Qv$: $\Qv_{t+1} \gets \ldlq(\Wv - \Lv_t\Rv_t, \Qmq)$
    
      Update low-rank factors: $\Lv_{t+1}, \Rv_{t+1} \gets \lplrfactorize (\Wv - \Qv_{t+1}, k, \Xv, \Qml, \Qmr, {\rm T_{in}})$

      \If{$\fronormsqlh{(\Qv_{t+1} + \Lv_{t+1}\Rv_{t+1} - \Wv)\Xv^{\top}} < \textcolor{cardinal}{{\rm MinError}}$}{
      
            $\Qv_{\rm best} \gets \Qv_{t+1}$, $\Lv_{\rm best} \gets \Lv_{t+1}$, $\Rv_{\rm best} \gets \Rv_{t+1}$, $\textcolor{cardinal}{{\rm MinError}} \gets \fronormsqlh{(\Qv_{t+1} + \Lv_{t+1}\Rv_{t+1} - \Wv)\Xv^{\top}}$\;   
      }
      
  $t \gets t+1$
  }

  \If{\text{\rm \textcolor{darkblue}{EnableLoRA} is \textcolor{tropicalrainforest}{TRUE}}}{   
   Further Fine-tune top-$r$ singular components of $\Lv_{\rm best}$ and $\Rv_{\rm best}$ to $16$-bit precision using Low-Rank Adaptation (LoRA) (as in \cite{hu2022lora, li2023loftq, guo2023lqlora})
  }

  \Return{$\Qv_{\rm best}, \Lv_{\rm best}, \Rv_{\rm best}$}\;
\end{algorithm}

\begin{algorithm}[t]
  \caption{\small {\bf \lplrfactorize}$(\Av, k, \Xv, \Qml, \Qmr, T_{\rm in})$: LPLR factorization submodule}
  \label{alg:data-aware-LPLR-factorization-submodule}
  \KwIn{Matrix: $\Av \in \Real^{n \times d}$, Target rank: $k$, Calibration matrix: $\Xv \in \Real^{m \times d}$, Iterations: ${\rm T_{in}}$, Quantizers: $\Qml$, $\Qmr$}
  \BlankLine
  \KwOut{Low precision Low Rank factors: $\Lv \in \Qbb^{n \times k}$, $\Rv \in \Qbb^{k \times d}$ s.t. $\Av\Xv^{\top} \approx \Lv\Rv\Xv^{\top}$}

  \BlankLine
  \SetKw{Initialize}{Initialize:}
  \Initialize{{\rm Iteration counter}: $i \leftarrow 0$} 

  Compute SVD of $\Xv$ as $\Uv\Sigmatv\Vv^{\top}$.\\
  Compute SVD of $\Uv^{\top}\Xv\Av^{\top}$ as $\Uvtk\Sigmavtk\Vvtk^{\top}$\\
  Get right low-rank factor: $\Rv_0 \gets \Qmr(\Iv_k^{\top}\Sigmavtk\Vvtk^{\top})$\\
  Get left low-rank factor: $\Lv_0 \triangleq \Qml(\Lvtk_0)$, where $\Lvtk_0 = \argminimize_{\Zv \in \Real^{k \times d}}\fronormsq{(\Zv\Rv_0 - \Av)\Xv^{\top}}$\\
  $\Lv_{\rm best} \gets \Lv_0$, $\Rv_{\rm best} \gets \Rv_0$, $\textcolor{cardinal}{{\rm MinError}} \gets \fronormsq{(\Lv_0\Rv_0 - \Av)\Xv^{\top}}$.
  
  \While{$i < {\rm T_{in}}$}{
    Update right: $\Rv_{i+1} \gets \Qmr(\Rvtk_{i+1})$, where $\Rvtk_{i+1} = \argminimize_{\Zv \in \Real^{k \times d}} \fronormsq{\Paren{\Lv_i\Zv - \Av}\Xv^{\top}}$
  
    Update left: $\Lv_{i+1} \gets \Qml(\Lvtk_{i+1})$, where $\Lvtk_{i+1} = \argminimize_{\Zv \in \Real^{n \times k}} \fronormsq{\Paren{\Zv\Rv_i - \Av}\Xv^{\top}}$\\

    \If{$\fronormsq{(\Lv_{i+1}\Rv_{i+1} - \Av)\Xv^{\top}} < \textcolor{cardinal}{{\rm MinError}}$}{
        $\Lv_{\rm best} \gets \Lv_{i+1}$, $\Rv_{\rm best} \gets \Rv_{i+1}$, $\textcolor{cardinal}{{\rm MinError}} \gets \fronormsq{(\Lv_{i+1}\Rv_{i+1} - \Av)\Xv^{\top}}$\;
    }

    $i \gets i+1$
  }

  \Return{$\Lv_{\rm best}, \Rv_{\rm best}$}
\end{algorithm}

\section{Approximation Error Analysis}
\label{sec:approximation-error-analysis}

The approximation error upper bounds are derived via a rank-constrained regression framework.
Thm. \ref{thm:approximation-error-Q-plus-LR} below (formally stated and proved in App. \ref{subapp:proof-approximation-error-upper-bound-Q-plus-LR-decomposition}) is an informal version of the main theoretical result of this paper, and provides an upper bound on the Frobenius norm error when \caldera approximates a weight matrix $\Wv$ is as $\Wv \approx \Qv + \Lv\Rv$ by solving the optimization problem \eqref{eq:caldera_minimization_problem} using Alg. \ref{alg:caldera}.
For convenience of analysis, it is assumed that the dynamic range of $\Qmq$, denoted as $\Rm$, is chosen to be high enough, ensuring it remains unsaturated. 
Consequently, for a scalar input the quantization error from $\Qmq$ has zero mean and bounded variance, given by $\frac{\Delta^2}{4} = \frac{\Rm^2}{(2^{\Bmq} - 1)^2}$.

\begin{theorem}
    \label{thm:approximation-error-Q-plus-LR}
    {\bf Approximation error of {\sc \textbf{caldera}} (Informal)}
    Given $\Wv \in \Real^{n \times d}$ and $\Xv \in \Real^{m \times d}$ with $m \leq d$, let $\Dv$ be obtained from the LDL decomposition $\Xv^\top\Xv = m\Hv = (\Mv + \Iv)\Dv(\Mv + \Iv)^\top$, and $\lambda_{\rm max}$, $\lambda_{\rm min}$ denote the max and min eigenvalues of $\Hv$.
    Additionally, let $\Qv \triangleq \ldlq(\Wv, \Qmq)$, where $\Qmq$ has dynamic range $\Rm$ and bit-budget $\Bmq$, the quantization error be $\etav \triangleq \Qmq(\Qv + (\Wv - \Qv)\Mv) - (\Qv + (\Wv - \Qv)\Mv)$, and $\sigma_1 \geq \ldots \geq \sigma_k \ldots$ be the singular values of $\Xv(\Wv - \Qv)^\top$. 
    If the target rank $k$ satisfies $0.25\lambda_{\rm min}^{1/2}(m\sigma_1)^{-1}\lambda_{\rm max}^{-3/2}\sum_{i > k}\sigma_i^2 \leq k \leq m$, and the dynamic ranges of $\Qml$ and $\Qmr$ are set as $\Rml = \frac{2\sigma_1}{\sigma_k\sqrt{m\lambda_{\rm min}}}$ and $\Rmr = \sigma_1$, then $\Qv, \Lv$ and $\Rv$ returned by Alg. \ref{alg:caldera} satisfy
    \begin{equation*}
        \frac{1}{nm}\;\Ebb\fronormsq{(\Qv + \Lv\Rv - \Wv)\Xv^\top} \leq \frac{1}{n}\sum_{i > k}\Ebb \lambda_i(\etav\Dv\etav^\top) + \epsilon \lesssim \frac{4\mathcolor{darkblue}{d}\lambda_{\rm max}\Rm^2}{\pi\Paren{2^{\Bmq} - 1}^2}\Paren{1 - \frac{k}{2\mathcolor{darkblue}{n}}}^2 + \epsilon,
    \end{equation*}
    while utilizing an average budget of $\frac{1}{2}\log_2\Paren{\frac{k\sigma_1^3}{\mathcolor{darkblue}{m}\epsilon\sigma_k}\frac{\lambda_{\rm max}}{\lambda_{\rm min}}\sqrt{\mathcolor{darkblue}{d/n}}}$ bits per parameter for the low-rank factors $\Lv$ and $\Rv$, when $n \approx d$.
    Here, the expectation is over the stochasticity of the quantizers.
\end{theorem}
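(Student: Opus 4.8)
The plan is to control the error of the \emph{first} outer iteration of Alg.~\ref{alg:caldera}; since the loop keeps the iterate of smallest error, this also bounds the returned triple, and it lets us take $\Qv=\ldlq(\Wv,\Qmq)$ and $(\Lv,\Rv)=\lplrfactorize(\Wv-\Qv,\dots)$. First I would rewrite the \ldlq\ step through $\etav$: from $\Qv=\Qmq(\Wv+(\Wv-\Qv)\Mv)$ one obtains $\etav=(\Qv-\Wv)(\Iv+\Mv)$, so $\Wv-\Qv=-\etav(\Iv+\Mv)^{-1}$, and using the LDL factorization $\Xv^\top\Xv=(\Mv+\Iv)\Dv(\Mv+\Iv)^\top$,
\[(\Wv-\Qv)\Xv^\top\Xv(\Wv-\Qv)^\top=\etav\Dv\etav^\top .\]
Hence $\lambda_i(\etav\Dv\etav^\top)=\sigma_i^2$ and $\fronormsq{(\Wv-\Qv)\Xv^\top}=\mathrm{Tr}(\etav\Dv\etav^\top)$, which is precisely why both $\sigma_i$ and $\etav\Dv\etav^\top$ surface in the final bound.

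Next I analyze $\lplrfactorize$ on the residual $\Av\triangleq\Wv-\Qv$. By Lemma~\ref{lem:rank-constrained-regression-problem-main}, the rank-constrained-regression initialization returns $\Lv^\star,\Rv^\star$ whose product realizes the best rank-$k$ approximation of $\Av\Xv^\top$, hence attains $\sum_{i>k}\sigma_i^2=\sum_{i>k}\lambda_i(\etav\Dv\etav^\top)$. When $\Rv^\star$ is quantized to $\Rv^\star+\etav_R$, the regression residual $(\Lv^\star\Rv^\star-\Av)\Xv^\top$ has column space orthogonal to that of $\Lv^\star$, so its cross term with $\Lv^\star\etav_R\Xv^\top$ vanishes and the objective grows by exactly $\fronormsq{\Lv^\star\etav_R\Xv^\top}$; the closed-form least-squares update $\Lvtk=\Av\Hv\Rv^\top(\Rv\Hv\Rv^\top)^{-1}$ cannot increase it; and quantizing the updated left factor to $\Lvtk+\etav_L$ adds, again by orthogonality (now via the normal equations of the $\Lv$-step), the term $\fronormsq{\etav_L\Rv\Xv^\top}$. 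Since lines~7--14 only refine $\Lv,\Rv$ further, the $\lplrfactorize$ error is at most
\[\sum\nolimits_{i>k}\lambda_i(\etav\Dv\etav^\top)+\fronormsq{\Lv^\star\etav_R\Xv^\top}+\fronormsq{\etav_L\Rv\Xv^\top},\]
which is what substituting the closed-form update errors from App.~\ref{subapp:update-equations-low-rank-factors} gives.

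It then remains to take expectations over the dithers and normalize by $nm$. For $\Qmq$, independence and zero mean of the dithers give $\Ebb[\eta_{ij}\eta_{i'j'}]=0$ unless $(i,j)=(i',j')$ and $\Ebb\eta_{ij}^2\le\Rm^2(2^{\Bmq}-1)^{-2}$, so with $\Dv\preceq m\lambda_{\max}\Iv$ (the LDL pivots are bounded by $\lambda_{\max}(\Xv^\top\Xv)$) we get $\Ebb\,\mathrm{Tr}(\etav\Dv\etav^\top)\le m\lambda_{\max}\,\Ebb\fronormsq{\etav}\le nmd\lambda_{\max}\Rm^2(2^{\Bmq}-1)^{-2}$; combining with the elementary fact $\sum_{i>k}\sigma_i^2(\Mv)\le(1-k/n)\fronormsq{\Mv}$ for any $\Mv$ with at most $n$ nonzero singular values (worst case: all singular values equal), applied to $\Mv=\etav\Dv^{1/2}$, and then $1-k/n\le(1-k/(2n))^2$, reproduces the $\Bmq$-dependent term of the theorem (the constant $4/\pi$ coming from the sharp dithered-quantizer variance). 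For the low-rank terms I would first verify non-saturation: the entries of $\Lv^\star$ and of the iterates $\Lvtk$ stay below $\Rml=\tfrac{2\sigma_1}{\sigma_k\sqrt{m\lambda_{\min}}}$, and those of $\Rvtk=\pinv{\Lv}\Av\Hv\pinv{\Hv}$ below $\Rmr=\sigma_1$, using $\|\Av\Xv^\top\|=\sigma_1$, $\sigma_{\min}(\Xv)=\sqrt{m\lambda_{\min}}$, and a lower bound $\sigma_{\min}(\Rv\Xv^\top)\gtrsim\sigma_k\sqrt{m\lambda_{\min}}$. Given non-saturation, $\Ebb[\etav_R\Bv\etav_R^\top]\preceq v_R\,\mathrm{Tr}(\Bv)\Iv$ for PSD $\Bv$ (and likewise for $\etav_L$) yields $\Ebb\fronormsq{\Lv^\star\etav_R\Xv^\top}\le v_R\,md\lambda_{\max}\fronormsq{\Lv^\star}$ and $\Ebb\fronormsq{\etav_L\Rv\Xv^\top}\le v_L\,nm\lambda_{\max}\fronormsq{\Rv}$, with $v_L\le\Rml^2(2^{\Bml}-1)^{-2}$, $v_R\le\Rmr^2(2^{\Bmr}-1)^{-2}$; bounding $\fronormsq{\Lv^\star}$, $\fronormsq{\Rv}$ and forcing each of these to be at most $\tfrac12 nm\epsilon$ produces lower bounds on $\Bml$ and $\Bmr$, and averaging over the $nk$ entries of $\Lv$ and the $dk$ entries of $\Rv$, i.e. $\tfrac{n\Bml+d\Bmr}{n+d}$, reproduces the stated per-parameter budget when $n\approx d$. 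Adding the three contributions and dividing by $nm$ gives the displayed chain.

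The step I expect to be the main obstacle is the non-saturation / conditioning control: one must propagate a lower bound on $\sigma_{\min}(\Rv\Xv^\top)$ (equivalently an upper bound on $\|\pinv{(\Rv\Xv^\top)}\|$) through the quantization of $\Rv$, so that $\Lvtk$ does not blow up and $\Rml$ is a legitimate dynamic range — this is precisely what the two-sided hypothesis on $k$, hence the smallness of the tail $\sum_{i>k}\sigma_i^2$, is there to buy. A secondary subtlety is making the ``lines~7--14 only help'' claim airtight, since re-quantizing a factor nominally re-injects error; one handles this either via the keep-the-best bookkeeping already present in the outer loop or by telescoping the orthogonal error terms across inner iterations.
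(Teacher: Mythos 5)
Your outline follows the same high-level skeleton as the paper's proof — reduce to the first outer iteration, pass the residual $\Wv-\Qv$ through \lplrfactorize, and use $\Wv-\Qv=-\etav(\Iv+\Mv)^{-1}$ with the LDL factorization to identify $\sigma_i^2(\Xv(\Wv-\Qv)^\top)$ with $\lambda_i(\etav\Dv\etav^\top)$ — and you correctly flag the two delicate points (saturation control via the lower bound on $\sigma_{\min}(\Rv\Xv^\top)$, and the fact that inner iterations ``only help'' because of the keep-the-best bookkeeping). Two points deserve comment, one a genuine alternative and one a small slip.

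The genuine difference is in the last step, where the paper bounds $\sum_{i>k}\Ebb\,\lambda_i(\etav\Dv\etav^\top)$ by invoking a Marchenko–Pastur approximation for the eigenvalues of $\tfrac{1}{d}\sum_j\etav_j\etav_j^\top$, a linear upper bound on the inverse MP CDF, and $\max_j D_{jj}\le m\lambda_{\max}$; this is the source of the $\pi$ in the statement and is only an asymptotic ($\approx$) step. You instead apply the elementary tail estimate $\sum_{i>k}\lambda_i(\Av)\le(1-k/n)\,\mathrm{Tr}(\Av)$ for PSD $\Av$ with $\le n$ nonzero eigenvalues, combined with $\mathrm{Tr}(\Dv)\le dm\lambda_{\max}$ and $\Ebb\,\eta_{ij}^2\le\Rm^2(2^{\Bmq}-1)^{-2}$. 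That gives $\tfrac{1}{nm}\Ebb\sum_{i>k}\lambda_i(\etav\Dv\etav^\top)\le(1-k/n)\,d\lambda_{\max}\Rm^2(2^{\Bmq}-1)^{-2}$, which is strictly smaller than the paper's $\tfrac{4}{\pi}d\lambda_{\max}\Rm^2(2^{\Bmq}-1)^{-2}(1-k/(2n))^2$ for all $k$, so it implies the theorem without any asymptotic hand-waving. That is a real simplification. But your parenthetical claim that ``the constant $4/\pi$ comes from the sharp dithered-quantizer variance'' is wrong: the quantizer variance is $\Delta^2/4$ with no $\pi$ anywhere, and under your elementary route no $\pi$ ever appears; the $4/\pi$ in the theorem is an artefact of the MP density and your proof simply subsumes it as a loosening, which you should say explicitly rather than manufacture a false origin for the constant.

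The small slip is the orthogonality argument in the \lplrfactorize error decomposition. You claim the cross term between the regression residual $(\Lv^\star\Rv^\star-\Av)\Xv^\top$ and $\Lv^\star\etav_R\Xv^\top$ vanishes ``by orthogonality.'' The normal equation makes the residual orthogonal to perturbations of $\Lv$ in the direction of $\Rv^\star\Xv^\top$, not to perturbations coming from $\etav_R$, so that cross term does not vanish pointwise. What the paper actually uses (Lemma~\ref{lem:approximation_error_data-aware-LPLRFactorize}, step $\rm(i)$ of Eq.~\eqref{eq:approx-error-data-aware-lplr-2} and the expansion in \eqref{eq:approx-error-data-aware-lplr-3}) is that $\etav_R$ and $\etav_L$ are zero-mean dither errors independent of what they perturb, so the cross terms vanish \emph{in expectation}. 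Your conclusion is right but the mechanism is unbiasedness, not geometry, and you also omit the $\Ebb\fronormsq{\etav_L\etav_R\Xv^\top}$ cross–cross term that the paper carries explicitly (it is lower order but must be accounted for to land on the clean $+\epsilon$).
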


An informal version of the main result is provided here, and the formal version including specific constant values, along with the derivation, can be found in App. \ref{subapp:proof-approximation-error-upper-bound-Q-plus-LR-decomposition}.
The requirement $m \leq d$ is not restrictive, because when $\Hv$ is positive definite, \eqref{eq:caldera_minimization_problem} can be rewritten as $\fronormsqlh{(\Qv + \Lv\Rv - \Wv)\Xv^\top} = \fronormsqlh{(\Qv + \Lv\Rv - \Wv)\Hv^{1/2}}$, ensuring $m = d$.
This is detailed further in App. \ref{subapp:positive_definite_hessian_simplification}. 
The approximation error upper bound given by Thm. \ref{thm:approximation-error-Q-plus-LR} can be directly compared with the result of \citet[Thm. 1]{chee2023quip}, which states that for vanilla LDLQ without \lplrfactorize,
\begin{equation}
    \label{eq:quip_upper_bound}
    \Ebb\fronormsq{(\Qv - \Wv)\Xv^\top} \leq \Ebb\Br{\Tr{\etav\Dv\etav^\top}} = \sum_{i = 1}^n \Ebb\lambda_i(\etav\Dv\etav^\top).
\end{equation}
Evidently, Alg. \ref{alg:caldera} yields a smaller error provided, {$\sum_{i > k}\Ebb\lambda_i(\etav\Dv\etav^\top) < \sum_{i = 1}^{k}\Ebb \lambda_i(\etav\Dv\etav^\top) - \epsilon$}, where $\epsilon$ can be chosen to be arbitrarily small.
Furthermore, since the expression in Thm. \ref{thm:approximation-error-Q-plus-LR} consists of two terms, namely, the rank-constrained regression error, which depends on the target rank $k$, and the additive quantization error of $\epsilon$, which is dictated by the bit-budgets used for $\Lv$ and $\Rv$, this upper bound can be made arbitrarily small by ensuring that the two terms are approximately equal, i.e., $\Ebb\fronormsqlh{(\Qv + \Lv\Rv - \Wv)\Xv^\top}$ is upper bounded by $2\epsilon$.
This is apparent in the following regimes:
\begin{enumerate}[label=(\roman*), leftmargin=16pt]
\vspace{-1ex}
    \item {$\mathcolor{officegreen}{k \ll n}$}: In this regime, $k$ is treated as a constant as $n$ grows.
    Then, if the bit-budget $\Bmq$ satisfies
    \begin{equation*}
        \Bmq \geq \log_2\Paren{2\Rm (\pi \epsilon)^{-1/2}\sqrt{\mathcolor{darkblue}{nmd}\; \lambda_{\rm max}} + 1}, \quad \text{then} \quad \Ebb\fronormsq{(\Qv +\Lv\Rv - \Wv)\Xv^\top} \leq 2\epsilon. 
    \end{equation*}
    \item {$\mathcolor{officegreen}{k = \Om(n)}$}: For a fixed $\Bmq$, if $k$ is allowed to grow with dimension $n$, then choosing $k$ to satisfy
    \begin{equation*}
        k \geq 2\mathcolor{darkblue}{n} - (2^{\Bmq} - 1)\Rm^{-1}(\pi\epsilon)^{1/2}(\mathcolor{darkblue}{md}\lambda_{\rm max}^{-1/2})\sqrt{\mathcolor{darkblue}{n}} \quad \text{ensures} \quad \Ebb\fronormsq{(\Qv + \Lv\Rv - \Wv)\Xv^\top} \leq 2\epsilon. 
    \end{equation*}
\end{enumerate}
This implies that the upper bound can be made arbitrarily small by either $\rm (i)$ increasing the bit-budget of the backbone, i.e., $\Bmq$, for a fixed rank $k$, or $\rm (ii)$ increasing the rank $k$ for a fixed $\Bmq$, for example, $\Bmq = 2$.
Alternatively stated, this provides a tunable knob for controlling the error by trading off the allocated bit-budget between the backbone $\Qv$ and the low-rank factors $\Lv, \Rv$. 

\subsection{Analysis Outline}
\label{subsec:analysis_outline}

In this section, a brief proof sketch is presented, highlighting the major challenges in the proof and how they are addressed.
For analysis, $\Qv$ is assumed to be updated prior to $\Lv, \Rv$ in Alg. \ref{alg:caldera}.
However, in practice, the update order is inconsequential, and can be swapped, depending on whichever yields a smaller error.
The complete derivation of the approximation error is provided in App. \ref{app:derivations-data-aware-lplr}.
A key ingredient of the proof is the solution of the rank-constrained regression problem, which is defined as,
\begin{equation}
\label{eq:rank-constrained-regression}
    \min_{{\rm rank}(\Zv) \leq k} \fronormsq{\Xv\Zv - \Yv}.
\end{equation}
Although this problem is non-convex, it can be solved to global optimality via two SVDs \cite{shuo_rank_constrained_regression}.
The following lemma characterizes the solution to the optimization problem in \eqref{eq:rank-constrained-regression}.

\begin{lemma}
    \label{lem:rank-constrained-regression-problem-main}
    Given $\Yv \in \Real^{m \times n}$, and full rank $\Xv \in \Real^{m \times d}$, where $m \leq d$.
    Let $\Xv = \Uv\Sigmatv\Vv^\top$ and $\Uv^\top\Yv = \Uvtk\Sigmavtk\Vvtk^\top$ denote full SVDs of $\Xv$ and $\Uv^\top\Yv$.
    Then, for $k \leq m$, the solution of \eqref{eq:rank-constrained-regression} is given by
    \begin{equation*}
        \Zv_* \triangleq \argminimize_{{\rm rank}(\Zv) \leq k}\fronormsq{\Xv\Zv - \Yv} = \Paren{\Vv\Iv_m\Sigmav^{-1}\Uvtk\Iv_k}\Paren{\Iv_k^\top\Sigmavtk\Vvtk^\top},
    \end{equation*}
    where $\Sigmav := \Sigmatv\Iv_m \in \Real^{m \times m}$ is a diagonal matrix consisting of the non-zero singular values of $\Xv$.
    Moreover, denoting the non-zero singular values of $\Yv$ as $\{\sigma_i(\Yv)\}_{i=1}^{m}$, the optimal value of \eqref{eq:rank-constrained-regression-main} is
    \begin{equation}
    \label{eq:rank-constrained-regression-solution-trailing-SVs}
        \min_{{\rm rank}(\Zv) \leq k}\fronormsq{\Xv\Zv - \Yv} = \fronormsq{\Xv\Zv_* - \Yv} = \sum_{i = k+1}^{m}\sigma_i^2\Paren{\Yv}.
    \end{equation}
\end{lemma}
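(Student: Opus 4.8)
The plan is to reduce \eqref{eq:rank-constrained-regression} to a standard Eckart--Young--Mirsky best rank-$k$ approximation problem, exploiting (i) orthogonal invariance of the Frobenius norm and (ii) the fact that ``full rank with $m \leq d$'' means $\Xv$ has full \emph{row} rank $m$, so that $\Sigmatv$ and hence $\Sigmatv\Vv^\top$ has full row rank with right inverse $\Vv\Iv_m\Sigmav^{-1}$ (indeed $\Sigmatv\Vv^\top\cdot\Vv\Iv_m\Sigmav^{-1} = \Sigmatv\Iv_m\Sigmav^{-1} = \Sigmav\Sigmav^{-1} = \Iv$).

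\emph{Step 1 (rotate).} Since $\Uv$ is orthogonal and $\Uv^\top\Xv = \Sigmatv\Vv^\top$, for every $\Zv$ we have $\fronormsq{\Xv\Zv - \Yv} = \fronormsq{\Uv^\top\Xv\Zv - \Uv^\top\Yv} = \fronormsq{\Sigmatv\Vv^\top\Zv - \Uv^\top\Yv}$. Introduce the new variable $\Wv \triangleq \Sigmatv\Vv^\top\Zv \in \Real^{m\times n}$, so the objective becomes $\fronormsq{\Wv - \Uv^\top\Yv}$.

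\emph{Step 2 (reduce the rank constraint).} I would next argue that the change of variables carries the feasible set exactly onto $\{\Wv\in\Real^{m\times n} : {\rm rank}(\Wv)\le k\}$, i.e. $\{\Sigmatv\Vv^\top\Zv : {\rm rank}(\Zv)\le k\} = \{\Wv : {\rm rank}(\Wv)\le k\}$. The inclusion ``$\subseteq$'' is immediate from ${\rm rank}(\Sigmatv\Vv^\top\Zv)\le{\rm rank}(\Zv)$. For ``$\supseteq$'', given $\Wv$ with ${\rm rank}(\Wv)\le k$ put $\Zv = \Vv\Iv_m\Sigmav^{-1}\Wv$; then $\Sigmatv\Vv^\top\Zv = \Wv$ by the right-inverse identity above, and since $\Vv\Iv_m\Sigmav^{-1}$ has full column rank $m$, ${\rm rank}(\Zv) = {\rm rank}(\Wv)\le k$. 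Hence $\min_{{\rm rank}(\Zv)\le k}\fronormsq{\Xv\Zv - \Yv} = \min_{{\rm rank}(\Wv)\le k}\fronormsq{\Wv - \Uv^\top\Yv}$.

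\emph{Step 3 (Eckart--Young and back-substitution).} Applying Eckart--Young--Mirsky with $\Uv^\top\Yv = \Uvtk\Sigmavtk\Vvtk^\top$, the minimizer of the right-hand problem is the rank-$k$ truncation $\Wv_* = \Uvtk\Iv_k\Iv_k^\top\Sigmavtk\Vvtk^\top$, and the optimal value equals $\sum_{i>k}\sigma_i^2(\Uv^\top\Yv) = \sum_{i>k}\sigma_i^2(\Yv)$, the last step by orthogonal invariance of singular values; this gives \eqref{eq:rank-constrained-regression-solution-trailing-SVs}. Pulling $\Wv_*$ back through Step 1 produces the claimed feasible optimizer $\Zv_* = \Vv\Iv_m\Sigmav^{-1}\Wv_* = \Paren{\Vv\Iv_m\Sigmav^{-1}\Uvtk\Iv_k}\Paren{\Iv_k^\top\Sigmavtk\Vvtk^\top}$, with ${\rm rank}(\Zv_*) = {\rm rank}(\Wv_*)\le k$.

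I expect the main obstacle to be Step 2: making precise that the map $\Zv\mapsto\Sigmatv\Vv^\top\Zv$ sends the rank-$\le k$ set for $\Zv\in\Real^{d\times n}$ \emph{exactly} onto the rank-$\le k$ set for $\Wv\in\Real^{m\times n}$ — neither larger nor smaller — which is precisely where full row rank of $\Xv$ (right-invertibility of $\Sigmatv\Vv^\top$) and invariance of rank under multiplication by full-column-rank matrices are used. A minor loose end is non-uniqueness of the optimizer when $\Uv^\top\Yv$ has repeated singular values at the threshold or when ${\rm rank}(\Yv)<k$; the value formula is unaffected (with the convention $\sigma_i(\Yv)\triangleq 0$ for $i>{\rm rank}(\Yv)$), and one need only note that $\Zv_*$ as written is \emph{one} valid optimizer.
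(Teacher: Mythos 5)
Your proof is correct and takes essentially the same route as the paper: rotate by $\Uv$, use the full row rank of $\Xv$ (via the right inverse $\Vv\Iv_m\Sigmav^{-1}$) to reduce the rank-constrained least-squares problem to a plain best rank-$k$ approximation, apply Eckart--Young--Mirsky, and back-substitute. Your Step~2, phrased as an exact surjection onto the rank-$\le k$ set, is a slightly cleaner packaging of the paper's argument, which splits $\Zv' = \Vv^\top\Zv$ into a relevant block $\Zv''$ and an irrelevant block $\Zvo''$ set to zero, but the content is identical.
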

The complete lemma (with the case $m > d$), and the derivation, are provided in App. \ref{app:rank-constrained-regression}.
Using lemma \ref{lem:rank-constrained-regression-problem-main}, the approximation error of \lplrfactorize is analyzed in App. \ref{subapp:proof-approx-error-upper-bound-data-aware-lplr}.
Specifically, lemma \ref{lem:approximation_error_data-aware-LPLRFactorize} shows that for any input matrix $\Av$, Alg. \ref{alg:data-aware-LPLR-factorization-submodule} with suitably chosen $\Bml$ and $\Bmr$, ensures that $\Ebb\fronormsq{(\Lv\Rv -\Av)\Xv^\top}$, as in \eqref{eq:data-aware-lplr-LeastSquaresObjective}, can be upper bounded by twice the sum of squared trailing singular values, (ref. \eqref{eq:rank-constrained-regression-solution-trailing-SVs}).
While proving lemma \ref{lem:approximation_error_data-aware-LPLRFactorize}, it is assumed that if $\Qml$ or $\Qmr$ gets saturated, a trivial output of $\Lv = \mathbf{0}, \Rv = \mathbf{0}$ is returned.
Therefore, lemmas \ref{lem:maxnorm_right_quantizer} and \ref{lem:maxnorm_left_quantizer} specify choosing the dynamic ranges $\Rmr$ and $\Rml$ to be sufficiently high so that saturation happens with a very low probability.
The proof of Thm. \ref{thm:approximation-error-Q-plus-LR} is completed by using the LDL decomposition of $m\Hv$ as proposed in \cite{chee2023quip}, along with an application of Marchenko-Pastur approximation to bound the expected eigenvalues of the quantization error, i.e., $\Ebb\lambda_i\Paren{\etav\etav^\top}$, yielding the final inequality.

\section{Numerical Simulations}
\label{sec:numerical-simulations}

The efficacy of \caldera is assessed by using it to compress four popular open source LLMs from Meta AI, namely, LLaMa-2 7B,  LLaMa-2 13B, LLaMa-2 70B \cite{touvron2023llama} and LLaMa-3 8B \cite{metaai2024llama3}.
The framework is built in PyTorch on top of the QuIP\# \cite{tseng2024quip} and LoftQ \cite{li2023loftq}, and is available at \href{https://github.com/pilancilab/caldera}{https://github.com/pilancilab/caldera}.

{\bf Baselines}.
The full-rank matrix $\Qv$, also referred to as the backbone, is quantized to 2-bits using the LDLQ procedure from QuIP \cite{chee2023quip, tseng2024quip}, employing an E8 lattice quantizer \cite{viazovska2017e8lattice}.
For \caldera, which allows even the low-rank factors, $\Lv$ and $\Rv$, to be represented in low-precision, the quantization is also performed with an E8 lattice.
Prior to running Alg. \ref{alg:caldera}, a randomized Hadamard transform (RHT) is applied to the left and the right of the input weight matrix, as the incoherence pre-processing step, to equalize the magnitude of the entries making them more robust to quantization.
In other words, \caldera decomposition is performed on $\widetilde{\Wv} \triangleq \Hvl^\top \Wv \Hvr$, where $\Hvl$ and $\Hvr$ are Hadamard matrices, right-multiplied by a diagonal matrix with i.i.id. $\{\pm 1\}$ entries.
In addition, the Hessian matrix obtained from the calibration data is substituted by $\widetilde{\Hv} \triangleq \Hvr^\top \Hv \Hvr$.
As described in \cite{chee2023quip}, this improves the quantization error incurred by LDLQ.
Further details are provided in App. \ref{subapp:caldera_quant_parameters}.

{\bf Metrics}.
The performance of \caldera is evaluated using perplexity on the test splits of the Wikitext2 \cite{merity2016pointer} and C4 \cite{dodge2021documenting} datasets, as well as task-specific goodness-of-fit metrics such as zero-shot accuracy for sequence classification.
Specifically, zero-shot accuracy was measured on the Winogrande \cite{keisuke2019winogrande}, RTE \cite{bentivogli2009rte,wang2019glue}, PiQA \cite{bisk2020piqa}, ARC-Easy, and ARC-Challenge \cite{allenai2018arc} tasks.
App. \ref{subapp:zero-shot-details} provides more details regarding these benchmarks.
Perplexity was measured using a sequence length equal to the model's maximum context length, i.e., 4096 for LLaMa-2, and 8192 for LLaMa-3.
Zero-shot experiments were performed using EleutherAI's Language Model Evaluation Harness \cite{gao2023lmeval}.

\subsection{Zero-shot Results}
\label{subsec:zero-shot-results}

Tables \ref{tab:llama-2-zeroshot} and \ref{tab:llama-3-zeroshot} report the perplexities and accuracies for \caldera with varying target rank $(k)$ of $\Lv$ and $\Rv$.
A smaller value is better for perplexity, which is defined as the $\exp(\cdot)$ of the training objective, while zero-shot accuracies are reported as percentages.
Per-parameter bit budgets range from $2.1$ (e.g., rank-$64$ factors in $4$-bit precision) to $2.4$ bits (e.g., rank-$64$ factors in half precision or rank-$256$ factors in 4-bit precision).
For comparison, the $\Qv + \Lv \Rv$ decomposition of weight matrices found in the QuIP\# codebase was performed on each model.
For the sake of direct comparison, fine-tuning of the diagonal matrices in RHT was omitted.
As QuIP\# does not support quantized factors, $\Lv$ and $\Rv$ are rank-64 in order to ensure that the per-parameter bit-budget remains in the $2-2.4$ range.
As another baseline comparison, each model is quantized using QuIP\# without any low-rank factors.
Results for the unquantized models are also provided.

\begingroup
\begin{table}[htbp]
    \setlength{\tabcolsep}{4pt}
    \centering
    \caption{\small Zero-shot perplexities (denoted by $\downarrow$) and accuracies ($\uparrow$) for LLaMa-$2$. $\Bmq = 2$ bits throughout.}
    \label{tab:llama-2-zeroshot}
    {\smaller
    \begin{tabular}{cccc|cc|ccccccc}
        \toprule
         Method & Rank & $\Bml(=\Bmr)$ & Avg Bits & Wiki2 $\downarrow$ & C4 $\downarrow$ & Wino $\uparrow$ & RTE $\uparrow$ & PiQA $\uparrow$ & ArcE $\uparrow$ & ArcC $\uparrow$\\
         \midrule
        \rowcolor{gray!25}
        \caldera (7B) & 64 & 16 & 2.4 & 7.36 & 9.47 & 64.6 & 66.4 & 73.7 & 60.8 & 31.7 \\
        \rowcolor{gray!25}
        \caldera (7B) & 64 & 4 & 2.1 & 7.37 & 9.74 & 63.7 & 62.1 & 72.3 & 60.9 & 31.7 \\
        \rowcolor{gray!25}
        \caldera (7B) & 128 & 4 & 2.2 & 6.76 & 8.83 & 63.8 & 59.9 & 75.1 & \textbf{65.1} & \textbf{34.6} \\
        \rowcolor{gray!25}
        \caldera (7B) & 256 & 4 & 2.4 & \textbf{6.19} & \textbf{8.14} & \textbf{66.0} & 60.6 & \textbf{75.6} & 63.6 & 34.0 \\
        QuIP\# \smaller{(7B, No FT)} & 64 & 16 & 2.4 & 7.73 & 10.0 & 63.1 & \textbf{66.8} & 71.7 & 63.2 & 31.7 \\
        QuIP\# \smaller{(7B, No FT)} & 0 & --- & 2 & 8.23 & 10.8 & 61.7 & 57.8 & 69.6 & 61.2 & 29.9 \\
        \midrule
        \rowcolor{gray!25}
        \caldera(13B) & 64 & 4 & 2.08 & 6.04 & 7.98 & 66.9 & 61.0 & 76.0 & 69.5 & 37.2 \\
        \rowcolor{gray!25}
        \caldera(13B) & 128 & 4 & 2.16 & 5.72 & 7.66 & \textbf{67.9} & 58.5 & 76.0 & 68.5 & 38.7 \\
        \rowcolor{gray!25}
        \caldera(13B) & 256 & 4 & 2.32 & \textbf{5.41} & \textbf{7.21} & 66.9 & \textbf{62.1} & \textbf{76.2} & \textbf{70.3} & \textbf{40.4} \\
        QuIP\# \smaller{(13B, No FT)} & 0 & --- & 2 & 6.06 & 8.07 & 63.6 & 54.5 & 74.2 & 68.7 & 36.2 \\
        \midrule
        \rowcolor{gray!25}
        \caldera (70B) & 128 & 4  & 2.1 & 4.11 & 5.95 & 75.5 & 69.3 & \textbf{79.8} & 76.9 & 47.7 \\
        \rowcolor{gray!25}
        \caldera (70B) & 256 & 4   & 2.2   & \textbf{3.98} & \textbf{5.76} &  \textbf{77.6} & \textbf{71.5} & \textbf{79.8} & \textbf{79.5} & 47.4 \\
        QuIP\# \smaller{(70B, No FT)} & 0 & --- & 2 & 4.16 & 6.01 & 74.2 & 70.0 & 78.8 & 77.9 & \textbf{48.6}\\
        \midrule
        Unquantized (7B) & 0 & --- & 16 & 5.12 & 6.63 & 67.3 & 63.2 & 78.5 & 69.3 & 40.0 \\
        Unquantized (13B) & 0 & --- & 16 & 4.57	& 6.05 & 69.5 & 61.7 & 78.8 & 73.2 & 45.6 \\
        Unquantized (70B) & 0 & --- & 16 & 3.12 & 4.97 & 77.0 & 67.9 & 81.1 & 77.7 & 51.1 \\
         \bottomrule
    \end{tabular}
    }
\end{table}
\endgroup
\begingroup
\begin{table}[htbp]
    \setlength{\tabcolsep}{4pt}
    \centering
    \caption{\small Zero-shot perplexities (denoted by $\downarrow$) and accuracies ($\uparrow$) for LLaMa-$3$ $8$B. $\Bmq = 2$ bits throughout.}
    \label{tab:llama-3-zeroshot}
    {\smaller
    \begin{tabular}{cccc|cc|ccccccc}
        \toprule
         Method & Rank & $\Bml(=\Bmr)$ & Avg Bits & Wiki2 $\downarrow$ & C4 $\downarrow$ & Wino $\uparrow$ & RTE $\uparrow$ & PiQA $\uparrow$ & ArcE $\uparrow$ & ArcC $\uparrow$\\
         \midrule
         \rowcolor{gray!25}
         \caldera & 64 & 16 & 2.4 & 9.22 & 10.5 & 68.9 & 63.9 & 72.9 & 69.9 & 36.5 \\
         \rowcolor{gray!25}
         \caldera & 64 & 4 & 2.1 & 10.6 & 11.8 & 66.9 & 58.5 & 71.8 & 68.2 & 34.3\\
         \rowcolor{gray!25}
         \caldera & 128 & 4 & 2.2 & 9.21 & 10.5 & 67.6 & \textbf{69.7} & 74.4 & 71.8 & 36.3 \\
         \rowcolor{gray!25}
         \caldera & 256 & 4 & 2.4 & \textbf{8.22} & \textbf{9.56} & \textbf{69.7} & 65.0 & \textbf{75.1} & \textbf{73.2} & \textbf{40.0} \\
        \midrule
        QuIP\# \smaller{(No FT)} & 64 & 16  & 2.4 & 10.9 & 11.8 & 66.5 & 57.0 & 69.6 & 63.8 & 31.0 \\
        QuIP\# \smaller{(No FT)} & 0 & --- & 2 & 13.8 & 15.6 & 63.2 & 52.7 & 67.6 & 57.6 & 28.2 \\
        \midrule
        Unquantized & 0 & --- & 16 & 5.54 & 7.01 & 73.5  & 68.6 & 79.7 & 80.1 & 50.2 \\
         \bottomrule
    \end{tabular}
    }
\end{table}
\endgroup
For all models, the rank-$256$ \caldera decomposition with $4$-bit factors had the lowest perplexity and generally had the highest accuracies.
As \caldera supports quantizing low-rank factors with minimal performance loss, more singular components can be captured compared to using half-precision factors while employing the same number of bits. 
Consequently, the low-rank factors can regain the performance that was compromised when the backbone $\Qv$ was quantized to $2$ bits.
Since zero-shot experiments have some inherent randomness and low-rank regularization effects \cite{sharma2023truth}, the zero-shot accuracies reported here are not as directly indicative of quantization performance as the perplexity results.
In addition, \S \ref{subsec:fine-tuning-results}, demonstrates that degradation in zero-shot accuracy can be recovered via LoRA fine-tuning.
It is worthwhile to note these results substantiate the claims of \cite{huang2024quantllama3}, which report that low-bit quantization of LLaMa-$3$ $8$B, significantly deteriorates model performance across various post-training quantization techniques, more so than with the LLaMa-$2$ series.

\subsection{Fine-tuning of Randomized Hadamard Transform (RHT) Parameters}
\label{subsec:fine-tuning-RHT-parameters}

As \caldera presents a general optimization framework for matrix decompositions of the form $\Qv + \Lv \Rv$, it can easily be extended with additional heuristics to improve performance.
This section serves as a proof of concept, by examining one such heuristic: Fine-tuning of randomized Hadamard transform parameters.
This technique, proposed in QuIP\# \cite{tseng2024quip}, involves fine-tuning the diagonal Rademacher matrices with $\pm 1$ entries in the RHT to minimize the cross-entropy loss between the output of the original and quantized models on the calibration dataset.
Subsequently, RHT fine-tuning is performed on the models quantized using \caldera in \S \ref{subsec:zero-shot-results}.\footnote{\label{foot:RHT-FT} Since this is primarily a proof of concept, the fine-tuning is not as extensive as in \cite{tseng2024quip} due to computational limits.
While \cite{tseng2024quip} performs fine-tuning layer-by-layer prior to doing so on the end-to-end objective, experiments in this section only include the end-to-end fine-tuning.
Furthermore, the fine-tuning is performed with a sequence length of $512$, as opposed to $4096$ as in \cite{tseng2024quip}.
As such, the QuIP\# numbers reported here are different from \cite{tseng2024quip}.}
Details on specific fine-tuning hyperparameters can be found in App. \ref{subapp:finetuning_parameter_details}.

\begingroup
\begin{table}[htbp]
    \setlength{\tabcolsep}{4pt}
    \centering
    \caption{\small Zero-shot perplexities and accuracies for LLaMa-2 7B, with end-to-end fine-tuning of randomized Hadamard transform parameters.
    $\Bmq = 2$ bits throughout.
    \textsuperscript{*}See Footnote \ref{foot:RHT-FT}. }
    \label{tab:llama-2-zeroshot-RHT-FT} 
    {\smaller
    \begin{tabular}{cccc|cc|ccccccc}
        \toprule
         Method & Rank & $\Bml (=\Bmr)$ & Avg Bits & Wiki2 $\downarrow$ & C4 $\downarrow$ & Wino $\uparrow$ & RTE $\uparrow$ & PiQA $\uparrow$ & ArcE $\uparrow$ & ArcC $\uparrow$\\
         \midrule
        \rowcolor{gray!25}
        \caldera & 64 & 16 & 2.4 & 6.22 & 8.23 & 64.2 & 63.2 & 76.1 & 63.4 & 34.7 \\
        \rowcolor{gray!25}
        \caldera & 64 & 4 & 2.1 & 6.30 & 8.32 & 64.6 & 65.7 & 75.4 & 63.3 & 35.4 \\
        \rowcolor{gray!25}
        \caldera & 128 & 4 & 2.2 & 6.09 & 8.06 & 65.1 & 61.0 & \textbf{76.5} & \textbf{65.1} & 35.6 \\
        \rowcolor{gray!25}
        \caldera & 256 & 4 & 2.4 & \textbf{5.84} & \textbf{7.75} & \textbf{65.7} & 60.6 & \textbf{76.5} & 64.6 & \textbf{35.9} \\
         \midrule
         QuIP\#\textsuperscript{*} & 64 & 16 & 2.4 & 6.32 & 8.31 & 64.9 & \textbf{66.4} & 75.0 & \textbf{65.2} & 34.5 \\
        QuIP\#\textsuperscript{*} & 0 & --- & 2 & 6.58 & 8.62 & 64.4 & 53.4 & 75.0 & 64.8 & 34.0\\
         \bottomrule
    \end{tabular}
    }
\end{table}
\endgroup

\begingroup
\begin{table}[htbp]
    \setlength{\tabcolsep}{4pt}
    \centering
    \caption{\small Zero-shot perplexities and accuracies for LLaMa-3 8B, with end-to-end fine-tuning of randomized Hadamard transform parameters.\
    $\Bmq = 2$ bits throughout.
    \textsuperscript{*}See Footnote \ref{foot:RHT-FT}. }
    \label{tab:llama-3-zeroshot-RHT-FT} 
    {\smaller
    \begin{tabular}{cccc|cc|ccccccc}
        \toprule
         Method & Rank & $\Bml (=\Bmr)$ & Avg Bits & Wiki2 $\downarrow$ & C4 $\downarrow$ & Wino $\uparrow$ & RTE $\uparrow$ & PiQA $\uparrow$ & ArcE $\uparrow$ & ArcC $\uparrow$\\
         \midrule
        \rowcolor{gray!25}
        \caldera & 64 & 16 & 2.4 & 7.63 & 8.9 & \textbf{70.3} & \textbf{70.8} & 75.4 & 72.4 & 39.0 \\
        \rowcolor{gray!25}
        \caldera & 64 & 4 & 2.1 & 8.06 & 9.34 & 69.5 & 64.3 & 76.0 & 71.5 & 40.0 \\
        \rowcolor{gray!25}
        \caldera & 128 & 4 & 2.2 & 7.76 & 9.02 & 69.4 & 63.9 & 76.0 & \textbf{73.7} & 41.8 \\
        \rowcolor{gray!25}
        \caldera & 256 & 4 & 2.4 & \textbf{7.34} & \textbf{8.68} & \textbf{70.3} & 70.4 & \textbf{76.5} & \textbf{73.6} & \textbf{42.3} \\
         \midrule
         QuIP\#\textsuperscript{*} & 64 & 16 & 2.4 & 7.92 & 9.15 & 68.4 & 58.1 & 74.9 & 72.3 & 40.4 \\
        QuIP\#\textsuperscript{*} & 0 & --- & 2 & 8.44 & 9.75 & 67.5 & 57.8 & 72.9 & 67.6 & 37.3 \\
         \bottomrule
    \end{tabular}
    }
\end{table}
\endgroup

Perplexity and zero-shot results in Tables \ref{tab:llama-2-zeroshot-RHT-FT} and \ref{tab:llama-3-zeroshot-RHT-FT} match the trends in \S \ref{subsec:zero-shot-results}, i.e., \caldera with rank-$256$ factors typically performs best, with the exception of RTE.
In addition, perplexities are substantially lower than without the fine-tuning of randomized Hadamard transform parameters.

\subsection{Low Rank Adaptation (LoRA) Fine-tuning Results}
\label{subsec:fine-tuning-results}

In addition to RHT FT as described above, once the $\Qv + \Lv\Rv$ decomposition with target rank $k$ is obtained, and $k$ takes values $64, 128$ and $256$, fine-tuning the top $r\; (\leq k)$ singular components on a specific downstream datasets can recover the performance lost due to quantization.
We consider three such tasks -- $\rm (i)$ language modeling on Wikitext (Wiki2), $\rm (ii)$ recognizing textual entailment (RTE), and $\rm (iii)$ commonsense reasoning (WinoGrande).
Throughout all experiments in Table \ref{tab:llama-finetune}, $r = 64$ is chosen and those singular components are fine-tuned to $16$-bit precision, i.e., $\rm BF 16$ format.
The tasks $\rm (ii)$ and $\rm (iii)$ are sequence classification tasks, and the pre-trained LLaMa model is augmented with a linear classification head, which is fine-tuned along with the low-rank factors \cite{raschka_2024_blog}.
In other words, the approximation is written as $\Wv \approx \Qv + \Lv_1\Rv_1 + \Lv_2\Rv_2$, where $\Lv_1 \in \Real^{n \times r}$, $\Lv_2 \in \Real^{n \times (k-r)}$, $\Rv_1 \in \Real^{r \times d}$, $\Rv_2 \in \Real^{(k-r) \times d}$, $\Lv = [\Lv_1 \mid \Lv_2]$, $\Rv^\top = [\Rv_1^\top \mid \Rv_2^\top]$.
The value of $r$ is set to $64$ and $\Lv_2, \Rv_2$ are fined-tuned to $\Lv_{\rm bf16}, \Rv_{\rm bf16}$ using low-rank adaptation similar to \cite{hu2022lora, li2023loftq, guo2023lqlora}.
Doing this significantly on a small task-specific dataset like WikiText2, RTE, or Winogrande, can  noticeably boost the zero-shot accuracy, as can be seen from Table \ref{tab:llama-finetune}.\footnote{There is an element of stochasticity in fine-tuning, so the final accuracies listed in Table \ref{tab:llama-finetune} for different \caldera parameters are not directly indicative of the smaller error (i.e., $\fronormsqlh{(\Qv + \Lv\Rv - \Wv)\Xv^\top}$) of the initialization. Rather, they show that low-rank adaptation can recover accuracy degradation from quantization.}

\begingroup
\begin{table}[htbp]
    \setlength{\tabcolsep}{3pt}
    \centering
    \caption{\small \caldera fine-tuning results for LLaMa-2 7B and LLaMa-3 8B. $\Bml$, $\Bmr$ are the bit-budgets of $\Lv$ and $\Rv$ for the low-rank initialization. Rank-$64$ fine-tuned factors are represented in $\rm BF 16$ precision.}\label{tab:llama-finetune}
    {\smaller
    \begin{tabular}{cccccc|ccc|ccc}
        \toprule
        \multicolumn{6}{c}{} & \multicolumn{3}{c}{LLaMa-2 7B} & \multicolumn{3}{c}{LLaMa-3 8B} \\
        \midrule
        Method & Rank & $\Bmq$ & $\Bml(=\Bmr)$ & RHT FT & Avg Bits & Wiki2 $\downarrow$ & RTE $\uparrow$ & Wino $\uparrow$ & Wiki2 $\downarrow$ & RTE $\uparrow$ & Wino $\uparrow$ \\
         \midrule
         \caldera & 64 & 2 & 16 & No & 2.4 & 6.06 & 82.31 & 84.06 & 7.91 & 84.48 & 85.56\\
         \rowcolor{gray!25}
         \caldera & 64 & 2 & 16 & Yes & 2.4 & 5.89 & 85.19 & 85.32 & 7.88 & 86.28 & 88.16 \\
         \midrule
         \caldera & 64 & 2 & 4 & No & 2.4 & 6.01 & 81.23 & 84.06 & 8.33 & 85.56 & 88.40 \\
         \rowcolor{gray!25}
         \caldera & 64 & 2 & 4 & Yes & 2.4 & 5.91 & 85.56 & 83.42 & 7.96 & \textbf{87.00} & 88.40 \\
         \midrule
         \caldera & 128 & 2 & 4 & No & 2.5 & 5.84 & 83.75 & 85.32 & 7.84 & 84.84 & 88.63 \\
         \rowcolor{gray!25}
         \caldera & 128 & 2 & 4 & Yes & 2.5 & 5.77 & 84.12 & 85.00 & 7.68 & 86.64 & 88.00 \\
         \midrule
         \caldera & 256 & 2 & 4 & No & 2.7 & 5.61 & 83.75 & \textbf{85.4} & \textbf{7.44} & 86.28 & 88.08 \\
         \rowcolor{gray!25}
         \caldera & 256 & 2 & 4 & Yes & 2.7 & \textbf{5.55} & \textbf{86.28} & 84.93 & \textbf{7.44} & 85.20 &  \textbf{89.19}\\
         \midrule
         LoftQ & 64 & 2 & 16 & --- & 2.4 & 7.85 & --- & --- & --- & --- & --- \\
         LoftQ & 64 & 2.5 & 16 & --- & 2.9 & 5.78 & --- & --- & --- & --- & --- \\
         LQ-LoRA & 64 & 2.75 & 8 & --- & 2.95 & 5.67 & --- & 72.4 & --- & --- & --- \\
         \bottomrule
    \end{tabular}
    }  
\end{table}
\endgroup

Experimental details can be found in App.  \ref{subapp:finetuning_parameter_details}.
For each dataset, ten checkpoints are saved during the course of fine-tuning, and the best test performance is reported in Table \ref{tab:llama-finetune}.
For datasets where test labels are not available, evaluation performance is reported instead.

For comparison, results from the LoftQ \cite{li2023loftq} and LQ-LoRA \cite{guo2023lqlora} papers are also reported, where available.
As these papers were published before the release of LLaMa-3, only LLaMa-2 results are available.\footnote{Some LLaMa-3 results for LoftQ are present on its GitHub repository, but none for the tasks evaluated here.}
In each case, \caldera achieves better performance at a lower bit budget.

\subsection{Autoregressive Generation Throughput}
\label{subsec:generation_throughput}

The low-rank $(\Lv\Rv)$ component in \caldera can recover some of the accuracy lost due to the aggressive $2$-bit quantization of $\Qv$.
However, \caldera also needs to dequantize and multiply the low-rank factors, which results in a slight (albeit, acceptable) throughput degradation compared to QuIP\# (shown in Table \ref{tab:llama-2-throughput}).
Nevertheless, \caldera's throughput is significantly higher than that of the unquantized model.
This is because compressing weight matrices results in a smaller volume of data transfer from and to the GPU's SRAM, speeding up forward passes.
It is worthwhile to note that Llama-2 70B runs into out-of-memory (OOM) as an A10G GPU only has $24$ GiB of VRAM, which is not enough for 70B parameters in FP16 format (which approximately requires 140 GiB).

\begin{wrapfigure}[18]{r}{0.69\textwidth}
    \setlength{\tabcolsep}{4pt}
    \centering
    \captionsetup{type=table}
    \caption{\small Throughputs for meta-llama/Llama-2-\{7,70\}b-hf on an NVIDIA A10G GPU for a batch size and sequence length of $1$ ($\Bmq = 2$ for all rows)}
    \label{tab:llama-2-throughput}
    {\small
    \begin{tabular}{cccc}
        \toprule
        Method & Rank & $\Bml(=\Bmr)$ & Throughput (tok/s) \\
        \midrule
        Uncompressed (7B, FP16) & --- & --- & 31.75 \\
        \midrule
        \rowcolor{gray!25}
        \caldera (7B) & 64 & 16 & 61.68 \\
        \rowcolor{gray!25}
        \caldera (7B) & 64 & 4 & 46.29 \\
        \rowcolor{gray!25}
        \caldera (7B) & 128 & 4 & 46.19 \\
        \rowcolor{gray!25}
        \caldera (7B) & 256 & 4 & 45.89 \\
        QuIP\# (7B) & 0 & --- & 87.74 \\
        \midrule
        Uncompressed (70B, FP16) & -- & -- & OOM \\
        \midrule
        \rowcolor{gray!25}
        \caldera (70B) & 128 & 4 & 5.33 \\
        \rowcolor{gray!25}
        \caldera (70B) & 256 & 4 & 4.66 \\
        QuIP\# (70B) & -- & -- & 18.18 \\
        \bottomrule
    \end{tabular}
    }
\end{wrapfigure}

Notably, the throughput is higher when the $\Lv\Rv$ factors are in $16$-bit compared to when they are in $4$-bit. This is because \caldera used QuIP\#'s lattice dequantizers for the low-rank factors as well, adding to the compute overhead.
Moreover, QuIP\# also used fused kernels, and \caldera's throughput can be improved by leveraging such optimizations. 
Since this work is primarily motivated with the goal of closing the gap with respect to uncompressed models in the $2$ to $2.5$ bits per parameter regime, throughput improvement using custom kernels, paged attention, etc., is left for future work.
We discuss the broader impacts of our work along with limitations in App. \ref{app:limitations-and-future-work}.

\section{Conclusions}
\label{sec:conclusions}

In this work, the problem of obtaining a low-precision and low-rank decomposition of an LLM weight matrix was considered.
A $\Qv + \Lv\Rv$ decomposition efficiently captures the high singular components of the weight matrix with sufficient fidelity, while coarsely  compressing the less significant moderate-to-low singular components.
An optimization-theoretically motivated algorithm was proposed to obtain this decomposition, which iteratively optimized the quantized backbone $\Qv$ and the low-rank factors $\Lv, \Rv$.
Additionally, it was shown that $\Lv$ and $\Rv$ can be efficiently fine-tuned using low-rank adaptation to boost the zero-shot performance of the quantized model.
By utilizing a rank-constrained regression framework, an upper bound was established on the approximation error of the algorithm, and it was shown that this upper bound can be significantly smaller than prior bounds in the literature. 
Finally, the proposed method was empirically evaluated by compressing the LlaMA family of LLMs in the challenging sub-$2.5$ bits per parameter regime.
The proposed approach can also be used to complement existing compression strategies; thereby making it efficient to distribute compressed LLMs and deploy them on regular consumer hardware, making them more accessible to researchers.

\section*{Acknowledgements}
This work was supported in part by the National Science Foundation (NSF) under Grant DMS-2134248; in part by the NSF CAREER Award under Grant CCF-2236829; in part by the U.S. Army Research Office Early Career Award under Grant W911NF-21-1-0242; and in part by the Office of Naval Research under Grant N00014-24-1-2164.

\bibliography{refs}

\clearpage
\appendix

\tableofcontents

\medskip


\clearpage
\appendix

\section{Notations}
\label{app:notations}

This section begins by outlining key notations used in both linear algebra and probability theory.
Boldface uppercase and lowercase letters, such as $\Av$ and $\av$, represent matrices and vectors respectively.
$\Iv$ denotes the identity matrix, and its dimension is assumed to be imminent from the context.
The first $k$ columns of the identity matrix $\Iv$ as $\Iv_k$, and let $\Ivo_k$ be the submatrix formed by the last $k$ columns.
The singular values of $\Av$ are denoted by $\sigma_{\rm max}(\Av) = \sigma_1 \geq \sigma_2 \geq \ldots \geq \sigma_r = \sigma_{\rm min}(\Av)$, where $r = {\rm rank}(\Av)$.
Similarly, the eigenvalues are denoted as $\lambda_1(\Av), \ldots, \lambda_r(\Av)$.
The max-norm of $\Av$ is defined as $\maxnorm{A} = {\rm max}_{i,j}|A_{ij}|$, the spectral norm of $\Av$ is defined as $\specnorm{\Av} = \sup_{\norm{\xv} = 1}\norm{\Av\xv} = \sigma_{\rm max}(\Av)$, and the Frobenius norm is $\fronorm{\Av} = \Paren{\sum_{i,j}A_{ij}^2}^{1/2} = \Tr{\Av^{\top}\Av} = \Paren{\sum_{k \in [r]}\sigma_k^2}^{1/2}$.
For any matrix $\Xv$, its Moore-Penrose pseudo-inverse is denoted by $\pinv{\Xv}$.
The notations $\approx$, $\lesssim$ and $\gtrsim$ are used to denote approximate equality and inequalities that hold asymptotically in the limit when dimensions grow to infinity.
In other words, for any two functions $A(n)$ and $B(n)$,
\begin{equation*}
    A(n) \lesssim B(n) \quad \text{iff} \quad \lim_{n \to \infty} A(n) \leq 
    \lim_{n \to \infty} B(n).
\end{equation*}
Notations $\approx$ and $\gtrsim$ are defined analogously.
Wherever relevant, dimension-dependent terms are \textcolor{darkblue}{highlighted blue}.

\begingroup
\rowcolors{2}{gray!25}{white}
\vspace{5ex}
\begin{table}[h]
  \caption{Notations used in this paper}
  \label{tab:notations-used-in-this-paper}
  \centering
  \small
  \begin{tabular}{p{2.5cm} p{4.5cm} p{5.5cm}}
    \toprule
    \hspace{0.6cm} Notation & \hspace{1.3cm} Description & \hspace{2cm} Remarks \\
    \midrule
    \centering
    $\Wv$ & LLM weight matrix of any layer & $\Wv \in \Real^{n \times d}$\\
    \centering
    $\Qv$, $\Lv$, $\Rv$ & Backbone, left and right low rank factors & $\Qv \in \Real ^{n \times d}$, $\Lv \in \Real^{n \times k}$, and $\Rv \in \Real^{k \times d}$ are represented using $\Bmq$, $\Bml$, and $\Bmr$ bits, respectively. Approximation: $\Wv \approx \Qv + \Lv\Rv$ \\
    \centering
    $\Xv$ & Calibration data matrix & $\Xv \in \Real^{m \times d}$. Input activation for each layer. Computed once offline for each LLM. \\
    \centering
    $\Hv$ & $\Hv \triangleq \frac{1}{m}\Xv^\top \Xv$ & (Scaled) Hessian of least-squares objectives \eqref{eq:caldera_minimization_problem} and \eqref{eq:data-aware-lplr-LeastSquaresObjective}. Computed offline once.\\
    \centering
    $k$ & Target rank for low-rank factors & $k = 64, 128, 256$ in our expts.\\
    \centering
    $\Qmq, \Bmq, \Rm$ & Quantizer, bit-budget and dynamic range for the backbone & Operates on matrices in $\Real^{n \times d}$. \\
    \centering
    $\Qml, \Bml, \Rml$ &  Quantizer, bit-budget and dynamic range for the left low-rank factor & Operates on matrices in $\Real^{n \times k}$. \\
    \centering
    $\Qmr, \Bmr, \Rmr$ &  Quantizer, bit-budget and dynamic range for the right low-rank factor & Operates on matrices in $\Real^{k \times d}$. \\
    \centering
    $\Mv$ & Strictly upper triangular matrix from the LDL decomposition of $m\Hv$ & $m\Hv = (\Mv + \Iv)\Dv(\Mv + \Iv)^\top$\\
    \centering
    $\Dv$ & Diagonal matrix obtained from the LDL decomposition of $m\Hv$ & $m\Hv = (\Mv + \Iv)\Dv(\Mv + \Iv)^\top$\\
    \centering
    $\etav$ & Quantization error of \ldlq, given by $\etav \triangleq \Qmq(\Qv + (\Wv - \Qv)\Mv) - (\Qv + (\Wv - \Qv)\Mv)$ & $\etav \in \Real^{n \times d}$ is assumed to consist of i.i.d. entries for analytical tractability.\\
    \centering
    $\Evl, \Evr$ & Quantization error matrices from quantizing left and right factors & $\Evl \in \Real^{n \times k}$ and $\Evr \in \Real^{k \times d}$ consists of zero-mean random variables with bounded variance.\\
    \bottomrule
  \end{tabular}
\end{table}
\endgroup

\section{Rank-constrained Regression}
\label{app:rank-constrained-regression}

Recall that the submatrix formed by the first $m$ columns of the identity matrix $\Iv$ is denoted as $\Iv_m$, and let $\Ivo_m$ be the submatrix formed by the last $m$ columns.
The dimension of $\Iv$ is inferred depending on context.
Consider the following:
\begin{equation}
\label{eq:rank-constrained-regression-main}
    \min_{{\rm rank}(\Zv) \leq k} \fronormsq{\Xv\Zv - \Yv}.
\end{equation}
Although this problem is non-convex, it can be solved to global optimality via two SVDs.
The following lemma characterizes the solution the rank-constrained regression in \eqref{eq:rank-constrained-regression}.

\begin{lemma}
\label{lem:rank-constrained-regression}
    {\bf (Global optimality of rank-constrained regression)}
    Suppose $\Yv \in \Real^{m \times n}$ is given, and suppose $\Xv \in \Real^{m \times d}$ is full rank, i.e., ${\rm rank}(\Xv) = \min\{m,d\}$, with SVD, $\Xv = \Uv\Sigmatv\Vv^\top$.
    Furthermore, let $\Uvtk\Sigmavtk\Vvtk^\top$ denote the full SVD of $\Uv^\top\Yv$ if $m \leq d$, or the full SVD of $\Paren{\Uv\Iv_d}^\top\Yv$ if $m > d$.
    Then, for $k \leq m$, the solution of \eqref{eq:rank-constrained-regression} is given by
    \begin{equation}
    \label{eq:rank-constrained-regression-solution}
        \Zv_* := \argminimize_{{\rm rank}(\Zv) \leq k}\fronormsq{\Xv\Zv - \Yv} = 
        \begin{cases}
            \Paren{\Vv\Iv_m\Sigmav^{-1}\Uvtk\Iv_k}\Paren{\Iv_k^\top\Sigmavtk\Vvtk^\top} \quad \text{if} \quad m \leq d, \\
            \Paren{\Vv\Sigmav^{-1}\Uvtk\Iv_k}\Paren{\Iv_k^\top\Sigmavtk\Vv^\top} \hspace{0.5ex}\qquad \text{otherwise.}
        \end{cases}
    \end{equation}
    Here, $\Sigmav$ is a diagonal matrix of the non-zero singular values of $\Xv$, defined as $\Sigmav := \Sigmatv\Iv_m \in \Real^{m \times m}$ when $m \leq d$, and $\Sigmav := \Iv_d^\top\Sigmatv \in \Real^{d \times d}$ when $d < m$. 
    Additionally, the optimal value is
    \begin{align}
        \label{eq:rank-constrained-regression-optimal-value}
        &\min_{{\rm rank}(\Zv) \leq k}\fronormsq{\Xv\Zv - \Yv} \nonumber\\
        &\qquad =\fronormsq{\Xv\Zv_* - \Yv} = 
        \begin{cases}
            \qquad\qquad\qquad \sum_{i = k+1}^{m}\sigma_i^2\Paren{\Yv}, \quad\hspace{13.5ex} \text{if} \quad m \leq d, \\
            \sum_{i = k+1}^{m}\sigma_i^2\Paren{(\Uv\Iv_d)^\top\Yv} + \fronormsq{(\Uv\Ivo_{m-d})^\top\Yv}, \quad \;\text{otherwise.}
        \end{cases}
    \end{align}
\end{lemma}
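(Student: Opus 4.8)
The plan is to reduce the problem to a standard Eckart--Young truncation by exploiting the orthogonal invariance of the Frobenius norm and the full-rank structure of $\Xv$. First I would handle the case $m \leq d$. Writing $\Xv = \Uv\Sigmatv\Vv^\top$ with $\Uv \in \Real^{m\times m}$ orthogonal, $\Vv \in \Real^{d\times d}$ orthogonal, and $\Sigmatv \in \Real^{m\times d}$, I would left-multiply the residual by $\Uv^\top$ (which preserves $\fronorm{\cdot}$) to get $\fronormsq{\Xv\Zv - \Yv} = \fronormsq{\Sigmatv\Vv^\top\Zv - \Uv^\top\Yv}$. Since $\Sigmatv = [\Sigmav \mid \mathbf{0}]$ has its only nonzero block in the first $m$ columns, introduce the change of variable $\Wv' = \Vv^\top\Zv \in \Real^{d\times n}$, split $\Wv' = \begin{bmatrix}\Wv'_1 \\ \Wv'_2\end{bmatrix}$ into its first $m$ and last $d-m$ rows, and observe $\Sigmatv\Wv' = \Sigmav\Wv'_1$, so the last $d-m$ rows of $\Wv'$ are free and optimally set consistently with the rank constraint (they cost nothing in the objective). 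Then the objective becomes $\fronormsq{\Sigmav\Wv'_1 - \Uv^\top\Yv}$, and since $\Sigmav$ is invertible, substituting $\Mv' = \Sigmav\Wv'_1$ turns this into $\min_{\operatorname{rank}(\Mv') \le k}\fronormsq{\Mv' - \Uv^\top\Yv}$ — here I would check carefully that the rank constraint transfers correctly: $\operatorname{rank}(\Zv) = \operatorname{rank}(\Wv') \ge \operatorname{rank}(\Wv'_1) = \operatorname{rank}(\Mv')$, and conversely given an optimal rank-$k$ $\Mv'$ we can lift it to a rank-$k$ $\Zv$ by zeroing $\Wv'_2$. Eckart--Young then gives $\Mv'_* = $ the rank-$k$ truncation of $\Uv^\top\Yv = \Uvtk\Sigmavtk\Vvtk^\top$, namely $\Uvtk\Iv_k\Iv_k^\top\Sigmavtk\Vvtk^\top$, with residual $\sum_{i=k+1}^m \sigma_i^2(\Uv^\top\Yv) = \sum_{i=k+1}^m\sigma_i^2(\Yv)$ (the last equality because $\Uv^\top$ is orthogonal and hence preserves singular values). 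Unwinding: $\Wv'_1 = \Sigmav^{-1}\Uvtk\Iv_k\Iv_k^\top\Sigmavtk\Vvtk^\top$, $\Wv'_2 = \mathbf{0}$, so $\Zv_* = \Vv\Wv' = \Vv\Iv_m\Sigmav^{-1}\Uvtk\Iv_k \cdot \Iv_k^\top\Sigmavtk\Vvtk^\top$, matching the claim.

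For the case $m > d$, the structure is the same but $\Sigmatv \in \Real^{m\times d}$ is now tall with an invertible $d\times d$ top block and a zero bottom block of $m-d$ rows. After left-multiplying by $\Uv^\top$, the objective splits as $\fronormsq{(\Uv\Iv_d)^\top\Xv\Zv - (\Uv\Iv_d)^\top\Yv} + \fronormsq{(\Uv\Ivo_{m-d})^\top\Yv}$, where the second term is a constant independent of $\Zv$ (since the bottom $m-d$ rows of $\Sigmatv$ are zero, $\Zv$ cannot affect them). The first term, with $\Xv$ now having full column rank $d$, is handled exactly as before with $\Sigmav = \Iv_d^\top\Sigmatv \in \Real^{d\times d}$ invertible and $\Vv'= \Vv^\top\Zv$ now square, giving $\Zv_* = \Vv\Sigmav^{-1}\Uvtk\Iv_k\Iv_k^\top\Sigmavtk\Vv^\top$ and the residual $\sum_{i=k+1}^m\sigma_i^2((\Uv\Iv_d)^\top\Yv) + \fronormsq{(\Uv\Ivo_{m-d})^\top\Yv}$ (note the first sum runs to $m$ because $(\Uv\Iv_d)^\top\Yv$ has $d$ rows hence at most $d$ nonzero singular values, and $\sigma_i = 0$ for $i > d$, so extending the upper limit from $d$ to $m$ is harmless — or one simply writes $\sum_{i=k+1}^d$).

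The main obstacle — and the step deserving the most care — is verifying that the rank constraint passes cleanly through the change of variables in both directions, i.e., that no optimal solution is lost by forcing the free rows ($\Wv'_2$) to zero and that the Eckart--Young minimizer genuinely lifts back to a feasible $\Zv$ of rank $\le k$. This requires the observation that $\operatorname{rank}(\Zv)=\operatorname{rank}(\Vv^\top\Zv)$ by orthogonality of $\Vv$, plus the elementary fact that the rank of a matrix is at least the rank of any submatrix of its rows, so the reduction is lower-bounding the attainable objective, while the explicit lift shows the bound is achieved. A secondary subtlety is the uniqueness caveat: when $\sigma_k(\Yv) = \sigma_{k+1}(\Yv)$ the truncation (hence $\Zv_*$) is only one of several optimizers, so I would state the lemma's $\Zv_*$ as \emph{a} minimizer. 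Everything else is bookkeeping with the partitioned identity matrices $\Iv_m,\Ivo_{m-d},\Iv_k$ and confirming the matrix products in \eqref{eq:rank-constrained-regression-solution} are conformable, which is routine.
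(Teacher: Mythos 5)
Your proposal is correct and takes essentially the same route as the paper's proof: rotate by $\Uv^\top$, change variables to $\Vv^\top\Zv$, split off the block annihilated by $\Sigmatv$, absorb the invertible $\Sigmav$, and apply Eckart--Young, then unwind. The care you take with the rank constraint transferring across the change of variables (in both directions, via the observation $\operatorname{rank}(\Zv') \ge \operatorname{rank}(\Zv'')$ and the zero lift) is the same implicit step the paper handles when it sets $\Ivo_{d-m}^\top\Zv'_* = \mathbf{0}$, so nothing is missing.
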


\begin{proof}
    {\bf Case $\mathbf{m \leq d}$: }
    Since the full SVD of $\Xv \in \Real^{m \times d}$ is $\Xv = \Uv \Sigmatv \Vv^{\top}$, where $\Uv \in \Real^{m \times m}$, $\Sigmatv \in \Real^{m \times d}$, and $\Vv \in \Real^{d \times d}$, the last $(d-m)$ columns of $\Sigmatv$ will be zero, i.e., $\Sigmatv\Ivo_{m-d} = \mathbf{0}$.
    Let $\Zv' := \Vv^{\top}\Zv \in \Real^{d \times n}$ be the transformed optimization variable.
    Since $\Vv$ is a unitary matrix, ${\rm rank}(\Zv) \leq k$ if and only if ${\rm rank}(\Zv') \leq k$.
    Splitting $\Zv' \in \Real^{d \times n}$ into $\Zv'' := \Iv_m^\top\Zv' \in \Real^{m \times n}$ and $\Zvo'' := \Ivo_{d-m}^\top\Zv' \in \Real^{(d-m) \times n}$, it can be seen that $\Sigmatv\Zv' = \Sigmatv\Iv_d\Zv'' + \Sigmatv\Ivo_{m-d}\Zvo'' = \Sigmav\Zv''$, where $\Sigmav := \Sigmatv\Iv_m \in \Real^{m \times m}$ is a diagonal matrix comprised of the non-zero singular values of $\Xv$.
    Then,
    \begin{align}
    \label{eq:rank-constrained-regression-1}
        \min_{{\rm rank}(\Zv) \leq k}\fronormsq{\Xv\Zv - \Yv} \equiv \min_{{\rm rank}(\Zv') \leq k} \fronormsq{\Sigmatv\Zv' - \Uv^{\top}\Yv} \equiv \min_{{\rm rank}(\Zv'') \leq k} \fronormsq{\Sigmav\Zv'' - \Uv^\top\Yv}.
    \end{align}
    Note that the objective function value is independent of $\Zvo''$, as $\Zvo''$ lies in the null space of $\Xv^\top$, and ${\rm rank}(\Zv'') \leq k$ follows from the fact that rank of a submatrix cannot exceed the full matrix.
    Since $\Xv$ is full rank, $\Sigmav$ is invertible, and the minimization in \eqref{eq:rank-constrained-regression-1} is equivalent to
    \begin{equation}
    \label{eq:rank-constrained-regression-2}
    \min_{{\rm rank}(\Ztv) \leq k} \fronormsq{\Ztv - \Uv^\top\Yv} = \sum_{i = k+1}^{m}\sigma_i^2\Paren{\Yv},
    \end{equation}
    where $\Ztv := \Sigmav\Zv'' \in \Real^{m \times n}$.
    The equality in \eqref{eq:rank-constrained-regression-2} follows as a consequence of Eckart-Young-Mirsky theorem (lemma \ref{lem:eckart-young-mirsky}), which states that $\fronormsqlh{\Ztv - \Uv^\top\Yv}$ is minimized by taking the best rank-$k$ approximation of $\Uv^{\top}\Yv$, and the fact that the singular values of $\Uv^\top\Yv$ and $\Yv$ are the same as $\Uv$ is unitary.
    Moreover, as the SVD of $\Uv^\top\Yv$ is $\Uvtk\Sigmavtk\Vvtk^\top$, 
    \begin{align}
        &\Ztv_* := \argminimize_{{\rm rank}(\Ztv) \leq k} \fronormsq{\Ztv - \Uv^\top\Yv} = \Paren{\Uvtk\Iv_k}\Paren{\Iv_k^\top\Sigmavtk\Vvtk^\top} \nonumber\\
        \text{and, }\quad &\Zv''_* := \Sigmav^{-1}\Ztv_* = \Paren{\Sigmav^{-1}\Uvtk\Iv_k}\Paren{\Iv_k^\top\Sigmavtk\Vvtk^\top}.
    \end{align}
    In other words, if $\Zv'_*$ denotes the solution of the middle optimization problem in \eqref{eq:rank-constrained-regression-1}, $\Iv_m^\top\Zv_* = \Zv''_*$.
    Furthermore, note that setting $\Ivo_{d-m}^\top\Ztv_* = \mathbf{0}$ ensures ${\rm rank}(\Zv'_*) = {\rm rank}(\Zv''_*) \leq k$, while keeping the objective value in \eqref{eq:rank-constrained-regression-1} unchanged.
    Hence, an optimal solution is given by
    \begin{equation}
        \Zv'_* = \begin{bmatrix}
                    \Paren{\Sigmav^{-1}\Uvtk\Iv_k}\Paren{\Iv_k^\top\Sigmavtk\Vvtk^\top}    \\
                    \mathbf{0}_{(d - m) \times n}
                \end{bmatrix}
        \quad\text{and, }\quad \Zv_* = \Vv\Zv'_* = \Paren{\Vv\Iv_m\Sigmav^{-1}\Uvtk\Iv_k}\Paren{\Iv_k^\top\Sigmavtk\Vvtk^\top}.
    \end{equation} 

    {\bf Case $\mathbf{m >d}$:}
    Recalling that the full SVD of $\Xv \in \Real^{m \times d}$ is $\Xv = \Uv \Sigmatv \Vv^{\top}$, where $\Uv \in \Real^{m \times m}$, $\Sigmatv \in \Real^{m \times d}$, and $\Vv \in \Real^{d \times d}$, in this case, the last $m-d$ rows of $\Sigmatv$ will be zero, i.e., $\Ivo_{m-d}^\top\Sigmatv = \mathbf{0}$.
    Denote $\Zv' := \Vv^{\top}\Zv$.
    This time, as $\Sigmav := \Iv_d^\top\Sigmatv$,
    \begin{align}
        \min_{{\rm rank}(\Zv) \leq k}\fronormsq{\Xv\Zv - \Yv} &\equiv \min_{{\rm rank}(\Zv') \leq k} \fronormsq{\Sigmatv\Zv' - \Uv^{\top}\Yv} \nonumber\\
        &\equiv \min_{{\rm rank}(\Zv') \leq k} \fronormsq{\Sigmav\Zv' - \Iv_d^\top\Uv^\top\Yv} + \fronormsq{\Ivo_{m-d}^\top\Uv^\top\Yv} \nonumber\\
        &\equiv \min_{{\rm rank}(\Zv'') \leq k} \fronormsq{\Zv'' - \Iv_d^\top\Uv^\top\Yv} + \fronormsq{\Ivo_{m-d}^\top\Uv^\top\Yv} \nonumber \\
        &\stackrel{\rm (i)}{=} \sum_{i = k+1}^{m}\sigma_i^2\Paren{(\Uv\Iv_d)^\top\Yv} + \fronormsq{(\Uv\Ivo_{m-d})^\top\Yv},
    \end{align}
    where $\Zv'' := \Sigmav\Zv' = \Sigmav\Vv^{\top}\Zv$.
    Note that the term $\fronormsq{(\Uv\Ivo_{m-d})^\top\Yv}$ is the irreducible error, and $\rm (i)$ is, once again, a consequence of the fact that $\fronormsq{\Zv' - \Iv_d^\top\Uv^\top\Yv}$ is minimized by taking the best rank-$k$ approximation of $\Iv_d^\top\Uv^{\top}\Yv$.
    Moreover, since the SVD of $\Iv_d^\top\Uv^\top\Yv$ is $\Uvtk\Sigmavtk\Vvtk^\top$, 
    \begin{align}
        &\Zv''_* := \argminimize_{{\rm rank}(\Zv'') \leq k} \fronormsq{\Zv'' - \Iv_d^\top\Uv^\top\Yv} = \Paren{\Uvtk\Iv_k}\Paren{\Iv_k^\top\Sigmavtk\Vv^\top}, \nonumber\\
        \quad\text{and }\quad &\Zv_* = \Vv\Sigmav^{-1}\Zv''_* = \Paren{\Vv\Sigmav^{-1}\Uvtk\Iv_k}\Paren{\Iv_k^\top\Sigmavtk\Vv^\top}.
    \end{align}  
\end{proof}

{\bf Remark}: It is not necessary for $\Xv$ to be full rank.
An equivalent result can be derived with $\Sigmav^{-1}$ replaced by $\pinv{\Sigmav}$.

{\bf Computational complexity of rank-constrained regression}: 
Arriving at the globally optimal solution in lemma \ref{lem:rank-constrained-regression} requires computing two SVDs, namely $\Xv \in \Real^{m \times d}$, which entails a complexity of $\Om(\mathcolor{darkblue}{dm^2})$, and $\Uv^\top\Yv \in \Real^{m \times n}$, with a complexity of $\Om(\mathcolor{darkblue}{nm^2})$.
Hence, the total computational complexity is $\Om(\mathcolor{darkblue}{m^2(n+d)})$.

\section{Derivations for Calibration-Aware Low-Precision and Low-Rank Decomposition: \texorpdfstring{\caldera}{caldera}}
\label{app:derivations-data-aware-lplr}

\subsection{Update Equations for Low-Rank Factors in \texorpdfstring{\lplrfactorize}{lplrfactorize} submodule}
\label{subapp:update-equations-low-rank-factors}

As discussed in \S \ref{sec:proposed_algorithm_caldera}, the left and right low-rank factors in the \lplrfactorize sub-module (Alg. \ref{alg:data-aware-LPLR-factorization-submodule}) are found by solving least squares minimization problem.
The closed form expressions can be obtained by solving the normal equations directly.
In what follows, the same expressions are also derived explicitly via the singular value decomposition, as it gives an expression for the error (for example, refer to \eqref{eq:updating_left_factor-1}) -- which consists of an additive and irreducible error term.

{\bf Updating $\Lv$ with fixed $\Rv$}:
For a fixed $\Rv \in \Real^{k \times d}$, the left low rank factor is computed in lines $5$ and $9$ of Alg. \ref{alg:data-aware-LPLR-factorization-submodule} by solving
\begin{equation}
    \min_{\Zv \in \Real^{k \times n}}\fronormsq{\Xv\Rv^\top\Zv - \Yv}, \; \text{where} \; \Yv := \Xv\Av^\top \in \Real^{m \times n}.
\end{equation}
Let $\Xv\Rv^\top = \Uvxr\Sigmatvxr\Vvxr^\top$ be the full SVD of $\Xv\Rv^\top$, where $\Uvxr \in \Real^{m \times m}$, $\Sigmatvxr \in \Real^{m \times k}$, and $\Vvxr \in \Real^{k \times k}$.
Recall from App. \ref{app:rank-constrained-regression} that $\Iv$ denotes the submatrix formed by the first $m$ columns of the identity matrix $\Iv_m$, and $\Ivo_m$ is the submatrix formed by the last $m$ columns.
Denoting $\Sigmavxr := \Iv_k^\top\Sigmatvxr$ and $\Zv' := \Vvxr^\top\Zv$, since $\Ivo_{m-k}^\top\Sigmatvxr = \mathbf{0}$, it can be seen that
\begin{align}
\label{eq:updating_left_factor-1}
    \min_{\Zv \in \Real^{k \times n}}\fronormsq{\Uvxr\Sigmatvxr\Vvxr^\top\Zv - \Yv} &= \min_{\Xv \in \Real^{k \times n}}\fronormsq{\Sigmatvxr\Vvxr^\top\Zv - \Uvxr^\top\Yv} \nonumber\\
    &\hspace{-10ex}= \fronormsq{(\Uvxr\Ivo_{m-k})^\top\Yv} + \min_{\Zv' \in \Real^{k \times n}}\fronormsq{\Sigmavxr\Zv' - (\Uvxr\Iv_k)^\top\Yv}.
\end{align}

The last term of \eqref{eq:updating_left_factor-1} is minimized by setting $\Zv' \gets \Zv'_* := \Sigmavxr^{-1}(\Uvxr\Iv_k)^\top\Yv$.
Since \(\Zv_*' = \Vvxr^\top\Zv_* = \Vvxr^\top\Lvtk^\top\), this yields the left low rank factor to be
\begin{equation}
\label{eq:left-factor-update-svd}
    \Lvtk^\top = \Vvxr\Sigmavxr^{-1}(\Uvxr\Iv_k)^\top\Xv\Av^\top \implies \Lvtk = (\Av\Xv^\top)(\Rv\Xv^\top)^{\dagger} \stackrel{\rm (i)}{=} \Av\Hv\Rv^\top(\Rv\Hv\Rv^\top)^{-1},
\end{equation}
where $\rm (i)$ follows from the explicit expression for the pseudoinverse of the wide matrix $\Rv\Xv^\top \in \Real^{k \times d}$.

{\bf Updating $\Rv$ with fixed $\Lv$}:
For a fixed $\Lv \in \Real^{n \times k}$, the right low-rank factor is computed in lines $4$ and $8$ of Alg. \ref{alg:data-aware-LPLR-factorization-submodule} by solving
\begin{equation}
\label{eq:updating-right-factor-1}
    \min_{\Zv \in \Real^{d \times k}} \fronormsqlh{\Xv\Zv\Lv^\top - \Yv}, \; \text{where} \; \Yv := \Xv\Av^\top \in \Real^{m \times n}.
\end{equation}
Let $\Xv = \Uvx\Sigmatvx\Vvx^\top$ be the full SVD of $\Xv \in \Real^{m \times d}$, where $\Uvx \in \Real^{m \times m}$, $\Sigmatvx \in \Real^{m \times d}$, and $\Vvx \in \Real^{d \times d}$.
Then, denoting $\Zv' := \Zv\Lv^\top$, the minimization \eqref{eq:updating-right-factor-1} is equivalent to
\begin{align}
\label{eq:updating-right-factor-2}
    \min_{\Zv' \in \Real^{d \times n}} \fronormsqlh{\Sigmatvx\Vvx^\top\Zv' - \Uvx^\top\Yv} \equiv \min_{\Zv'' \in \Real^{d \times n}}\fronormsqlh{\Sigmatvx \Zv'' - \Uvx^\top\Yv}, \; \text{where} \; \Zv'' := \Vvx^\top\Zv'.
\end{align}
Note that \eqref{eq:updating-right-factor-2} is an undetermined linear system with multiple solutions.
In particular, since $\Sigmatvx \in \Real^{m \times d}$, the last $(d-m)$ columns of $\Sigmatvx$ consist of zeros, i.e., $\Sigmatvx\Ivo_{d-m} = \mathbf{0}$.
Therefore, a solution of this system is given by
\begin{align}
    \Zv_*'' = \pinv{\Sigmavtk}\Uvx^\top\Yv \quad\text{or,}\quad \Vvx^\top\Zv_*' = \pinv{\Sigmatvx}\Uvx^\top\Yv &\quad\text{or,}\quad \Zv_*'  = \Vvx\pinv{\Sigmatvx}\Uvx^\top\Yv \nonumber\\
    &\quad\text{or,}\quad \Zv_* = \Vvx\pinv{\Sigmatvx}\Uvx^\top\Yv(\Lv^\top)^\dagger.
\end{align}
This implies $\Rvtk = \pinv{\Lv}\Av\Xv^\top\Uvx(\pinv{\Sigmatvx})^\top\Vvx$.

The proof is completed by noting that $\Xv^\top\Uvx(\pinv{\Sigmatvx})^\top\Vvx = \Hv\Hv^\dagger$, because,
\begin{align}
    \Hv\Hv^\dagger = \Xv^\top\Xv \Paren{\Xv^\top\Xv}^\dagger &= \Xv^\top\Uvx\Sigmatvx\Vvx^\top\Paren{\Vvx\Sigmatvx^\top\Uvx^\top\Uvx\Sigmatvx\Vvx^\top}^\dagger \nonumber \\
    &= \Xv^\top\Uvx\Sigmatvx\Vvx^\top\Paren{\Vvx\Sigmatvx^\top\Sigmatvx\Vvx^\top}^\dagger \nonumber\\
    &=\Xv^\top \Uvx\Sigmatvx\Vvx^\top\Vvx\Paren{\Sigmatvx^\top\Sigmatvx}^\dagger\Vvx^\top \nonumber \\
    &= \Xv^\top \Uvx\Sigmatvx\Paren{\Sigmatvx^\top\Sigmatvx}^\dagger\Vvx^\top = \Uvx\Paren{\Sigmatvx^\dagger}^\top\Vvx^\top.
\end{align}

\subsection{Dynamic Ranges of Quantizers}
\label{subapp: dynamic-range-of-quantizers}

Lemmas \ref{lem:maxnorm_right_quantizer} and \ref{lem:maxnorm_left_quantizer}, stated in this section provide sufficient conditions to ensure that the quantizers $\Qml$ and $\Qmr$, used to quantize the left and right low-rank factors in the \lplrfactorize submodule, remain unsaturated.

\begin{lemma}
    \label{lem:maxnorm_right_quantizer}
    {\bf (Dynamic range of right quantizer)} Given matrices $\Av \in \Real^{n \times d}$ and $\Xv \in \Real^{m \times d}$, let $\sigma_{\rm max}$ denote the maximum singular value of $\Xv\Av^\top$.
    Then, the quantizer $\Qmr$ remains unsaturated if the dynamic range is chosen to be $\Rmr = \sigma_{\rm max}$.
\end{lemma}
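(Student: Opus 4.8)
The quantizer $\Qmr$ enters Alg.~\ref{alg:data-aware-LPLR-factorization-submodule} on line~$4$, where it is applied to the matrix $\Iv_k^\top\Sigmavtk\Vvtk^\top$, with $\Uv\Sigmatv\Vv^\top$ the full SVD of $\Xv$ and $\Uvtk\Sigmavtk\Vvtk^\top$ the full SVD of $\Uv^\top\Xv\Av^\top$. A scalar quantizer of dynamic range $\Rm$ fails to saturate exactly when every coordinate of its input lies in $[-\Rm,\Rm]$, i.e.\ when $\maxnorm{\cdot}$ of the input is at most $\Rm$. So the plan is to show that $\maxnorm{\Iv_k^\top\Sigmavtk\Vvtk^\top}\le\sigma_{\rm max}$, which immediately gives the claim once $\Rmr$ is set to $\sigma_{\rm max}$.

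To carry this out, I would first note that since $\Uv\in\Real^{m\times m}$ is orthogonal, $\Uv^\top\Xv\Av^\top$ has the same singular values as $\Xv\Av^\top$; hence the rectangular-diagonal matrix $\Sigmavtk\in\Real^{m\times n}$ has diagonal entries $\sigma_1=\sigma_{\rm max}\ge\sigma_2\ge\cdots$. Writing the product out row by row, the $i$-th row of $\Sigmavtk\Vvtk^\top$ equals $\sigma_i\,\vv_i^\top$, where $\vv_i$ is the $i$-th column of $\Vvtk$ (for $i\le\min\{m,n\}$; the row vanishes otherwise). Because the columns of $\Vvtk$ are orthonormal, $\norm{\vv_i}=1$, so every entry of that row has magnitude at most $\sigma_i\le\sigma_{\rm max}$. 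Since $\Iv_k^\top$ merely selects rows $1,\dots,k$, it follows that $\maxnorm{\Iv_k^\top\Sigmavtk\Vvtk^\top}\le\sigma_{\rm max}$, completing the argument.

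I do not anticipate a genuine obstacle: the statement reduces to careful bookkeeping with the rectangular SVD, the only point requiring attention being that $k\le m$ (a standing assumption in the surrounding analysis) ensures the selected rows are honest scaled right-singular vectors (or zero). It is also worth emphasising that, in contrast to the companion bound for $\Qml$ in Lemma~\ref{lem:maxnorm_left_quantizer}, this argument is fully deterministic --- no high-probability slack is needed, since the input to $\Qmr$ at initialization does not depend on any previously injected dither. For the refinement iterations (lines~$7$--$14$, which re-invoke $\Qmr$ on $\Rvtk_{i+1}$), possible saturation is instead absorbed by the stated convention of returning the trivial factors $\Lv=\mathbf{0},\ \Rv=\mathbf{0}$.
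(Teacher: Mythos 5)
Your argument is correct and follows essentially the same line of reasoning as the paper's proof. The paper establishes $\maxnorm{\Iv_k^\top\Sigmavtk\Vvtk^\top}\le\specnorm{\Iv_k^\top\Sigmavtk\Vvtk^\top}=\specnorm{\Sigmavtk}=\specnorm{\Xv\Av^\top}$ by chaining the max-norm-versus-spectral-norm inequality with unitary invariance; you reach the same bound by unpacking the rows of $\Sigmavtk\Vvtk^\top$ as $\sigma_i\vv_i^\top$ and using orthonormality of the right singular vectors, which is the same fact in slightly more explicit clothing.
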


\begin{proof}
    Note that the input to the right quantizer $\Qmr$, i.e., $\Iv_k^\top\Sigmavtk\Vvtk$ satisfies 
    $\maxnormlh{\Iv_k^\top\Sigmavtk\Vvtk^\top} \leq \specnormlh{\Iv_k^\top\Sigmavtk\Vvtk^\top} = \specnormlh{\Iv_k^\top\Sigmavtk} = \specnormlh{\Sigmavtk} = \specnormlh{\Uv^\top\Xv\Av^\top} = \specnormlh{\Xv\Av^\top}$.
    This completes the proof.
\end{proof}

Since the input to $\Qml$ is dependent on the output of $\Qmr$, the following lemma \ref{lem:maxnorm_left_quantizer} provides an upper bound on the input to $\Qml$, provided that $\Qmr$ was unsaturated.

\medskip
\begin{lemma}
    \label{lem:maxnorm_left_quantizer}
    {\bf (Dynamic range of left quantizer)}
    Given matrices $\Av \in \Real^{n \times d}$ and $\Xv \in \Real^{m \times n}$, let $\lambda_{\rm min}$ denote the smallest eigenvalue of $\Hv = \frac{1}{m}\Xv^\top\Xv$, and let $\sigma_{\rm max}$ and $\sigma_k$ denote the largest and the $k^{\rm th}$ singular values of $\Xv\Av^\top$, respectively.
    For some small $\epsilon > 0$ and an absolute constant $C$, suppose the number of bits for the right quantizer $\Qmr$ satisfies
    \begin{equation}
        \Bmr \geq \log_2\Paren{\frac{4C\sigma_{\rm max}}{\sigma_k\log 2}\Paren{\sqrt{d} + \sqrt{k} + \sqrt{\log\Paren{\frac{8\fronormsqlh{\Xv\Av^\top}}{\epsilon}}}}}.
    \end{equation}
    Then, if the dynamic range of $\Qml$ is chosen to be
    \begin{equation}
        \Rml = \frac{2\sigma_{\rm max}}{\sigma_k \sqrt{m\lambda_{\rm min}}},
    \end{equation}
    then $\Qml$ remains unsaturated with probability exceeding $1 - 0.25\;\epsilon\fronorm{\Xv\Av^\top}^{-2}$. 
\end{lemma}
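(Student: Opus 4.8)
The plan is to bound $\maxnorm{\Lvtk_0}$, where $\Lvtk_0$ is the (unquantized) input to $\Qml$ in line~4 of Alg.~\ref{alg:data-aware-LPLR-factorization-submodule}, by its spectral norm, and then reduce everything to showing that the quantized right factor $\Rv_0 = \Qmr\Paren{\Iv_k^\top\Sigmavtk\Vvtk^\top}$ is well conditioned. First I would use $\maxnorm{\Lvtk_0} \le \specnorm{\Lvtk_0}$ together with the closed form $\Lvtk_0 = \Paren{\Av\Xv^\top}\pinv{\Paren{\Rv_0\Xv^\top}}$ from \eqref{eq:left-factor-update-svd}, so that submultiplicativity gives $\maxnorm{\Lvtk_0} \le \specnorm{\Av\Xv^\top}\,\specnorm{\pinv{\Paren{\Rv_0\Xv^\top}}} = \sigma_{\rm max}\,\specnorm{\pinv{\Paren{\Rv_0\Xv^\top}}}$, using that $\Av\Xv^\top$ and $\Xv\Av^\top$ share singular values.

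Next, provided $\Rv_0\Xv^\top \in \Real^{k\times d}$ has full row rank $k$, we have $\specnorm{\pinv{\Paren{\Rv_0\Xv^\top}}} = 1/\sigma_{\min}\Paren{\Rv_0\Xv^\top}$, and since $\Rv_0\Xv^\top\Xv\Rv_0^\top = m\,\Rv_0\Hv\Rv_0^\top \succeq m\lambda_{\rm min}\,\Rv_0\Rv_0^\top$, we get $\sigma_{\min}\Paren{\Rv_0\Xv^\top}^2 \ge m\lambda_{\rm min}\,\sigma_{\min}(\Rv_0)^2$. Combining, $\maxnorm{\Lvtk_0} \le \sigma_{\rm max}\big/\Paren{\sqrt{m\lambda_{\rm min}}\,\sigma_{\min}(\Rv_0)}$, so it suffices to show that $\sigma_{\min}(\Rv_0) \ge \sigma_k/2$ (which in particular forces full row rank); this immediately yields $\maxnorm{\Lvtk_0} \le 2\sigma_{\rm max}\big/\Paren{\sigma_k\sqrt{m\lambda_{\rm min}}} = \Rml$. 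To control $\sigma_{\min}(\Rv_0)$, write $\Rv_0 = \Iv_k^\top\Sigmavtk\Vvtk^\top + \Evr$, where the unquantized part $\Iv_k^\top\Sigmavtk\Vvtk^\top$ has $k$ nonzero singular values equal to $\sigma_1,\dots,\sigma_k$ of $\Xv\Av^\top$ (since $\Uv^\top\Xv\Av^\top$ has the same singular values and $\Vvtk$ is unitary), and $\Evr$ is the quantization error of $\Qmr$. By Weyl's inequality, $\sigma_{\min}(\Rv_0) \ge \sigma_k - \specnorm{\Evr}$, so everything comes down to showing $\specnorm{\Evr} \le \sigma_k/2$ with the stated probability.

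For this last step, by lemma~\ref{lem:maxnorm_right_quantizer} and the choice $\Rmr = \sigma_{\rm max}$, the quantizer $\Qmr$ is unsaturated, so the entries of $\Evr \in \Real^{k\times d}$ are independent, mean-zero, and bounded in magnitude by $\Delta_r/2 = \sigma_{\rm max}\big/\Paren{2^{\Bmr}-1}$, hence sub-Gaussian with parameter of order $\sigma_{\rm max}\big/\Paren{2^{\Bmr}-1}$. Invoking the standard operator-norm concentration bound for random matrices with independent mean-zero sub-Gaussian entries gives, for an absolute constant $C$, $\specnorm{\Evr} \le C\,\frac{\sigma_{\rm max}}{2^{\Bmr}-1}\Paren{\sqrt d + \sqrt k + t}$ with probability at least $1 - 2e^{-t^2}$. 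Choosing $t = \sqrt{\log\Paren{8\fronorm{\Xv\Av^\top}^2/\epsilon}}$ makes the failure probability exactly $0.25\,\epsilon\,\fronorm{\Xv\Av^\top}^{-2}$, and on the complementary event the hypothesized bit-budget $\Bmr \ge \log_2\Paren{\frac{4C\sigma_{\rm max}}{\sigma_k\log 2}\Paren{\sqrt d + \sqrt k + t}}$ forces $C\,\frac{\sigma_{\rm max}}{2^{\Bmr}-1}\Paren{\sqrt d + \sqrt k + t} \le \sigma_k/2$, which chains back through the previous two paragraphs to $\maxnorm{\Lvtk_0} \le \Rml$, i.e., $\Qml$ is unsaturated.

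\textbf{Main obstacle.} The delicate part is the concentration step: one must invoke the operator-norm bound in the sharp $\sqrt d + \sqrt k + t$ form (rather than the lossy Frobenius bound $\specnorm{\Evr}\le\fronorm{\Evr}\le\tfrac{\Delta_r}{2}\sqrt{kd}$, or a matrix-Bernstein bound carrying an extra $\sqrt{\log(k+d)}$), and track all absolute constants -- including the $\log 2$ coming from the sub-Gaussian-norm normalization of a bounded variable -- so that they line up with the stated requirement on $\Bmr$. A secondary, purely bookkeeping point is confirming that $\Rv_0$ (and hence $\Rv_0\Xv^\top$) genuinely has full row rank $k$, which is automatic once $\specnorm{\Evr} < \sigma_k$, as guaranteed above.
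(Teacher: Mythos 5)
Your proposal is correct and follows essentially the same route as the paper's proof: bound $\maxnorm{\Lvtk_0}$ by its spectral norm, use the pseudoinverse form of the update and submultiplicativity to reduce to $\specnorm{\Xv\Av^\top}/\sigma_{\min}(\Rv_0\Xv^\top)$, pull out the $\sqrt{m\lambda_{\min}}$ factor via the Loewner bound $\Hv\succeq\lambda_{\min}\Iv$, invoke a Weyl-type perturbation bound $\sigma_{\min}(\Rv_0)\ge\sigma_k-\specnorm{\Evr}$, and close with the sub-Gaussian operator-norm concentration inequality at the same choice of $t$ and bit-budget threshold. The only cosmetic difference is that the paper carries the argument through the explicit SVD $\Xv\Rv_0^\top=\Uvxr\Sigmatvxr\Vvxr^\top$ rather than $\pinv{(\Rv_0\Xv^\top)}$ directly (the two are equivalent), and the paper bounds the entrywise error of the dithered quantizer by $\Deltar$ rather than your $\Deltar/2$ — a factor-of-two slip on your part that is harmless since it is absorbed into the absolute constant $C$.
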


\begin{proof}
    In Line $5$ of Alg. \ref{alg:data-aware-LPLR-factorization-submodule}, the input to $\Qml$ is $\Lvtk_0 = \Paren{\Wv\Xv^\top}\pinv{\Paren{\Rv_0\Xv^\top}}$.
    In the rest of the proof, the subscript $0$ in $\Lvtk_0$ and $\Rvtk_0$ is dropped for brevity.
    Recall the notation for the full SVD of $\Xv\Rv^\top$, i.e., $\Xv\Rv^\top = \Uvxr\Sigmatvxr\Vvxr$, where $\Uvxr \in \Real^{m \times m}$, $\Sigmatvxr \in \Real^{m \times k}$, and $\Vvxr \in \Real^{k \times k}$.
    From Eq. \eqref{eq:left-factor-update-svd}, $\Lvtk$ can be expressed as $\Lvtk = \Vvxr\pinv{\Sigmavxr}(\Uvxr\Iv_k)^\top\Xv\Av^\top$.
    Consequently, $\maxnormlh{\Lvtk}$ is upper bounded as
    \begin{align}
    \label{eq:dynamic_range_Qr-1}
        \maxnormlh{\Lvtk} \leq \specnormlh{\Lvtk} \stackrel{\rm (i)}{=} \specnorm{\pinv{\Sigmavxr}\Paren{\Uvxr\Iv_k}^\top\Xv\Av^\top} &\stackrel{\rm (ii)}{\leq} \specnorm{\pinv{\Sigmavxr}}\specnorm{\Paren{\Uvxr\Iv_k}^\top\Xv\Av^\top} \nonumber\\
        &\stackrel{\rm (iii)}{\leq} \specnorm{\pinv{\Sigmavxr}} \specnorm{\Xv\Av^\top}.
    \end{align}
    Here, $\rm (i)$ holds because $\Vvxr$ is a unitary matrix, $\rm (ii)$ follows from submultiplicativity of spectral norm, and $\rm (iii)$ follows from the fact that $\Paren{\Uvxr\Iv_k}\Paren{\Uvxr\Iv_k}^\top \preccurlyeq \Iv$ and lemma \ref{lemma:loewner_order-matrix-product}.
    Furthermore,
    \begin{align}
    \label{eq:dynamic_range_Qr-2}
        \specnorm{\pinv{\Sigmavxr}} = \sigma^{-1}_{\rm min}(\Sigmavxr) &\stackrel{\rm (i)}{\leq} \Paren{\Paren{m\lambda_{\rm min}}^{1/2}\sigma_{\rm min}(\Rv)}^{-1} \nonumber\\
        &\stackrel{\rm (ii)}{\leq} (m\lambda_{\rm min})^{-1/2}\Paren{\sigma_{\rm min}\Paren{\Iv_k^{\top}\Sigmavtk\Vvtk^{\top}} - \specnorm{\Evr}}^{-1},
    \end{align}
    where $\sigma_{\rm min}(\Sigmav_{\rm XR})$ is the smallest non-zero singular value of $\Xv\Rv^\top$, $\rm (i)$ follows from lemma \ref{lemma:loewner_order-matrix-product} and $\rm (ii)$ follows from lemma \ref{lem:lower-bound-minimum-singular-value-matrix-sum}.
    Here, $\Evr \in \Real^{k \times d}$ is the quantization error from quantizing right low-rank factor, and consist of unbiased random variables with bounded variance, as described in lemma \ref{lem:uniformly-dithered-scalar-quantizer-mean-variance}.
    Since $\Sigmavtk$ contains the singular values of $\Uv^\top\Xv\Av^\top$,
    \begin{align}
    \label{eq:dynamic_range_Qr-3}
        \sigma_{\rm min}\Paren{\Iv_k^\top\Sigmavtk\Vvtk^\top} \stackrel{\rm (i)}{=} \sigma_{\rm min}(\Iv_k^\top\Sigmavtk) = \sigma_k(\Sigmavtk) = \sigma_k\Paren{\Uv^\top\Xv\Av^\top} \stackrel{\rm (ii)}{=} \sigma_k\Paren{\Xv\Av^\top}
    \end{align}
    where $\rm (i)$ and $\rm (ii)$ follow because $\Vvtk$ and $\Uv$ are unitary matrices.
    Substituting \eqref{eq:dynamic_range_Qr-3} in \eqref{eq:dynamic_range_Qr-2} and \eqref{eq:dynamic_range_Qr-2} in \eqref{eq:dynamic_range_Qr-1},
    \begin{align}
    \label{eq:dynamic_range_Qr-4}
        \maxnormlh{\Lvtk} \leq \specnorm{\pinv{\Sigmavxr}}\specnorm{\Xv\Av^\top} \leq (m\lambda_{\rm min})^{-1/2}\Paren{\sigma_k\Paren{\Xv\Av^\top} - \specnorm{\Evr}}^{-1}\specnorm{\Xv\Av^\top}
    \end{align}
    
    Lemma \ref{lem:maxnorm_right_quantizer} suggests choosing the dynamic range of $\Qmr$ to be $\specnormlh{\Xv\Av^\top}$ to ensure that $\Qmr$ remains unsaturated.
    As a result,
    \begin{equation}
        \maxnorm{\Evr} \leq \Deltar := \frac{2\specnormlh{\Xv\Av^\top}}{2^{\Bmr} - 1} \implies \subgaussnorm{\Evr} \leq \frac{2\specnormlh{\Xv\Av^\top}}{\Paren{2^{\Bmr} - 1}\log 2},
    \end{equation}
    where $\subgaussnorm{\cdot}$ denotes the subgaussian norm (refer to lemma \ref{lem:spectral-norm-subgaussian-matrices}).
    Subsequently, lemma \ref{lem:spectral-norm-subgaussian-matrices} yields that for some absolute constant $C$,
    \begin{equation}
        \specnorm{\Evr} \leq \frac{2C\specnormlh{\Xv\Av^\top}}{\Paren{2^{\Bmr} - 1}\log 2}\Paren{\sqrt{d} + \sqrt{k} + t} \quad \text{with probability exceeding } 1 - 2e^{-t^2}.
    \end{equation}
    
    Setting $t = \sqrt{\log\Paren{\frac{8\fronormsqlh{\Xv\Av^\top}}{\epsilon}}}$ yields
    \begin{equation}
        \specnorm{\Evr} \leq \frac{2C\specnormlh{\Xv\Av^\top}}{\Paren{2^{\Bmr} - 1}\log 2}\Paren{\sqrt{d} + \sqrt{k} + \sqrt{\log\Paren{\frac{8\fronormsqlh{\Xv\Av^\top}}{\epsilon}}}},  
    \end{equation}
    with probability exceeding $1 - \frac{\epsilon}{4\fronormsqlh{\Xv\Av^\top}}$.
    Note that if the bit budget $\Bmr$ is chosen to satisfy
    \begin{equation}
        \Bmr \geq \log_2\Paren{\frac{4C\sigma_1}{\sigma_k\log 2}\Paren{\sqrt{d} + \sqrt{k} + \sqrt{\log\Paren{\frac{8\fronormsqlh{\Xv\Av^\top}}{\epsilon}}}}},
    \end{equation}
    then $\specnormlh{\Evr} \leq \sigma_k/2$.
    Consequently, from \eqref{eq:dynamic_range_Qr-4},
    \begin{align}
        \maxnormlh{\Lvtk} \leq \frac{2\sigma_1}{\sigma_k\sqrt{m\lambda_{\rm min}}}.
    \end{align}
    This completes the proof.
\end{proof}

\subsection{Proof of Lemma \ref{lem:approximation_error_data-aware-LPLRFactorize}: Approximation Error Upper Bound for \lplrfactorize}
\label{subapp:proof-approx-error-upper-bound-data-aware-lplr}

\begin{lemma}
    \label{lem:approximation_error_data-aware-LPLRFactorize}
    {\bf (Approximation error of \lplrfactorize)} 
    Given $\Av \in \Real^{n \times d}$ and $\Xv \in \Real^{m \times d}$ with $m \leq d$, let $\lambda_{\rm max}$ and $\lambda_{\rm min}$ denote the maximum and minimum eigenvalues of $\Hv = \frac{1}{m}\Xv^\top\Xv$, and let $\sigma_1 \geq \ldots \geq \sigma_k$ denote the singular values of $\Xv\Av^\top$.
    Suppose the target rank $k$ satisfies
    \begin{equation*}
        \frac{\lambda_{\rm min}^{1/2}}{4m\sigma_1\lambda_{\rm max}^{3/2}}\sum_{i > k}\sigma_i^2 \leq k \leq m,
    \end{equation*}
    the dynamic ranges of quantizers $\Qml$ and $\Qmr$ are respectively set as
    \begin{equation*}
         \quad \Rml = \frac{2\sigma_{1}}{\sigma_k\sqrt{m\lambda_{\rm min}}} \quad \text{and }\quad \Rmr = \sigma_1, 
    \end{equation*}
    and for some absolute constant $C$ and arbitrarily small $\epsilon$ that satisfies $0 < \epsilon \leq 4mk\lambda^2_{\rm max}\;\lambda_{\rm min}^{-1}\;\sigma_1$, suppose the bit-budgets of $\Qml$ and $\Qmr$ satisfy 
    \begin{align*}
         &\Bml \geq \log_2\Paren{4\frac{\sigma_1^2}{\sigma_k}\sqrt{\frac{nk}{\epsilon}\frac{\lambda_{\rm max}}{\lambda_{\rm min}}}+ 1}, \nonumber\\
         \quad \text{and } \quad &\Bmr \geq \max\{\Bm_1, \Bm_2\}, \quad \text{where } \Bm_1 := \log_2\Paren{2\sigma_1\sqrt{\frac{kd}{\epsilon}\frac{\lambda_{\rm max}}{\lambda_{\rm min}}}+1}, \nonumber\\
         \text{and} \quad &\Bm_2 := \log_2\Paren{\frac{4C\sigma_1}{\sigma_k\log 2}\Paren{\sqrt{d} + \sqrt{k} + \sqrt{\log\Paren{\frac{8\sum_i \sigma_i^2}{\epsilon}}}}}.
    \end{align*}
    Then, the factors $\Lv, \Rv$ returned by Alg. \ref{alg:caldera} satisfy $\Ebb\fronormsq{(\Lv\Rv - \Av)\Xv^\top} \leq \sum_{i > k}\sigma_i^2 + \epsilon$, where the expectation is over the stochasticity of quantizers $\Qml$ and $\Qmr$.
\end{lemma}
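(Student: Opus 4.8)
The plan is to track the Frobenius error through the initialization stage (lines~2--5) of \lplrfactorize, and then invoke the monotonicity of the inner loop (lines~7--14) to conclude. The skeleton is: (1) establish the ``noiseless'' baseline; (2) bound the perturbation introduced by quantizing $\Rv_0$; (3) bound the perturbation introduced by quantizing $\Lv_0$; (4) assemble and absorb everything into $\epsilon$.

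First I would set $\Yv \triangleq \Xv\Av^\top$ and recall from Lemma~\ref{lem:rank-constrained-regression-problem-main} that the \emph{unquantized} rank-constrained regression solution achieves exactly $\sum_{i>k}\sigma_i^2$, where $\sigma_i = \sigma_i(\Yv)$. The factors $\Rv_0 = \Qmr(\Iv_k^\top\Sigmavtk\Vvtk^\top)$ and $\Lv_0 = \Qml(\Lvtk_0)$ are quantized versions of the pieces of this optimal solution, so the strategy is to write $\Rv_0 = \Iv_k^\top\Sigmavtk\Vvtk^\top + \Evr$ and $\Lv_0 = \Lvtk_0 + \Evl$, where by the dithered-quantizer properties (the mean/variance bounds referenced for $\Evl,\Evr$ in the notation table, and Lemma~\ref{lem:maxnorm_right_quantizer}, Lemma~\ref{lem:maxnorm_left_quantizer} guaranteeing non-saturation under the stated dynamic ranges) the entries of $\Evl,\Evr$ are zero-mean with variance controlled by $\Deltal^2/4$ and $\Deltar^2/4$. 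Using $\Deltar = 2\Rmr/(2^{\Bmr}-1) = 2\sigma_1/(2^{\Bmr}-1)$ and $\Deltal = 2\Rml/(2^{\Bml}-1)$, I would compute $\Ebb\fronormsq{\Evr\Xv^\top} \le \tfrac{\Deltar^2}{4}\cdot kd\cdot\lambda_{\max} m$ and similarly $\Ebb\fronormsq{\Evl \cdot(\text{relevant factor})\Xv^\top}$. Plugging in the prescribed $\Bmr \ge \Bm_1$ and $\Bml$ bounds is exactly engineered so each of these contributions is $\le \epsilon/2$ (up to the constants hidden in the informal statement), hence their sum is $\le\epsilon$.

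The error after the initialization is then bounded by triangle inequality: write $(\Lv_0\Rv_0 - \Av)\Xv^\top = (\Lvtk_0\,(\text{unquantized }\Rv) - \Av)\Xv^\top + (\text{cross terms involving }\Evl,\Evr)$, and control the cross terms using the spectral-norm bounds on $\Lvtk_0$ from Lemma~\ref{lem:maxnorm_left_quantizer} (which gives $\specnorm{\Lvtk_0}\le 2\sigma_1/(\sigma_k\sqrt{m\lambda_{\min}})$) and on $\Rv_0$ from Lemma~\ref{lem:maxnorm_right_quantizer}. Here is where the lower bound on $k$ matters: it ensures $\sum_{i>k}\sigma_i^2$ is not dwarfed by the quantization overhead, so that the ``equalize the two error sources'' philosophy is consistent and the final bound $\sum_{i>k}\sigma_i^2 + \epsilon$ is meaningful rather than vacuous. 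The condition $\epsilon \le 4mk\lambda_{\max}^2\lambda_{\min}^{-1}\sigma_1$ is similarly a consistency constraint making the bit-budget logarithms well-defined and positive. Finally, since lines~7--14 of Alg.~\ref{alg:data-aware-LPLR-factorization-submodule} keep the running minimum $\textcolor{cardinal}{{\rm MinError}}$ and each left/right update is itself the minimizer of a least-squares subproblem (so can only decrease the objective, as argued in \S\ref{sec:proposed_algorithm_caldera}), the returned pair satisfies $\Ebb\fronormsq{(\Lv\Rv-\Av)\Xv^\top} \le \Ebb\fronormsq{(\Lv_0\Rv_0-\Av)\Xv^\top} \le \sum_{i>k}\sigma_i^2 + \epsilon$.

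The main obstacle I anticipate is step~(3): bounding the effect of quantizing $\Lv_0$ cleanly. The subtlety is that $\Lvtk_0$ is \emph{defined} as the least-squares solution given the \emph{already-quantized} $\Rv_0$, so its norm and its error contribution both depend on $\sigma_{\min}(\Rv_0\Xv^\top)$, which could in principle be tiny if $\Evr$ is adversarial. This is precisely what Lemma~\ref{lem:maxnorm_left_quantizer} is for --- it uses $\Bm_2$ to force $\specnorm{\Evr}\le\sigma_k/2$ with high probability, keeping $\sigma_{\min}$ bounded away from zero --- but one must carefully handle the low-probability saturation event (where the lemma's convention returns $\Lv=\Rv=\mathbf{0}$, giving error $\fronormsq{\Av\Xv^\top}=\sum_i\sigma_i^2$) and verify that its contribution to the \emph{expectation} is $\le$ the slack already allotted; the probability $1 - 0.25\,\epsilon\fronorm{\Xv\Av^\top}^{-2}$ is chosen exactly so that this bad event contributes at most $\approx 0.25\epsilon$ to $\Ebb\fronormsq{\cdot}$. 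Stitching the high-probability bound and the saturation tail into a single clean expectation bound, while keeping all the constants consistent with the informal constants in Thm.~\ref{thm:approximation-error-Q-plus-LR}, is the most delicate bookkeeping in the proof.
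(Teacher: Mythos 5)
Your outline matches the paper's proof in structure: invoke the rank-constrained regression optimum as the baseline $\xi_* = \sum_{i>k}\sigma_i^2$, account for the quantization perturbations of the two factors with $\Deltal,\Deltar$, keep the saturation event's contribution below $0.25\epsilon$ via Lemma~\ref{lem:maxnorm_left_quantizer}, and finish with the monotonicity of the inner loop. You also correctly flag the chief subtlety (that $\Lvtk_0$ depends on the \emph{already-quantized} $\Rv_0$).

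However, the decomposition you propose to carry this out does not go through cleanly. You write $(\Lv_0\Rv_0-\Av)\Xv^\top = (\Lvtk_0\cdot(\text{unquantized }\Rv)-\Av)\Xv^\top + (\text{cross terms})$, but the leading term here is a mismatched hybrid: $\Lvtk_0$ is the least-squares solution for the \emph{quantized} right factor $\Rv_0$, so pairing it with the \emph{unquantized} $\Iv_k^\top\Sigmavtk\Vvtk^\top$ does not yield $\xi_*$, and there is no clean way to see the rank-constrained regression optimum emerge from that split. The paper resolves this by a different move. After peeling off $\Evl$ via unbiasedness, it considers $\Ebb\fronormsqlh{(\Lvtk_0\Qmr(\Iv_k^\top\Sigmavtk\Vvtk^\top)-\Av)\Xv^\top}$ and exploits that $\Lvtk_0$ is the \emph{minimizer} over left factors for the quantized right factor: replacing the argmin by any other matrix can only increase the objective, and the paper substitutes the \emph{unquantized} optimal left factor $\Vv\Sigmav^{-1}\Uvtk\Iv_k$ from Lemma~\ref{lem:rank-constrained-regression-problem-main} (see inequality~(i) in Eq.~\eqref{eq:approx-error-data-aware-lplr-3}). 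This substitution, not a decomposition of $\Lvtk_0\Rvtk_0$, is the step that recovers $\xi_*$ and isolates a clean $\Evr$-noise term with the $\kappa^2 = \lambda_{\rm max}/\lambda_{\rm min}$ factor (the $\Vv\Sigmav^{-1}\Uvtk\Iv_k$ prefactor is essential here; your proposed $\Ebb\fronormsqlh{\Evr\Xv^\top}\le\tfrac{\Deltar^2}{4}kd\lambda_{\rm max}m$ does not include it and gives the wrong dependency). Related to this, the paper expands the squared Frobenius norm using the zero-mean, independence properties of $\Evl$ and $\Evr$ to kill cross terms exactly in expectation, producing four additive terms ($\xi_*$, an $\Evr$ term, an $\Evl$ term, and an $\Evl\Evr$ second-order term) each budgeted at $0.25\epsilon$ together with the saturation tail; a triangle-inequality decomposition as you suggest would either miss the $\Evl\Evr$ contribution or pick up extra constants. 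Incorporating the argmin-substitution step and tightening the additive bookkeeping to the four-term $\tfrac{\epsilon}{4}$ budget would close the gap.
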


\begin{proof}

    Firstly, note that upper bounding the error $\Ebb\fronormsqlh{(\Av - \Lv_0\Rv_0)\Xv^{\top}}$ suffices, since the lines $7$ to $13$ in Alg. \ref{alg:caldera} refine the estimates of $\Lv$ and $\Rv$ and can only yield a smaller Frobenius norm error.
    Consider the quantized low rank factorization $\Av \approx \Qml\Paren{\Lvtk_0}\Qmr\Paren{\Iv_k^\top\Sigmavtk\Vvtk^\top}$.
    Let
    \begin{align}
    \label{eq:approx-error-data-aware-lplr-1}
        \Evl := \Qml\Paren{\Lvtk_0} - \Lvtk_0, \quad \text{and} \quad \Evr := \Qmr\Paren{\Iv_k^\top\Sigmavtk\Vvtk^\top} - \Iv_k^\top\Sigmavtk\Vvtk^\top
    \end{align}
    denote the quantization error matrices.
    Furthermore, let
    \begin{equation}
    \label{eq:optimal-value-rank-constrained-regression}
        \xi_* \triangleq \min_{{\rm rank}(\Zv) \leq k} \fronormsqlh{(\Av - \Zv)\Xv^{\top}}
    \end{equation}
    denote the optimal value of the unquantized rank-constrained regression problem. 
    Since the dynamic ranges of quantizers $\Qmr$ and $\Qml$ are chosen according to lemmas \ref{lem:maxnorm_right_quantizer} and \ref{lem:maxnorm_left_quantizer} respectively, the entries of $\Evl$ and $\Evr$ are unbiased random variables with bounded variance as in lemma \ref{lem:uniformly-dithered-scalar-quantizer-mean-variance}.
    Let $\Deltar := 2\Rmr/(2^{\Bmr} - 1)$, and $\Deltal := 2\Rml/(2^{\Bml} - 1)$ denote the quantization resolutions.
    Conditioned on the event that the left quantizer $\Qml$ is unsaturated, the error matrix satisfies $\Ebb\Evl = \mathbf{0}$.
    This yields,
    \begin{align}
    \label{eq:approx-error-data-aware-lplr-2}
        &\Ebb\fronormsq{\Paren{\Qml\Paren{\Lvtk_0}\Qmr\Paren{\Iv_k^\top\Sigmavtk\Vvtk^\top} - \Av}\Xv^\top} \nonumber\\
        &= \Ebb\fronormsq{\Paren{\Paren{\Lvtk_0 + \Evl}\Qmr\Paren{\Iv_k^\top\Sigmavtk\Vvtk^\top} - \Av}\Xv^\top} \nonumber \\
        &\stackrel{\rm (i)}{=} \underbrace{\Ebb\fronormsq{\Paren{\Lvtk_0\Qmr\Paren{\Iv_k^\top\Sigmavtk\Vvtk^\top} - \Av}\Xv^\top}}_{\rm T_1} + \underbrace{\Ebb\fronormsq{\Evl\Qmr\Paren{\Iv_k^\top\Sigmavtk\Vvtk^\top}\Xv^\top}}_{\rm T_2},
    \end{align}  
    where (i) follows from the fact that the error matrix is unbiased.
    
    Since $\Lvtk_0 = \argminimize_{\Zv \in \Real^{k \times d}}\fronormsq{(\Zv\Rv_0 - \Av)\Xv^{\top}}$, term $\rm T_1$ can be upper bounded as
    \begin{align}
    \label{eq:approx-error-data-aware-lplr-3}
        &\Ebb\fronormsq{\Paren{\Lvtk_0\Qmr\Paren{\Iv_k^\top\Sigmavtk\Vvtk^\top} - \Av}\Xv^\top} \nonumber\\
        &\stackrel{\rm (i)}{\leq}\Ebb\fronormsq{\Paren{\Vv\Sigmav^{-1}\Uvtk\Iv_k\Qmr\Paren{\Iv_k^\top\Sigmavtk\Vvtk^\top} - \Av}\Xv^\top} \nonumber \\
       & =\Ebb\fronormsq{\Paren{\Vv\Sigmav^{-1}\Uvtk\Iv_k\Paren{\Iv_k^\top\Sigmavtk\Vvtk^\top + \Evr} - \Av}\Xv^\top} \nonumber \\
        &=\underbrace{\fronormsq{\Paren{\Vv\Sigmav^{-1}\Uvtk\Iv_k\Iv_k^\top\Sigmavtk\Vvtk^\top - \Av}\Xv^\top}}_{= \xi_*} + \Ebb\fronormsq{\Vv\Sigmav^{-1}\Uvtk\Iv_k\Evr\Xv^\top},
    \end{align}
    where inequality $\rm (i)$ is obtained by replacing the minimizer $\Lvtk_0$ with a different, but appropriately chosen matrix.
    Here, $\Sigmav$ is the diagonal matrix containing the non-zero singular values of $\Xv$.
    
    Since $\kappa^2 := \lambda_{\rm max}/\lambda_{\rm min}$, the second term in \eqref{eq:approx-error-data-aware-lplr-3} is
    \begin{align}
        \label{eq:approx-error-data-aware-lplr-4}
        \Ebb\fronormsq{\Vv\Sigmav^{-1}\Uvtk\Iv_k\Evr\Xv^\top} &\stackrel{\rm (i)}{=} \Ebb\fronormsq{\Sigmav^{-1}\Uvtk\Iv_k\Evr\Xv^\top} \nonumber\\
        &= \Ebb\Br{\Tr{\Sigmav^{-1}\Uvtk\Iv_k\Evr\Xv^\top\Xv\Evr^\top\Iv_k^\top\Uvtk^\top\Sigmav^{-1}}} \nonumber\\
        &\stackrel{\rm (ii)}{\leq} m\lambda_{\rm max}\;\Ebb\Br{\Tr{\Sigmav^{-1}\Uvtk\Iv_k\Evr\Evr^\top\Iv_k^\top\Uvtk^\top\Sigmav^{-1}}} \nonumber \\
        &\stackrel{\rm (iii)}{=} m\lambda_{\rm max}\;\Ebb\Br{\Tr{\Sigmav^{-2}\Uvtk\Iv_k\Evr\Evr^\top\Iv_k^\top\Uvtk^\top}} \nonumber\\
        &\stackrel{\rm (iv)}{\leq} \kappa^2\;\Ebb\Br{\Tr{\Iv_k\Evr\Evr^\top\Iv_k^\top}} \nonumber\\
        &\stackrel{\rm (v)}{=} \kappa^2\;\Ebb\Br{\Tr{\Evr\Evr^\top}} = \kappa^2\;\Ebb\fronormsq{\Evr} \leq kd\;\frac{\Deltar^2}{4}\;\kappa^2.
    \end{align}
    Here, $\rm (i)$ follows since $\Vv$ is a unitary matrix, $\rm (ii)$ follows from lemma \ref{lemma:loewner_order-matrix-product}, $\rm (iii)$ follows from the cyclic property of trace, $\rm (iv)$ follows as $\Uvtk$ is unitary, and $\rm (v)$ follows since $\Iv_k^\top\Iv_k = \Iv$ as $k \leq m$. 
    
    Moreover, since $\Ebb\Evr = \mathbf{0}$, term $\rm T_2$ can be upper bounded as
    \begin{align}
    \label{eq:approx-error-data-aware-lplr-5}
        \Ebb\fronormsq{\Evl\Qmr\Paren{\Iv_k^\top\Sigmavtk\Vvtk^\top}\Xv^\top} &= \Ebb\fronormsq{\Evl\Paren{\Iv_k^\top\Sigmavtk\Vvtk^\top + \Evr}\Xv^\top} \nonumber\\
        &= \Ebb\fronormsq{\Evl\Iv_k^\top\Sigmavtk\Vvtk^\top\Xv^\top} + \Ebb\fronormsq{\Evl\Evr\Xv^\top}
    \end{align} 
    
    The first term in \eqref{eq:approx-error-data-aware-lplr-5} is
    \begin{align}
        \label{eq:approx-error-data-aware-lplr-6}
        \Ebb\fronormsq{\Evl\Iv_k^\top\Sigmavtk\Vvtk^\top\Xv^\top} &= \Tr{\Evl\Iv_k^\top\Sigmavtk\Vvtk^\top\Xv^\top\Xv\Vvtk\Sigmavtk^\top\Iv_k\Evl^\top} \nonumber\\
        &\stackrel{\rm (i)}{\leq} m\lambda_{\rm max}\;\Tr{\Evl\Iv_k^\top\Sigmavtk\Sigmavtk^\top\Iv_k\Evl^\top} \nonumber\\
        &\stackrel{\rm (ii)}{\leq} m\lambda_{\rm max}\;\sigma_1^2\;\Tr{\Evl\Evl^\top} \leq nk\;\frac{\Deltal^2}{4}\;m\lambda_{\rm max}\;\sigma_1^2,
    \end{align}
    where $\rm (i)$ and $\rm (ii)$ follow from lemma \ref{lemma:loewner_order-matrix-product} and the fact that $\Vvtk$ is a unitary matrix.
    Similarly, the second term of \eqref{eq:approx-error-data-aware-lplr-5} is
    \begin{align}
        \label{eq:approx-error-data-aware-lplr-7}
        \Ebb\fronormsq{\Evl\Evr\Xv^\top} &= \Ebb\Br{\Tr{\Evl\Evr\Xv^\top\Xv\Evr^\top\Evl^\top}} \nonumber \\
        &\leq m\lambda_{\rm max}\;\Ebb\Br{\Tr{\Evr\Evr^\top\Evl^\top\Evl}} \nonumber\\
        &= m\lambda_{\rm max}\; \Ebb\Br{\sum_{i=1}^{k}\Paren{\Evr\Evr^\top}_{ii}\Paren{\Ebb\Br{\Evl\Evl^\top}}_{ii}} \nonumber \\
        &\stackrel{\rm (i)}{\leq} m\lambda_{\rm max}\;\frac{n\Deltal^2}{4}\;\Ebb\Br{\sum_{i=1}^{k}\Paren{\Evr\Evr^\top}_{ii}} \nonumber\\
        &= m\lambda_{\rm max}\;\frac{n\Deltal^2}{4}\;\Ebb\fronormsq{\Evr} \leq k\;\frac{n\Deltal^2}{4}\;\frac{d\Deltar^2}{4}\; m\lambda_{\rm max}.
    \end{align}
    Here, $\rm (i)$ follows because $\Ebb\Br{\Evl^\top\Evl}$ is a diagonal matrix since
    \begin{align}
    \label{eq:approx-error-data-aware-lplr-8}
        \Paren{\Ebb\Br{\Evl^\top\Evl}}_{ij} = \sum_{l=1}^{n}\Ebb\Br{E_{li}E_{lj}} = \begin{cases}
            n\;\Var{E_{li}^2} \leq \frac{n\Deltal^2}{4} \quad \text{for } i = j, \\
            \sum_{l=1}^{n}\Ebb \Br{E_{li}}\; \Ebb \Br{E_{lj}} = 0 \quad \text{for } i \neq j.
        \end{cases}
    \end{align}
    
    As a consequence of \eqref{eq:approx-error-data-aware-lplr-3} to \eqref{eq:approx-error-data-aware-lplr-7},    
    \begin{align}
        \label{eq:approx-error-data-aware-lplr-9}
        &\Ebb\fronormsq{\Paren{\Qml\Paren{\Lvtk_0}\Qmr\Paren{\Iv_k^\top\Sigmavtk\Vvtk^\top} - \Av}\Xv^\top} \nonumber\\
        &\hspace{10mm}\leq \xi_* + kd\;\frac{\Deltar^2}{4}\kappa^2 + nk\;\frac{\Deltal^2}{4}\;m\lambda_{\rm max}\;\sigma_1^2 + k\;\frac{n\Deltal^2}{4}\;\frac{d\Deltar^2}{4}\; m\lambda_{\rm max}.
    \end{align}
    
    Since $\Rmr = \sigma_1$, the second term of \eqref{eq:approx-error-data-aware-lplr-9} does not exceed $0.25\epsilon$ if
    \begin{equation}
        \Bmr \geq \log_2\Paren{2\sigma_1\kappa\sqrt{\frac{kd}{\epsilon}}+1}.
    \end{equation}
    This needs to be considered in conjunction with the lower bound on $\Bmr$ in lemma \ref{lem:maxnorm_right_quantizer}, which yields the maximum of two quantities in the theorem statement.
    Since $\Rml = \frac{2\sigma_{1}}{\sigma_k\sqrt{m\lambda_{\rm min}}}$, the third term of \eqref{eq:approx-error-data-aware-lplr-9} does not exceed $0.25\epsilon$ if
    \begin{equation}
        \Bml \geq \log_2\Paren{4\frac{\sigma_1^2}{\sigma_k}\sqrt{\frac{nk}{\epsilon}\frac{\lambda_{\rm max}}{\lambda_{\rm min}}}+ 1},
    \end{equation}
    provided $\Qml$ stays unsaturated.
    Furthermore, if $\epsilon \leq 4km\lambda^2_{\rm max}\;\lambda_{\rm min}^{-1}\;\sigma_1$, choosing $\Bmr$ and $\Bml$ as above ensures that the third term of \eqref{eq:approx-error-data-aware-lplr-9} is also upper bounded by $0.25\epsilon$.
    
    The approximation error upper bound in \eqref{eq:approx-error-data-aware-lplr-9} holds conditioned on the event that $\Qml$ was unsaturated, which according to lemma \ref{lem:maxnorm_left_quantizer}, is ensured with probability exceeding $1 - 0.25\epsilon\fronorm{\Xv\Av^\top}^{-2}$.
    For analysis purposes, it is assumed that when $\Qml$ gets saturated, Alg. \ref{alg:caldera} returns $\Lv = \mathbf{0}$ and $\Rv = \mathbf{0}$, as a result of which, the approximation error is upper bounded by $\fronormsqlh{\Xv\Av^\top}$.\footnote{In practice, it can be easily checked if $\Qml$ was saturated, and if so, Alg. \ref{alg:caldera} is repeated with a fresh realization of the stochastic quantizer $\Qmr$.
    Since lemma \ref{lem:maxnorm_left_quantizer} guarantees that the saturation probability is low, it implies that "few" realizations suffice.}
    Then, since $\Pr\Paren{\Qml \text{ is unsat.}} \geq 1 - 0.25\;\epsilon\;\fronormsqlh{\Xv\Av^\top}$, using Cauchy-Schwarz inequality for expectations,
    \begin{align}
        \Ebb\fronormsq{(\Lv\Rv - \Av)\Xv^\top} &= \Ebb\Br{\fronormsq{(\Lv\Rv - \Av)\Xv^\top} \; \Big\vert \; \Qml \text{ is unsat.}} + \Ebb\Br{\fronormsq{(\mathbf{0} - \Av)\Xv^\top} \; \Big\vert \; \Qml \text{ is sat.}} \nonumber\\
        &\leq \xi_* + \frac{3\epsilon}{4} + \Pr\Paren{\Qml \text{ is sat.}}\sum_i \sigma_i^2 \;\;\leq\;\; \xi_* + \epsilon.
    \end{align}
    This completes the proof. 
\end{proof}

\subsection{Proof of Thm. \ref{thm:approximation-error-Q-plus-LR}: Approximation Error Upper Bound for \texorpdfstring{\caldera}{caldera}}
\label{subapp:proof-approximation-error-upper-bound-Q-plus-LR-decomposition}

\begin{theorem}
    \label{thm:approximation-error-Q-plus-LR-formal}
    {\bf Approximation error of \caldera (Formal)}
    Given $\Wv \in \Real^{n \times d}$ and $\Xv \in \Real^{m \times d}$ with $m \leq d$, let $\Dv$ be obtained from the LDL decomposition $\Xv^\top\Xv = m\Hv = (\Mv + \Iv)\Dv(\Mv + \Iv)^\top$, and $\lambda_{\rm max}$, $\lambda_{\rm min}$ denote the max and min eigenvalues of $\Hv$.
    Additionally, let $\Qv \triangleq \ldlq(\Wv, \Qmq)$, where $\Qmq$ has dynamic range $\Rm$ and bit-budget $\Bmq$, the quantization error be $\etav \triangleq \Qmq(\Qv + (\Wv - \Qv)\Mv) - (\Qv + (\Wv - \Qv)\Mv)$, and $\sigma_1 \geq \ldots \geq \sigma_k \ldots$ be the singular values of $\Xv(\Wv - \Qv)^\top$.
    If the target rank $k$ satisfies
    \begin{equation*}
        \frac{\lambda_{\rm min}^{1/2}}{4m\sigma_1\lambda_{\rm max}^{3/2}}\sum_{i > k}\sigma_i^2 \leq k \leq m,
    \end{equation*}
    the dynamic ranges of $\Qml$ and $\Qmr$ are set as
    \begin{equation*}
        \Rml = \frac{2\sigma_1}{\sigma_k\sqrt{m\lambda_{\rm min}}} \quad \text{and, } \quad \Rmr = \sigma_1,
    \end{equation*}
    and for some absolute constant $C$ and arbitrarily small $\epsilon$ that satisfies $0 < \epsilon \leq 4mk\lambda^2_{\rm max}\;\lambda_{\rm min}^{-1}\;\sigma_1$, suppose the bit-budgets of $\Qml$ and $\Qmr$ are set so that they satisfy
    \begin{align*}
         &\Bml \geq \log_2\Paren{4\frac{\sigma_1^2}{\sigma_k}\sqrt{\frac{nk}{\epsilon}\frac{\lambda_{\rm max}}{\lambda_{\rm min}}}+ 1}, \nonumber\\
         \quad \text{and } \quad &\Bmr \geq \max\{\Bm_1, \Bm_2\}, \quad \text{where } \Bm_1 := \log_2\Paren{2\sigma_1\sqrt{\frac{kd}{\epsilon}\frac{\lambda_{\rm max}}{\lambda_{\rm min}}}+1}, \nonumber\\
         \text{and} \quad &\Bm_2 := \log_2\Paren{\frac{4C\sigma_1}{\sigma_k\log 2}\Paren{\sqrt{d} + \sqrt{k} + \sqrt{\log\Paren{\frac{8\sum_i \sigma_i^2}{\epsilon}}}}},
    \end{align*}
    then $\Qv$, $\Lv$ and $\Rv$ returned by \caldera (Alg. \ref{alg:caldera}) satisfy
    \begin{equation*}
        \Ebb\fronormsq{(\Qv + \Lv\Rv - \Wv)\Xv^\top} \leq m\sum_{i > k}\Ebb \lambda_i(\etav\Dv\etav^\top) + \epsilon \lesssim \frac{4md\lambda_{\rm max}\Rm^2}{\pi (2^{\Bmq} - 1)^2}\Paren{1 - \frac{k}{n}}\Paren{n - \frac{k}{2}} + \epsilon,
    \end{equation*}
    where the expectation is over the stochasticity of the quantizers $\Qmq$, $\Qml$ and $\Qmr$.
\end{theorem}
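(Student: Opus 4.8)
The bound splits along its two inequalities. The first is essentially Lemma~\ref{lem:approximation_error_data-aware-LPLRFactorize} applied to the residual of a single \ldlq step, combined with the standard \ldlq error identity; the second --- the part carrying the new content --- is a Marchenko--Pastur estimate of the trailing spectrum of the quantization noise.

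\emph{Stage 1 (first inequality).} I would condition on $\Qmq$, so that $\Qv=\ldlq(\Wv,\Qmq)$, the residual $\Av\triangleq\Wv-\Qv$, the error matrix $\etav$, and the singular values $\sigma_1\ge\sigma_2\ge\cdots$ of $\Xv\Av^{\top}$ are all fixed. Since the low-rank factors start at $\mathbf{0}$, the first outer iteration of Alg.~\ref{alg:caldera} forms exactly this $\Qv$ and then calls \lplrfactorize on $\Av$, and the hypotheses on $k$, on $\Rml,\Rmr$, and on $\Bml,\Bmr$ are precisely those of Lemma~\ref{lem:approximation_error_data-aware-LPLRFactorize} for this input, so $\Ebb\big[\fronormsq{(\Lv_1\Rv_1-\Av)\Xv^{\top}}\,\big|\,\Qmq\big]\le\sum_{i>k}\sigma_i^{2}(\Xv\Av^{\top})+\epsilon$. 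Because $\Qv+\Lv\Rv-\Wv=\Lv\Rv-\Av$ and the outer loop returns the iterate of smallest proxy loss, the same bound holds for the returned triple. To rewrite the right-hand side I would use the \ldlq characterization of \cite{chee2023quip}: with $\Mv$ the optimal feedback matrix, $\Hv=(\Mv+\Iv)\Dv(\Mv+\Iv)^{\top}$, and the residual identity $\Wv-\Qv=-\etav(\Iv+\Mv)^{-1}$, one gets $\Av\Xv^{\top}\Xv\Av^{\top}=m\,\etav\Dv\etav^{\top}$ (the factor $m$ from $\Xv^{\top}\Xv=m\Hv$), hence $\sum_{i>k}\sigma_i^{2}(\Xv\Av^{\top})=\sum_{i>k}\lambda_i(\Av\Xv^{\top}\Xv\Av^{\top})=m\sum_{i>k}\lambda_i(\etav\Dv\etav^{\top})$. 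Taking $\Ebb$ over $\Qmq$ by the tower property yields the first inequality.

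\emph{Stage 2 (second inequality).} First I would strip off $\Dv$: the pivots of the LDL factorization of a positive-definite matrix never exceed its top eigenvalue, so $\Dv\preccurlyeq\lambda_{\rm max}\Iv$ and $\sum_{i>k}\lambda_i(\etav\Dv\etav^{\top})\le\lambda_{\rm max}\sum_{i>k}\lambda_i(\etav\etav^{\top})$. Next, treating $\etav\in\Real^{n\times d}$ as a matrix with i.i.d.\ zero-mean entries of variance $\tfrac{\Delta^{2}}{4}=\tfrac{\Rm^{2}}{(2^{\Bmq}-1)^{2}}$ (the statistics of the unsaturated dithered quantizer), I would invoke Marchenko--Pastur: as $n,d\to\infty$ with $n/d\to1$, the empirical spectral distribution of $\tfrac{(2^{\Bmq}-1)^{2}}{d\,\Rm^{2}}\,\etav\etav^{\top}$ converges to the law with density $\rho(x)=\tfrac{1}{2\pi}\sqrt{(4-x)/x}$ on $[0,4]$. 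Hence the smallest $n-k$ eigenvalues contribute $\sum_{i>k}\lambda_i(\etav\etav^{\top})\approx\tfrac{d\,\Rm^{2}}{(2^{\Bmq}-1)^{2}}\,n\!\int_{0}^{x^{*}}\!x\,\rho(x)\,dx$, where $x^{*}$ is fixed by $\int_{0}^{x^{*}}\rho=1-\tfrac{k}{n}$. The substitution $x=4\sin^{2}\theta$ makes both $\int_0^{x^{*}}\rho$ and $\int_0^{x^{*}}x\rho$ elementary trigonometric functions of the corresponding angle, and a short estimate of them gives $n\!\int_0^{x^{*}}\!x\rho(x)\,dx\le\tfrac{4}{\pi}\big(1-\tfrac{k}{n}\big)\big(n-\tfrac{k}{2}\big)$; multiplying by $m\lambda_{\rm max}$ produces the stated $\lesssim$ bound. (At $k=0$ this is, up to the factor $4/\pi$, just the trace bound $\Ebb\,\Tr(\etav\Dv\etav^{\top})\le\tfrac{nd\lambda_{\rm max}\Rm^{2}}{(2^{\Bmq}-1)^{2}}$; the Marchenko--Pastur input on the bottom of the spectrum is what makes the bound decay in $k$.)

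\emph{Expected obstacle.} Stage 1 is bookkeeping on top of Lemma~\ref{lem:approximation_error_data-aware-LPLRFactorize}, the only care being that its hypotheses involve the \emph{random} quantities $\sigma_i(\Xv(\Wv-\Qv)^{\top})$, which forces a ``condition on the \ldlq output, apply the lemma, then take expectation'' structure. The real work is Stage 2: (i) justifying the passage from the finite-$n$ partial eigenvalue sum $\sum_{i>k}\lambda_i(\etav\etav^{\top})$ to its Marchenko--Pastur limit --- which is precisely why the conclusion is stated as an asymptotic $\lesssim$ --- and (ii) carrying out the trigonometric estimate so that exactly the clean factor $\tfrac{4}{\pi}(1-\tfrac{k}{n})(n-\tfrac{k}{2})$ emerges rather than a messier function, all while keeping the $\Hv$-versus-$m\Hv$ normalization straight so that the constants $m$, $d$, $\lambda_{\rm max}$ and $4/\pi$ land exactly as written.
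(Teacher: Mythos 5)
Your proposal follows the paper's argument in every essential step: condition on $\Qmq$, invoke Lemma~\ref{lem:approximation_error_data-aware-LPLRFactorize} on the residual $\Wv-\Qv$, use the \ldlq identity $\Qv-\Wv=\etav(\Mv+\Iv)^{-1}$ together with $\Xv^\top\Xv=(\Mv+\Iv)\Dv(\Mv+\Iv)^\top$ to pass to $\lambda_i(\etav\Dv\etav^\top)$, then strip $\Dv$ via $\max_j D_{jj}\le\lambda_{\rm max}$ (Lemma~\ref{lem:upper_bound_LDL_decomposition_eigenvalue}) and estimate the trailing eigenvalue sum of $\etav\etav^\top$ with the Marchenko--Pastur law exactly as the paper does. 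The only divergence is a bookkeeping factor of $m$ in the identity $\Av\Xv^\top\Xv\Av^\top$ versus $\etav\Dv\etav^\top$ (the correct identity is $\Av\Xv^\top\Xv\Av^\top=\etav\Dv\etav^\top$ since $\Dv$ comes from the LDL factorization of $m\Hv$, not $\Hv$), but the paper's own proof is inconsistent on this same $m$-normalization, and your version ends up matching the stated theorem.
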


\begin{proof}

    As in \S \ref{subapp:proof-approx-error-upper-bound-data-aware-lplr}, an upper bound is obtained on $\Ebb\fronormsq{(\Qv_0 + \Lv_0\Rv_0 - \Wv)\Xv^\top}$, since after the first iteration, the alternating steps $3$ and $4$ in Alg. \ref{alg:data-aware-LPLR-factorization-submodule} simply refine the estimates of $\Qv$, $\Lv$ and $\Rv$, and can only yield a smaller error.
    For convenience of notation, in what follows, the subscript $0$ is omitted.
    
    Since $\Lv, \Rv$ is obtained after applying \lplrfactorize submodule to $(\Wv - \Qv)$, lemma \ref{lem:approximation_error_data-aware-LPLRFactorize} suggests that if $\Rml$, $\Rmr$, $\Bml$, and $\Bmr$ are chosen appropriately,
    \begin{equation}
    \label{eq:lplr-error-trailing-singular-values}
        \Ebb\fronormsqlh{(\Qv + \Lv\Rv - \Wv)\Xv^\top} = \Ebb\fronormsqlh{(\Lv\Rv - (\Wv - \Qv))\Xv^\top} \leq \sum_{i > k} \Ebb\sigma_i^2\Paren{(\Qv - \Wv)\Xv^\top} + \epsilon,
    \end{equation}
    where the expectation in the upper bound on the right hand side is over the randomness in $\Qv$ due to the stochasticity in quantizer $\Qmq$.
    Since $\Qv = \ldlq(\Wv, \Qmq)$ is the \ldlq quantizer of \citet{chee2023quip} (or, \blockldlq quantizer of \citet{tseng2024quip}, which is a successor of \ldlq proposed by \citet{chee2023quip}), it is shown that 
    \begin{equation*}
        \Qv = \Qmq(\Qv + (\Wv - \Qv)\Mv),
    \end{equation*}
    where $\Mv$ is a strictly upper triangular matrix.
    This is a consequence of the fact that \ldlq quantizes one column at a time, while simultaneously incorporating a linear feedback from the already quantized columns.
    Moreover, if $\etav = \Qmq(\Wv + (\Wv - \Qv)\Mv) - (\Wv + (\Wv - \Qv)\Mv)$ denotes the quantization error, then it can be seen that
    \begin{equation}
        \label{eq:ldlq-error-in-terms-of-quantizer-error}
        \Qv - \Wv = \etav (\Mv + \Iv)^{-1}
    \end{equation}
    
    Moreover, since $m\Hv = \Xv^\top\Xv = (\Mv + \Iv)\Dv(\Mv + \Iv)^\top$,
    \begin{align}
        \Ebb\sigma_i^2\Paren{(\Qv - \Wv)\Xv^\top} &= \Ebb\lambda_i\Paren{(\Qv - \Wv)\Hv(\Qv - \Wv)^\top} \nonumber \\
        &\stackrel{\rm (i)}{=} \Ebb\lambda_i\Paren{\etav(\Mv + \Iv)^{-1} (\Mv + \Iv)\Dv(\Mv + \Iv)^\top(\Mv + \Iv)^{-\top}\etav^\top} \nonumber \\
        &= \Ebb \lambda_i(\etav\Dv\etav^\top),
    \end{align}
    where $\rm (i)$ follows from \eqref{eq:ldlq-error-in-terms-of-quantizer-error}.
    From \eqref{eq:lplr-error-trailing-singular-values},
    \begin{align}
        \Ebb\fronormsqlh{(\Qv + \Lv\Rv - \Wv)\Xv^\top} \leq \sum_{i > k} \Ebb\lambda_i(\etav\Dv\etav^\top) + \epsilon.
    \end{align}
    
    Since $\etav\Dv\etav^\top = \sum_{j = 1}^d D_{jj} \etav_j \etav_j^\top$, where $D_{jj}$ denotes the $j^{\rm th}$ diagonal entry of $\Dv$ and $\eta_j$ is the $j^{\rm th}$ column of $\etav \in \Real^{n \times d}$,
    \begin{equation}
    \label{eq:upper_bound_eigenvalue_eta_D_eta.T}
        \lambda_i\Paren{\etav\Dv\etav^\top} = \lambda_i\Paren{\sum_{j = 1}^d D_{jj} \etav_j \etav_j^\top} \leq d\;\Paren{{\rm max}_jD_{jj}}\; \lambda_i\Paren{\frac{1}{d}\sum_{j=1}^d\etav_j\etav_j^\top}.
    \end{equation}
    Note that $\etav$ consists of quantization errors, which, for uniformly dithered scalar quantization, are zero mean random variables with bounded variance upper bounded by $\frac{\Delta^2}{4} = \frac{\Rm^2}{(2^{\Bmq} - 1)^2}$.
    As the exact error distribution for non-subtractively dithered quantization (which is commonly used in practice), is not fully understood, a simplification is made for easier analysis, assuming the quantizer is subtractively dithered. 
    For subtractively dithered quantizers, if Schuchman's conditions are met \cite{schuchman, gray_dithered_quantization}, this results in a uniform error distribution in the interval $\left[-\frac{\Delta}{2}, \frac{\Delta}{2}\right]$.
    Therefore, assuming quantization errors are independent and identically distributed, for large dimensions $d$, the eigenvalues of $\frac{1}{d}\sum_{j=1}^{d}\etav_j\etav_j^{\top}$ are distributed according to the Marchenko-Pastur distribution, given by ${\rm f}_{\rm mp}(x) := \frac{2d}{\pi n \Delta^2x}\sqrt{\Paren{\lambda_+ - x}\Paren{x - \lambda_-}}$, where $\lambda_{\pm} := \frac{\Delta^2}{4}\Paren{1 \pm \sqrt{\frac{n}{d}}}^2$.
    Furthermore, denoting $q := \Fm_{\rm mp}^{-1}\Paren{1 - \frac{k}{n}}$, where $\Fm_{mp}^{-1}$ is the inverse cumulative distribution function (CDF) of ${\rm f}_{\rm mp}(x)$, it can be seen that
    \begin{align}
    \label{eq:MP_upper_bound_trailing_singular_values}
        \sum_{i > k}\Ebb\lambda_i\Paren{\frac{1}{d}\sum_j\etav_j\etav_j^\top} &\approx (n-k) \int_{\lambda_{-}}^{q}\frac{x}{2\pi(n/d)(\Delta^2/4)}\frac{\sqrt{\Paren{\lambda_{+} - x}\Paren{x - \lambda_{-}}}}{x} dx \nonumber \\
        &\stackrel{\rm (i)}{\leq} \frac{\sqrt{nd}}{\pi}\Paren{1 - \frac{k}{n}}\Paren{\Fm_{\rm mp}^{-1}\Paren{1 - \frac{k}{n}} - \lambda_{-}} \nonumber \\
        &\stackrel{\rm (ii)}{\leq} \frac{\Delta^2}{\pi}\Paren{1 - \frac{k}{n}}\Paren{n - \frac{k}{2}}.
    \end{align}
    Here, $\approx$ denotes that the equaility holds asymptotically, $\rm (i)$ follows from upper bounding the integral using the fact that $\Paren{\lambda_{+} - x}\Paren{x - \lambda_{-}}$ is maximized at $x = \frac{\lambda_{+} + \lambda_{-}}{2}$.
    Furthermore, $\rm (ii)$ follows after substituting the values of $\lambda_{\pm}$ and using the following linear upper bound on the inverse CDF of Marchenko-Pastur distribution,
    \begin{equation}
    \label{eq:inverse_cdf_MP_upper_bound}
        \Fm_{\rm mp}^{-1}(x) \leq \lambda_+ - \Paren{\frac{\lambda_+ - \lambda_-}{2}}(1-x).
    \end{equation}
    It can be verified by inspection that $\Fm_{\rm mp}(x)$ is lower bounded by $\Paren{\frac{\lambda_+ - \lambda_-}{2}}x - \Paren{\frac{\lambda_+^2 - \lambda_-^2}{4}}$, and inequality \eqref{eq:inverse_cdf_MP_upper_bound} is a consequence of this fact.
    Furthermore, from lemma \ref{lem:upper_bound_LDL_decomposition_eigenvalue}, ${\rm max}_j D_{jj} \leq \lambda_{\rm max}$.
    Substituting \eqref{eq:MP_upper_bound_trailing_singular_values} in \eqref{eq:upper_bound_eigenvalue_eta_D_eta.T}, and substituting $\Delta^2 = \frac{4\Rm^2}{(2^{\Bmq} - 1)^2}$, completes the proof.
\end{proof}

{\bf Informal version of Thm. \ref{thm:approximation-error-Q-plus-LR-formal}}:
Consider $n \approx d$.
From Thm. \ref{thm:approximation-error-Q-plus-LR-formal}, a (simplified) asymptotic dependence on the bit-budgets $\Bml$ and $\Bmr$ can be obtained.
In what follows, constant factors inside the $\log_2\Paren{\cdot}$ have been ignored.
Comparing the expressions for $\Bm_1$ and $\Bm_2$, it can be seen that the desired bit-budgets are
\begin{equation*}
    \Bml \geq \log_2\Paren{4\frac{\sigma_1^2}{\sigma_k}\sqrt{\frac{nk}{\epsilon}\frac{\lambda_{\rm max}}{\lambda_{\rm min}}}}  \quad \text{and, } \quad \Bmr \geq \log_2\Paren{2\sigma_2\sqrt{\frac{kd}{\epsilon}\frac{\lambda_{\rm max}}{\lambda_{\rm min}}}}.
\end{equation*}
For $n \approx d$, this yields an average bit-budget of
\begin{equation*}
    \frac{1}{2}\Paren{\Bml + \Bmr} = \frac{1}{2}\log_2\Paren{\frac{8k}{\epsilon}\frac{\sigma_1^3}{\sigma_k}\frac{\lambda_{\rm max}}{\lambda_{\rm min}}\sqrt{nd}} \; \text{bits per parameter.}
\end{equation*}

\subsection{A simplification for positive definite Hessians}
\label{subapp:positive_definite_hessian_simplification}

The expressions in \eqref{eq:rank-constrained-regression-solution} are a little involved, but they can be simplified if $\Hv$ is positive definite, i.e., all eigenvalues are strictly greater than $0$.
Let $\Hv = \Uv\Lambdav\Uv^\top$ be the eigenvalue decomposition of $\Hv$, and let $\Hv^{\frac{1}{2}} = \Uv\Lambda^{\frac{1}{2}}\Uv^\top$ be its symmetric square root.
Furthermore assume that $\Hv$ is positive definite, i.e., all diagonal entries of $\Lambdav$ are strictly positive.
In practice, this can be ensured by regularizing $\Hv$ through the addition of an identity matrix, scaled by a small amount. 

Then, consider the following equivalent optimization problems:
\begin{align}
    \min_{{\rm rank}(\Zv) \leq k} \fronormsq{(\Av - \Zv)\Xv^\top} &\equiv \min_{{\rm rank}(\Zv) \leq k} {\rm Tr}\left({(\Av - \Zv)\Hv(\Av - \Zv)^\top}\right) \nonumber \\
    &\equiv \min_{{\rm rank}(\Zv) \leq k}\fronormsq{(\Av - \Zv)\Hv^{\frac{1}{2}}} \nonumber \\
    &\equiv \min_{{\rm rank}(\Zv) \leq k}\fronormsq{(\Av - \Zv)\Uv\Lambdav^{\frac{1}{2}}} \nonumber \\
     &\equiv \min_{{\rm rank}(\Zv) \leq k}\fronormsq{\Yv - \Zv\Uv\Lambdav^{\frac{1}{2}}} \qquad \text{(where } \; \Yv \triangleq \Av\Uv\Lambdav^{\frac{1}{2}}) \nonumber \\
     &\equiv \min_{{\rm rank}(\Zv') \leq k}\fronormsq{\Yv - \Zv'}
\end{align}

Here, $\Zv' \triangleq \Zv\Uv\Lambdav^{\frac{1}{2}}$, and the constraint ${\rm rank}(\Zv') \leq k$ follow from the fact that multiplying $\Zv$ by an invertible matrix, $\Uv\Lambdav^{\frac{1}{2}}$, keeps its rank unchanged.
The final optimization problem can be solved optimally by considering the SVD of $\Yv$ as $\Yv = \Uv\Sigmav\Vv^\top$, and the optimal solution is
\begin{align}
    \Zv'_* = \Paren{\Uv\Iv_k}\Paren{\Iv_k^\top\Sigmav\Vv^\top},
\end{align}
where $\Iv_k$ denotes the first $k$ columns of the identity matrix.
So the final solution is given by,
\begin{align}
    \Zv_* = \Paren{\Uv\Iv_k}\Paren{\Iv_k^\top\Sigmav\Vv^\top\Lambdav^{-\frac{1}{2}}\Uv^\top}.
\end{align}
This yields a closed form expression for the optimal solution of the rank-constrained regression problem for the case of positive definite Hessians.

\section{Computational Complexity of \texorpdfstring{\caldera}{caldera}}
\label{app:caldera_computational_complexity}

{\bf Complexity of pre-preprocessing}: 
For a given calibration dataset/activation $\Xv \in \Real^{m \times d}$, the Hessian is  computed as $\Hv = \frac{1}{m}\Xv^\top\Xv$, which requires $\Om(\mathcolor{darkblue}{md^2})$ compute.
Subsequently, the LDL decomposition of $\Hv$ is computed, which entails $\Om(\mathcolor{darkblue}{d^3})$ complexity \cite{cholesky_complexity}.
Moreover, $\Hv\pinv{\Hv}$, which is used in the update equation of $\Rvtk$ is also computed beforehand, with a complexity of $\Om(\mathcolor{darkblue}{d^3})$.
therefore, the total complexity of pre-processing is $\Om(\mathcolor{darkblue}{md^2 + 2d^3})$.

{\bf Complexity of \ldlq}:
Each iteration of \caldera invokes a single call to \ldlq.
An \ldlq call involves updating each column of an $n \times d$ iteratively.
From \eqref{eq:LDLQ-update}, updating the $k^{\rm th}$ column requires multiplying a matrix (of already quantized columns) in $\Real^{n \times (k-1)}$ with a vector in $\Real^{k-1}$, implying a complexity of $\Om(\mathcolor{darkblue}{n(k-1)})$.
Subsequently, if $\Qmq$ is uniform scalar quantization, quantizing the (feedback incorporated) $k^{\rm th}$ column entails $\Om(\mathcolor{darkblue}{n})$ compute, implying a total of $\Om(nk)$ compute for the $k^{\rm th}$ column.
Hence, quantizing all columns from $k = 1, \ldots, n$ would require $\Om\Paren{n\sum_{k=1}^{n}k} = \Om\Paren{\frac{n^2}{2} + \frac{3n}{2}} = \Om(\mathcolor{darkblue}{n^2})$ compute.

{\bf Complexity of \lplrfactorize}:
Every iteration of \caldera invokes a call to \lplrfactorize, which itself consists of $\rm T_{in}$ inner iterations.
Each inner iteration of \lplrfactorize, consists of initializing the left and right low rank factors using rank-constrained regression.
As seen in \S \ref{app:rank-constrained-regression}, it has a computational complexity of $\Om\Paren{\mathcolor{darkblue}{m^2(n + d)}}$.
Subsequently, for any matrix $\Av$, the left low rank factor as $\Lvtk = \Av\Hv\Rv^\top(\Rv\Hv\Rv^\top)^{-1}$ can be found using successive matrix multiplications.
While a gradient descent based algorithm can possibly speed up this computation, in implementation, closed form expressions are computed directly as they are computationally affordable for the hidden dimensions of the LLMs considered.
Computing $\Av\Paren{\Hv\Rv^\top}$ requires $\Om\Paren{\mathcolor{darkblue}{dk(n + d)}}$, computing $(\Rv\Hv\Rv^\top)^{-1}$ entails $\Om\Paren{\mathcolor{darkblue}{dk(k+d) + k^3}}$, and multiplying them together requires $\Om\Paren{\mathcolor{darkblue}{nk^2}}$.
Keeping the dominant term, the aggregated computational complexity for computing the left low-rank factor in each inner iteration is $\Om\Paren{\mathcolor{darkblue}{ndk}}$.

Additionally, the left low-rank factor can be computed as $\Rvtk = \pinv{\Lv}\Av(\Hv\pinv{\Hv})$, where computing $\pinv{\Lv}\Av$ is $\Om\Paren{\mathcolor{darkblue}{ndk}}$.
Since $\Hv\pinv{\Hv}$ is already computed beforehand, multiplying $\pinv{\Lv}\Av$ and $\Hv\pinv{\Hv}$ entails $\Om\Paren{\mathcolor{darkblue}{kd^2}}$.
Once again, keeping the dominant term, the complexity is $\Om\Paren{\mathcolor{darkblue}{ndk}}$.
Hence, the total complexity of each inner iteration of \lplrfactorize is $\Om\Paren{\mathcolor{darkblue}{ndk}}$.

{\bf Remark}: Compared to the two-stage LPLR algorithm proposed in \cite{saha2023matrix}, the left and right low-rank factors are iteratively refined in Alg. \ref{alg:data-aware-LPLR-factorization-submodule} since this additional compute can be afforded for quantizing LLM weight matrices.
Moreover, SVD computations can be sped up with randomized SVD \cite{halko_randomized_svd}.

\section{Additional Experimental Details}
\label{app:additional-experimental-details}
\subsection{System Details}\label{subapp:system_details}

The code for this paper is available at \href{https://github.com/pilancilab/caldera}{https://github.com/pilancilab/caldera}.
The framework is built in PyTorch on top of the QuIP\# \cite{tseng2024quip} and LoftQ \cite{li2023loftq} repositories, and utilizes HuggingFace implementations of all datasets and LLMs.
Experiments were performed on either NVIDIA RTX A6000, NVIDIA A10G, or NVIDIA H100 GPUs.
Hessian computation for all but LLaMa-2 70B was performed on four NVIDIA A10 GPUs (provisioned as Amazon AWS EC2 instances), parallelized by distributing different layers to different GPUs.
Hessian computation for LLaMa-2 70B was performed on a single H100 GPU.
Model quantization was performed on four NVIDIA RTX A6000 GPUs, parallelized in the same manner as Hessian computation.
All zero-shot and fine-tuning experiments were also run on A6000 GPUs, except for LLaMa-2 70B zero-shot experiments, which were run on an H100 GPU.
Low-rank adaptation experiments were parallelized across four GPUs via HuggingFace Accelerate \cite{gugger2022accelerate}, with DeepSpeed ZeRO Level 2 \cite{rajbhandari2020zero,rasley2020deepspeed}.
The use of Llama family of LLMs for research is guided by a community license that can be founded at \href{https://llama.meta.com/license/}{https://llama.meta.com/license/} and \href{https://llama.meta.com/llama3/license/}{https://llama.meta.com/llama3/license/}.

\subsection{Parameters of \texorpdfstring{\caldera}{caldera} Quantization}
\label{subapp:caldera_quant_parameters}

For all \caldera decompositions, the number of alternating iterations between updating $\Qv$ and $\Lv$, $\Rv$ (i.e., $\rm T_{out}$ in Alg. \ref{alg:caldera}) is $15$.
For decompositions with quantized low-rank factors, the number of LPLR iterations (i.e., $\rm T_{in}$ in Alg. \ref{alg:data-aware-LPLR-factorization-submodule}) is $10$.

For both \caldera and QuIP\# decompositions, the calibration dataset consists of $256$ random samples from the RedPajama dataset.
The number of tokens in each calibration data point is equal to the context length of the corresponding model, i.e., $4096$ for LLaMa-2 7B and $8192$ for LLaMa-3 8B.
Note that this calibration dataset is significantly smaller than the $6144$ datapoints used in \cite{tseng2024quip}.

An additional heuristic applied during the \caldera decomposition is an update to the Hessian matrix based on the subspace spanned by $\Lv \Rv$: 
\begin{equation}
    \widetilde{\Hv} \triangleq  \Hv - \Mv \Vv \Vv^\top \Mv^\top,
\end{equation}
Where $\Hv = \Mv \Mv^\top$ is the LDL decomposition of the Hessian and $\Lv \Rv \Mv = \Uv \Sigmav \Vv^\top$ is the singular value decomposition of the product $\Lv \Rv \Mv$.
The updated $\widetilde{\Hv}$ is used in all LPLR updates, and the original Hessian is used for updating the low-rank factors via Alg. \ref{alg:data-aware-LPLR-factorization-submodule}.
This heuristic was found in the QuIP\# codebase, and empirically speeds up convergence of \caldera, as is discussed in App. \ref{subapp:frobenius-norm-experiments}.
Additionally, the SVD steps while quantizing 70B models was replaced by randomized SVD \cite{halko_randomized_svd} for computational speedup. 

\subsection{Details about Goodness-of-Fit Metrics}
\label{subapp:zero-shot-details}
In addition to perplexity on the WikiText2 and C4 datasets, \caldera was evaluated using zero-shot accuracy on the following sequence classification tasks:
\begin{enumerate}
    \item \textbf{Winogrande} \cite{keisuke2019winogrande}: a collection of 44k problems problems inspired by the Winograd Schema Challenge \cite{levesque2012winograd}, specifically designed to be robust to learned biases in model training datasets.
    Specifically, Winogrande is a two-choice fill-in-the-blank task that requires significant commonsense reasoning.

    \item \textbf{RTE} \cite{bentivogli2009rte}: Recognizing Text Entailment is a task in the General Language Understanding Evaluation (GLUE) benchmark \cite{wang2019glue}.
    Given two statements, the model must classify whether or not the second follows from the first.

    \item \textbf{PiQA} \cite{bisk2020piqa}: Physical Interaction: Question Answering is a question-answering task that requires understanding about physical relationships between objects.

    \item \textbf{ArcE} \cite{allenai2018arc}: In general, the AI2 Reasoning Challenge is a set of multiple-choice questions that require a grade-school level of knowledge.
    ArcE is the subset that is labeled Easy.

    \item \textbf{ArcC} \cite{allenai2018arc}: The ARC-Challenge problems are in the same format as ARC-Easy, but the dataset only includes problems that were answered incorrectly by selected algorithms.
\end{enumerate}

The unquantized ($16$-bit) numbers mentioned in Tabs. \ref{tab:llama-2-zeroshot} and \ref{tab:llama-3-zeroshot} are either copied from \cite{tseng2024quip} for the tasks available reported therein.
For some ones that could not be easily found, inference was performed directly on the $16$-bit model downloaded from HuggingFace.

\subsection{Fine-tuning Parameter Details}\label{subapp:finetuning_parameter_details}
Fine-tuning of the diagonal Rademacher matrices in the RHT was performed over a calibration dataset sampled from the training split of RedPajama.
Due to computational constraints, each sample contains $512$ tokens, rather than the full context length of the model.
The calibration dataset is $256$ samples in total, with $192$ data points in the training split and $64$ in the evaluation split.
RHT fine-tuning was performed for $5$ epochs with a learning rate of $10^{-3}$.
The epoch with the lowest evaluation loss was used for further zero-shot and fine-tuning experiments.

Parameters for low-rank adaptation experiments can be found in Table \ref{tab:llama-finetune-parameters}.
Overall, the number of epochs and therefore number of training steps, was determined empirically based on when the evaluation accuracy plateaued.

\begingroup
\rowcolors{2}{gray!20}{white}
\begin{table}[htbp]
    \setlength{\tabcolsep}{4pt}
    \centering
    \caption{\small Hyperparameter settings for low-rank adaptation\textsuperscript{*}. Batch size refers to the per-device batch size.
    All fine-tuning experiments are parallelized across four GPUs.}\label{tab:llama-finetune-parameters}
    {\smaller
    \begin{tabular}{c|ccccccccc}
        \toprule
        Dataset & Block Size & Batch Size\textsuperscript{*} & Grad. Acc. & Epochs & LR & Weight Decay & LR Sched. & Warmup & Steps\\
        \midrule
        Wikitext2 & 512 & 2 & 1 step & 3 & 3e-6 & 0.001 & Linear & 42 steps & 2091 \\
        RTE & 128 & 20 & 2 steps & 10 & 3e-5 & 0.01 & Linear & 100 steps & 160 \\
        Winogrande & 256 & 10 & 1 step & 3 & 1e-5 & 0.01 & Linear & 100 steps & 3030 \\
        \bottomrule
    \end{tabular}
    }  
\end{table}
\endgroup

For the comparison of \caldera to LQ-LoRA in Winogrande accuracy, the LQ-LoRA result reported in \cite{guo2023lqlora} did not involve fine-tuning directly on the Winogrande dataset, but rather on a subset of C4 and Wikitext2.

\subsection{Computation of Average Bit Budget Per-Parameter }
\label{subapp:computation_of_per_parameter_bit_budget}

Assume that the dimensions of the seven weight matrices in each transformer layer have dimensions $n_i \times d_i$, where $i \in \{1, \dots, 7\}$.
Also, for the models we consider, each transformer layer has matrices of the same dimensions.
In general, if the full-rank matrix $\Qv$ has a bit budget of $B_\text{Q}$, the factors $\Lv$ and $\Rv$ both have a bit budget of $B_\text{LR}$, and the rank of the factors is $k$, the average number of bits per parameter is
\[\frac{\sum_{i=1}^7 \left(B_\text{Q} n_i d_i + kB_\text{LR} (n_i + d_i)\right)}{\sum_{i=1}^7 n_i d_i}.\]

For LLaMa-2 7B, all self-attention matrices are $4096 \times 4096$, the \texttt{gate} and \texttt{up} projections are $11008\times 4096$, and the \texttt{down} projection is $4096 \times 11008$.
So, $(n_i, d_i) = (4096, 4096)$ for $i \in \{1, \dots, 4\}$, $(n_5, d_5) = (n_6, d_6) = (11008, 4096)$, and $(n_7, d_7) = (4096, 11008)$.

For LLaMa-3 8B, $(n_1, d_1) = (n_4, d_4) = (4096, 4096)$, $(n_2, d_2) = (n_3, d_3) = (4096, 1024)$, $(n_5, d_5) = (n_6, d_6) = (14336, 4096)$, and $(n_7, d_7) = (4096, 14336)$.

In \S\ref{subsec:fine-tuning-results}, $r=64$ of the $k$ factors are fine-tuned in 16-bit precision.
This results in the following number of bits per parameter:
\[\frac{\sum_{i=1}^7 \left(B_\text{Q} n_i d_i + (k-r)B_\text{LR} (n_i + d_i) + 16 r (n_i + d_i)\right)}{\sum_{i=1}^7 n_i d_i}.\]

\subsection{Additional Notes on Numerical Simulations}\label{subapp:additional-notes}
For LLaMa-2 70B, the zero-shot perplexities and accuracies reported in the QuIP\# paper \cite{tseng2024quip} could not be replicated.
As such, the QuIP\# performance reported in this paper are based on recreations of those experiments.

\section{Additional Numerical Simulations}
\label{sec:additional-numerical-simulations}

\subsection{Ablation Study of \texorpdfstring{\caldera}{caldera} Parameters with respect to Frobenius Norm Error}\label{subapp:frobenius-norm-experiments}

The per-iteration relative Frobenius norm error of \caldera, for a few selected weight matrices of LLaMa-2 7B, is plotted in Figure \ref{fig:fro-norm-ablation-study-layer-25}.
The data-aware relative Frobenius norm error is defined as
\[\frac{\norm{(\Qv + \Lv\Rv - \Wv)\Xv^\top}_\text{F}}{\norm{\Wv\Xv^\top}_\text{F}},\]
where $\Wv$ is the unquantized weight matrix, $\Xv$ is the calibration data, and $\Qv + \Lv \Rv$ is the \caldera decomposition.
The following ablations of \caldera decomposition parameters are considered:
\begin{enumerate}
    \item \textbf{16-Bit Factors}: $\Qv$ is quantized to $2$ bits via LDLQ.
    The low-rank factors are kept in half precision, so no LPLR iterations are required after rank-constrained regression is performed.
    The randomized Hadamard transform is performed on the $\Wv$ and $\Hv$ matrices are prior to quantization.

    \item \textbf{4-Bit Factors}: The low-rank factors are quantized to $4$ bits of precision using an E8 lattice quantizer, and $10$ iterations of \lplrfactorize (Alg. \ref{alg:data-aware-LPLR-factorization-submodule}) are run for each update of the factors (i.e., $T_\text{in} = 10$ in Alg. \ref{alg:caldera}).
    Otherwise, the decompostion parameters are the same as in \textbf{16-Bit Factors}.

    \item \textbf{4-Bit Factors (No RHT)}: The parameters of \caldera are the same as in \textbf{4-Bit Factors}, except the randomized Hadamard transform step is omitted.

    \item \textbf{4-Bit Factors (No LDLQ)}: The parameters of \caldera are the same as in \textbf{4-Bit Factors}, except $\Qv$ is quantized to $2$-bit precision using direct E8 lattice quantization instead of the LDLQ algorithm.

    \item \textbf{16-Bit Factors (Hessian Update)}: The parameters of \caldera are the same as in \textbf{16-Bit Factors}, except the Hessian update step described in App. \ref{subapp:caldera_quant_parameters} is performed prior to LDLQ.
\end{enumerate}
For each ablation, the rank of $\Lv$ and $\Rv$ varies between $k \in \{64, 128, 256\}$.
For comparison, QuIP\# with the same-size low-rank factors is performed, using the same calibration data and the low-rank decomposition from the QuIP\# codebase.

\label{subapp:fro-norm-experiments}
\begin{figure}[htbp]
    \centering
    \begin{subfigure}[b]{0.49\textwidth}
        \includegraphics[width=\linewidth,trim=0 50px 70px 50px,clip]{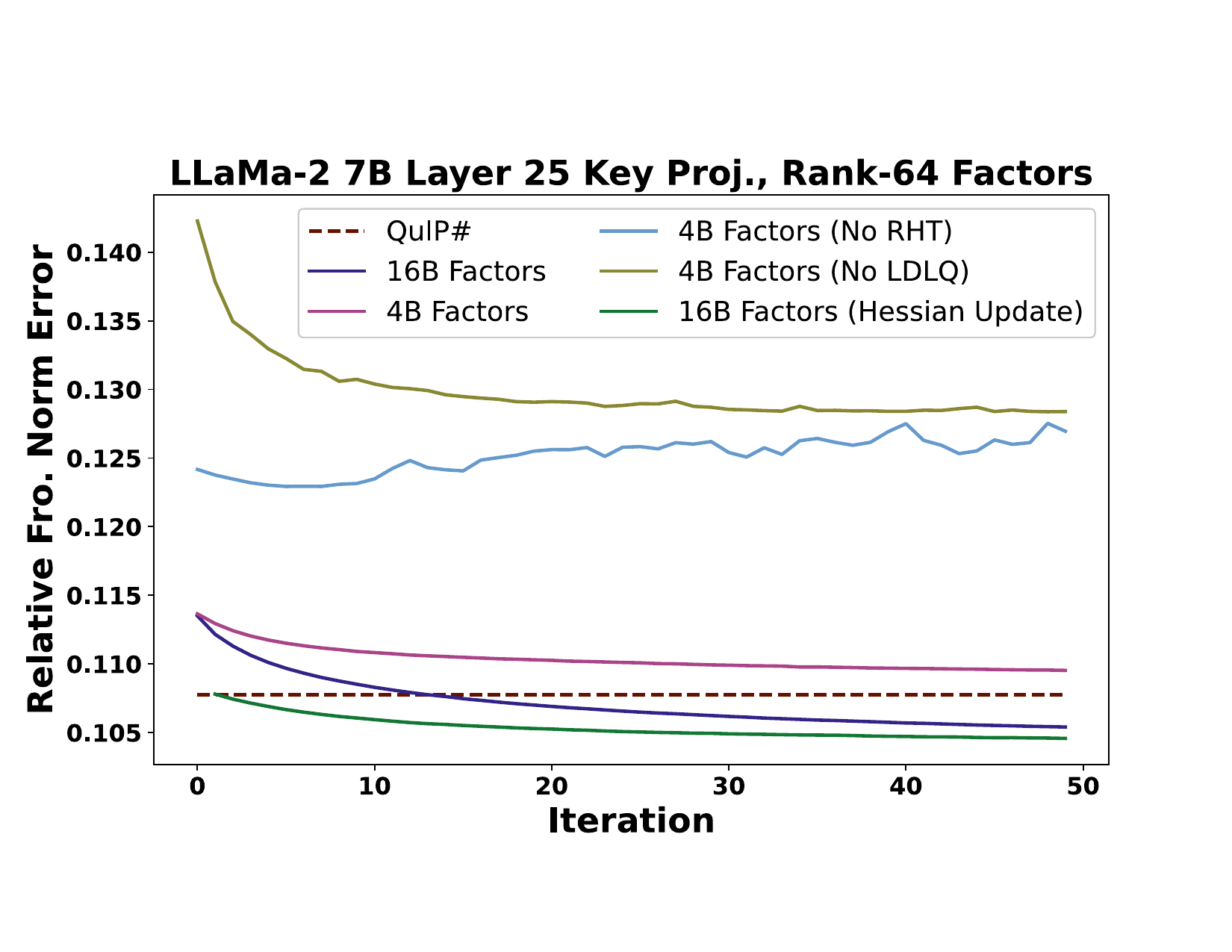}
    \end{subfigure}
    \begin{subfigure}[b]{0.49\textwidth}
        \includegraphics[width=\linewidth,trim=0 50px 70px 50px,clip]{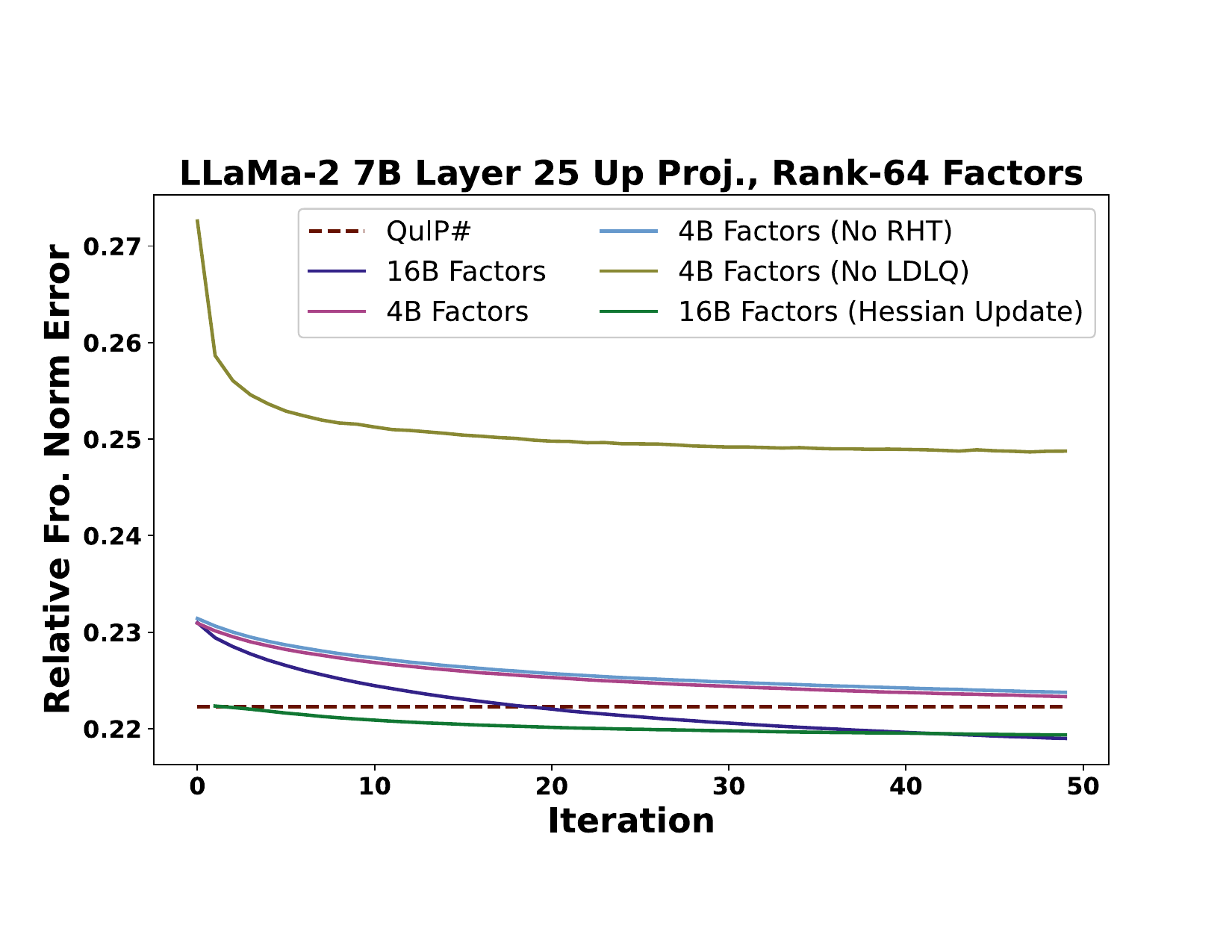}
    \end{subfigure}
    \begin{subfigure}[b]{0.49\textwidth}
        \includegraphics[width=\linewidth,trim=0 50px 70px 50px,clip]{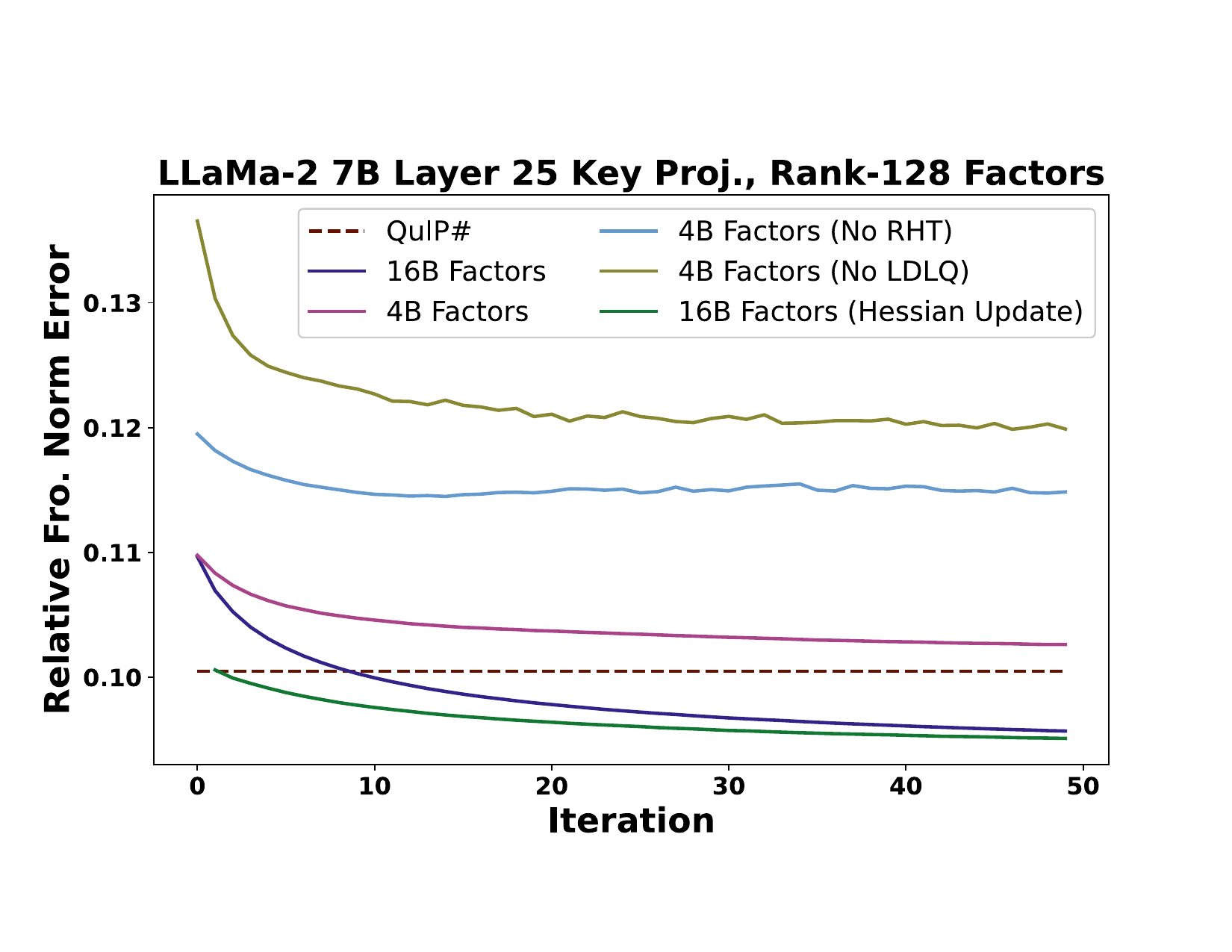}
    \end{subfigure}
    \begin{subfigure}[b]{0.49\textwidth}
        \includegraphics[width=\linewidth,trim=0 50px 70px 50px,clip]{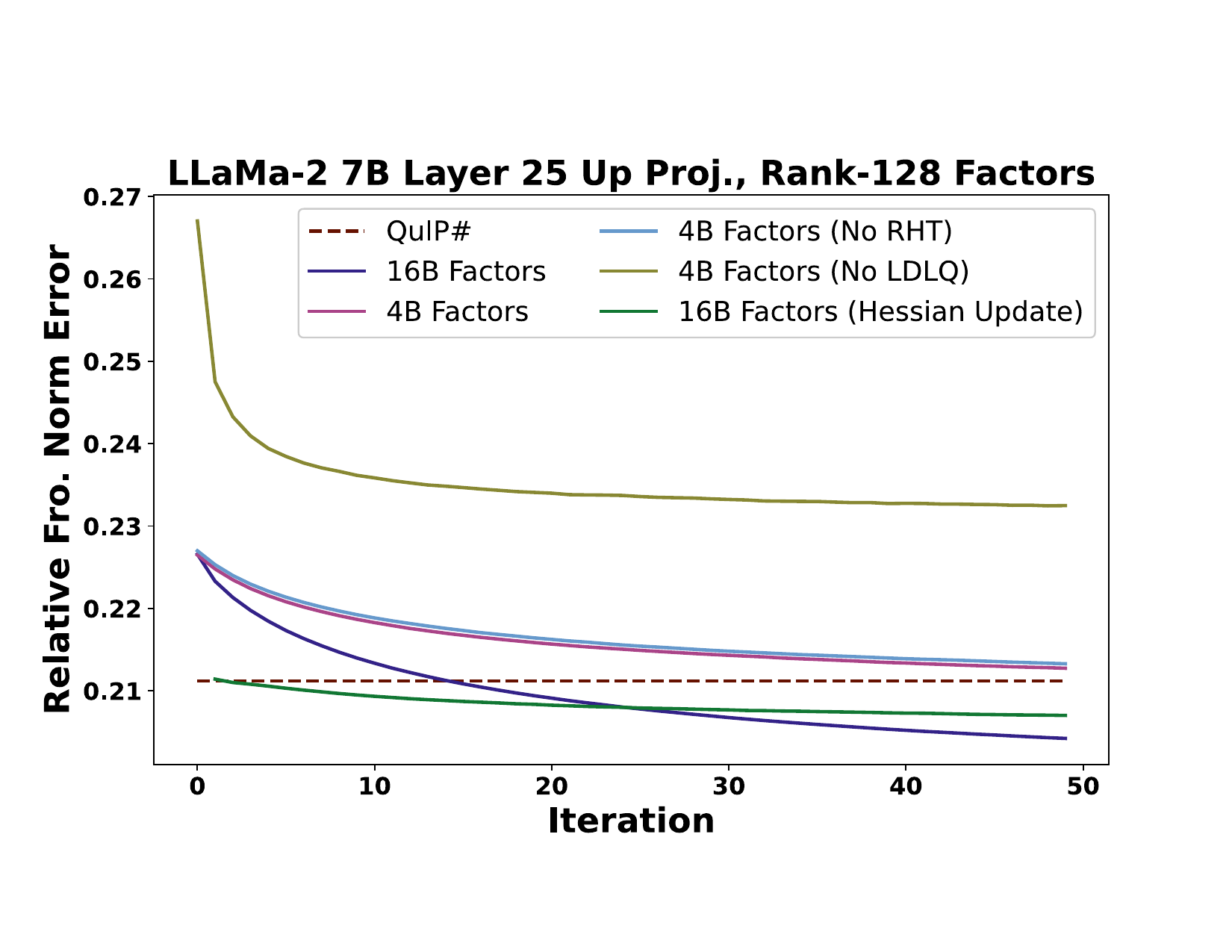}
    \end{subfigure}
    \begin{subfigure}[b]{0.49\textwidth}
        \includegraphics[width=\linewidth,trim=0 50px 70px 50px,clip]{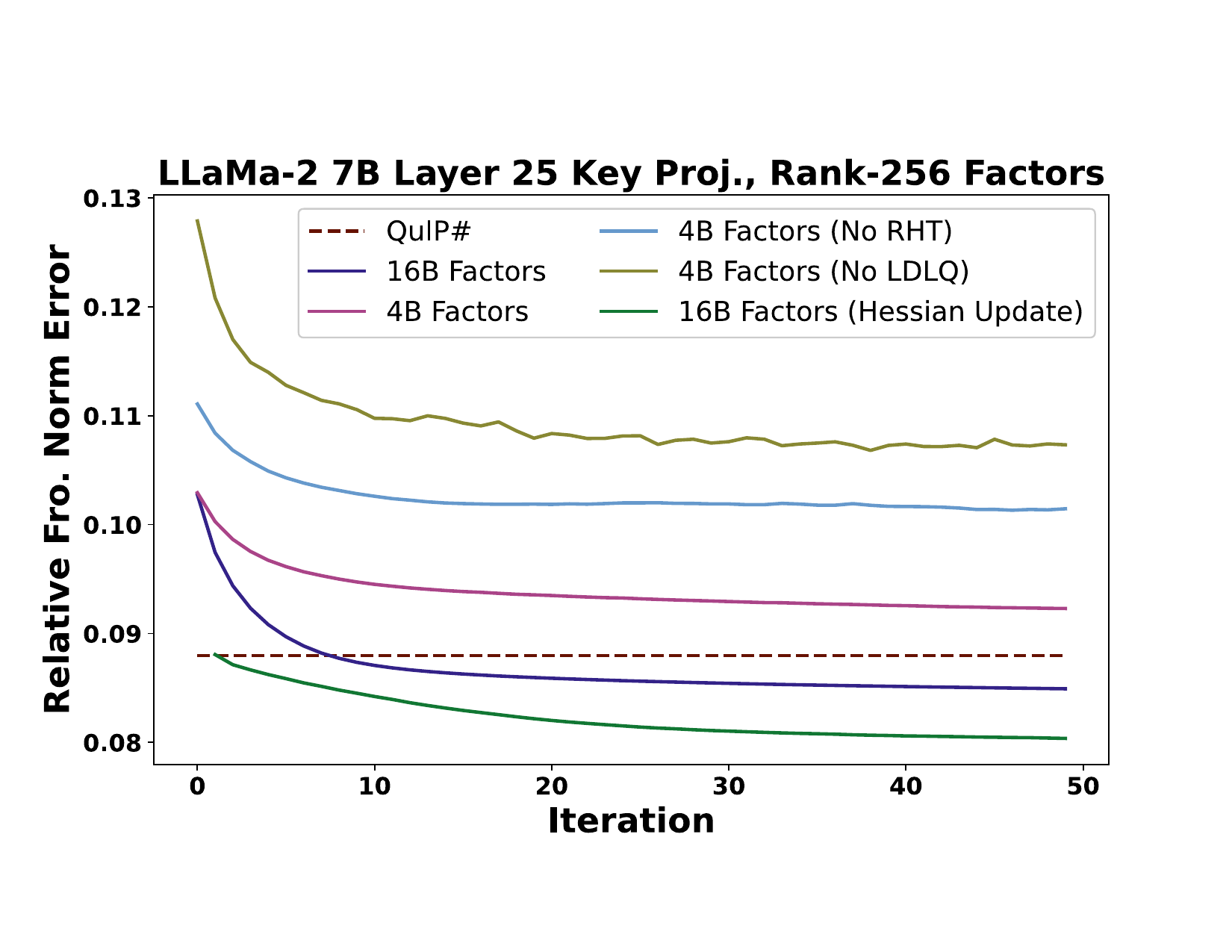}
    \end{subfigure}
    \begin{subfigure}[b]{0.49\textwidth}
        \includegraphics[width=\linewidth,trim=0 50px 70px 50px,clip]{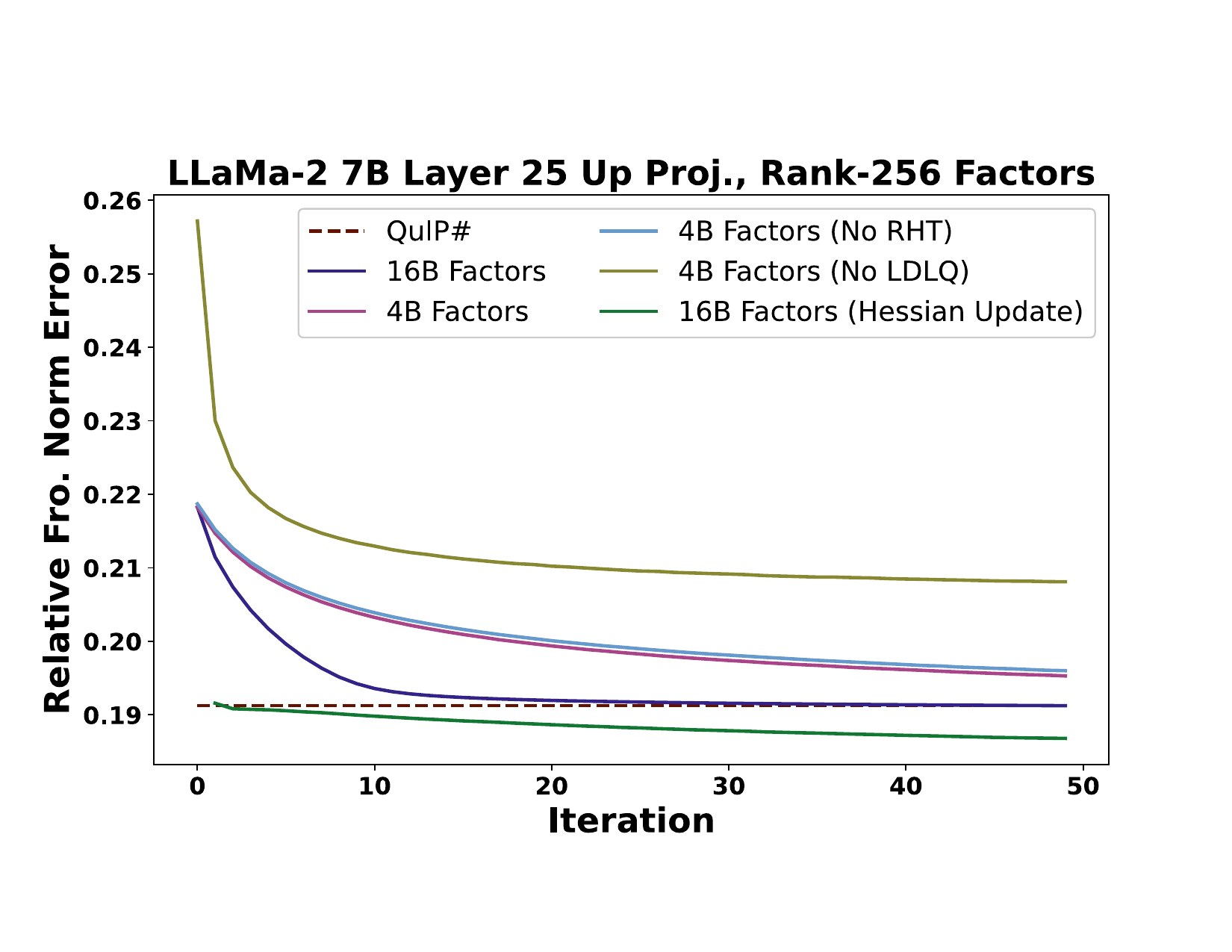}
    \end{subfigure}
    \caption{Relative data-aware Frobenius norm error per iteration of \caldera for selected matrices of LLaMa-2 7B layer 25.
    For all experiments, the bit precision of $\Qv$ is $2$, and the calibration dataset is the same as used in \S\ref{sec:numerical-simulations}.
    The first iteration of \caldera with the Hessian update is omitted, as it has a large error, inhibiting plot readability.}
    \label{fig:fro-norm-ablation-study-layer-25}
\end{figure}

Overall, \caldera with 16-bit factors consistently achieves a lower error than QuIP\#.
Additionally, the Hessian update heuristic improves convergence and often, but not always, reduces the final error achieved after $50$ iterations of Alg. \ref{alg:caldera}.
\caldera with 4-bit factors has higher Frobenius-norm error than with 16-bit factors, but the degradation is minor.
On the other hand, replacing LDLQ with a lattice quantizer significantly degrades the Frobenius norm error, and omitting the randomized Hadamard transform worsens the error for some of the weight matrices considered.

\begin{figure}[htbp]
    \centering
    \begin{subfigure}[b]{0.49\textwidth}
        \includegraphics[width=\linewidth,trim=0 50px 70px 50px,clip]{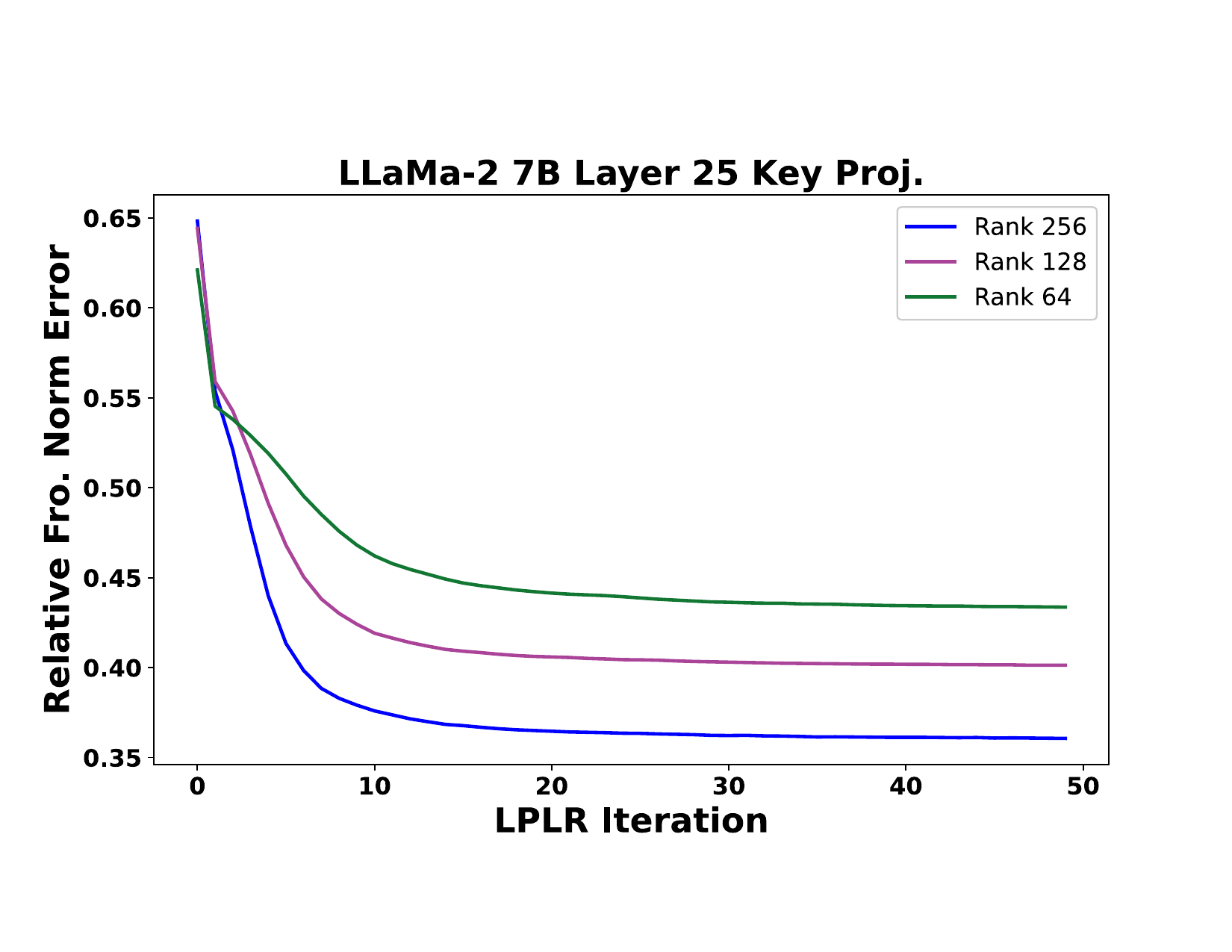}
    \end{subfigure}
        \begin{subfigure}[b]{0.49\textwidth}
        \includegraphics[width=\linewidth,trim=0 50px 70px 50px,clip]{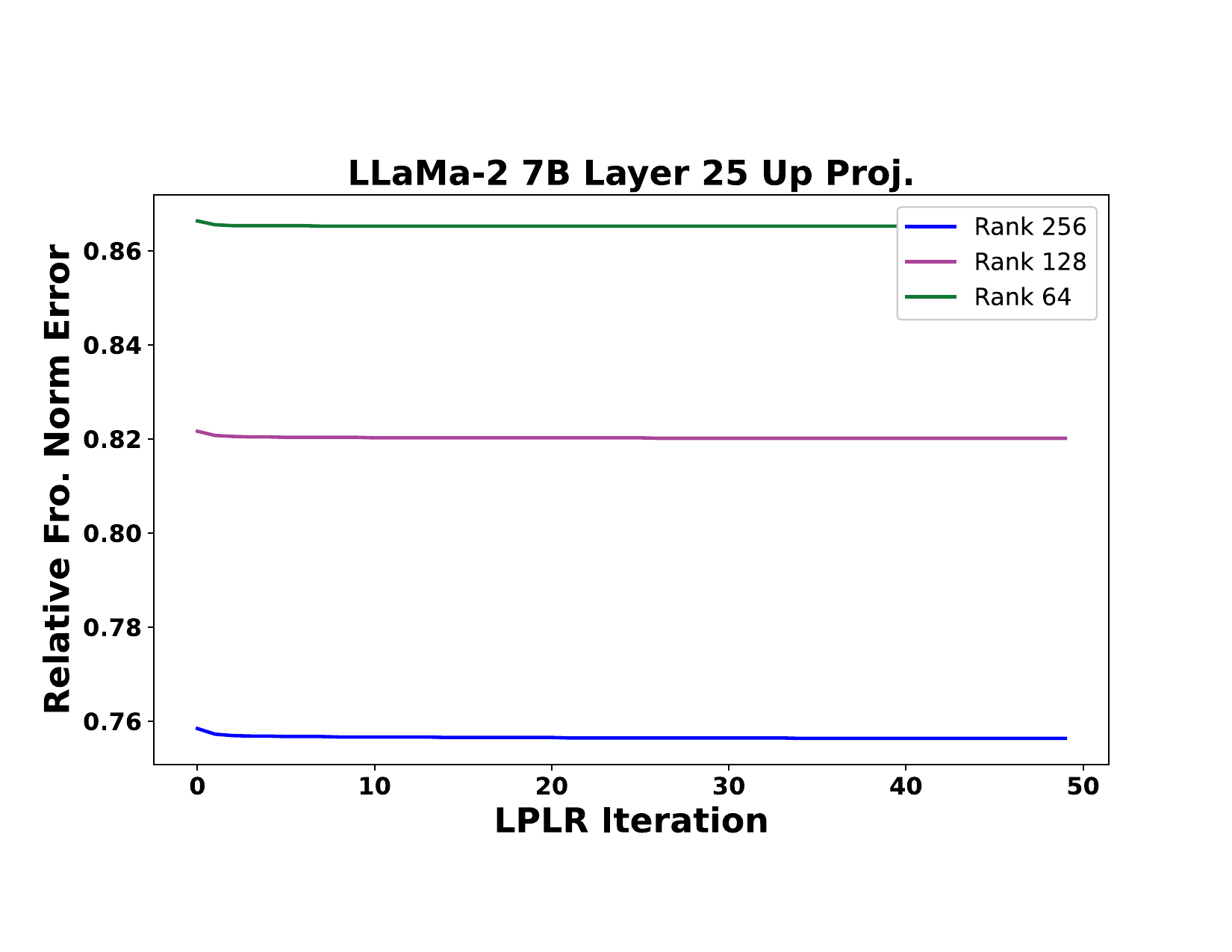}
    \end{subfigure}

    \caption{Relative data-aware Frobenius norm error per iteration of \lplrfactorize, for the decomposition $\Wv \approx \Lv \Rv$, for two matrices in LLaMa-2 7B layer 25.}
    \label{fig:lplr-convergence-layer-25}
\end{figure}

To demonstrate the convergence of \lplrfactorize (Alg. \ref{alg:data-aware-LPLR-factorization-submodule}), the per-iteration relative data-aware error of \lplrfactorize for the factorization $\Wv \approx \Lv \Rv$ is plotted in Figure \ref{fig:lplr-convergence-layer-25}.
For all curves plotted, the randomized Hadamard transform is performed on $\Wv$ and $\Hv$ before the factorization is computed, and both factors are quantized to $4$ bits of precision via an E8 lattice quantizer.

In both cases, the alternating minimization iterations reduce the error.
For the Up projection matrix, this reduction is nominal, whereas, for the Key projection matrix, the alternating iterations result in a significant improvement.

\subsection{Experiments on Mistral-7B}
Perplexity and language modeling benchmark accuracy results for quantizing Mistral 7B via \caldera and QuIP\# are in Table \ref{tab:mistral-7b}.
Results are consistent with those in Section \ref{subsec:zero-shot-results}.

\begingroup
\rowcolors{2}{gray!20}{white}
\begin{table}[htbp]
    \setlength{\tabcolsep}{4pt}
    \centering
    \caption{\small Evaluations of Wikitext2 and C4 perplexities, as well as percent accuracies on some common language modeling benchmarks, on \caldera-compressed Mistral 7B. All quantizations use calibration datasets released on Huggingface by the authors of QuIP\#.
    $\Bmq = 2$ bits throughout, and $\Bml = \Bmr = 4$ bits where low-rank factors are present.
    For fairness of comparison, QuIP\# numbers reported do not include RHT finetuning.}
    \vspace{1ex}
    \label{tab:mistral-7b}
    {\smaller
    \begin{tabular}{cccc|cc|ccccc}
        \toprule
        Model & Method & Rank & Avg Bits & Wiki2 $\downarrow$ & C4 $\downarrow$ & Wino $\uparrow$ & PiQA $\uparrow$ & ArcE $\uparrow$ & ArcC $\uparrow$ \\
        &&&&&&(\texttt{acc}) & (\texttt{acc\_norm}) & (\texttt{acc\_norm}) & (\texttt{acc\_norm}) \\
        \midrule
        Mistral 7B & \caldera & 128 & 2.19 & 6.01 & 8.92 & \textbf{71.51} & 78.02 & \textbf{75.21} & 46.59 \\
        Mistral 7B & \caldera & 256 & 2.38 & \textbf{5.71} & \textbf{8.44} & 70.24 & \textbf{79.92} & 74.92 & \textbf{46.93}\\
        Mistral 7B & QuIP\# & 0 & 2 & 6.19 & 9.09 & 69.30 & 78.45  & 72.90 & 44.80 \\
        \bottomrule
    \end{tabular}
    }
\end{table}
\endgroup

\section{Auxiliary Results}
\label{app:auxiliary-lemmas}

\subsection{Useful Results from Linear Algebra and Probability}
\label{subapp:useful-results}

This section states some useful linear algebra results as lemmas.
Some of them are stated without proof, which can be easily proved or found in a linear algebra textbook such as \cite{golub13}.

\begin{lemma}
    \label{lem:eckart-young-mirsky}
    {\bf (Eckart-Young-Mirsky theorem)}
    For any matrix $\Av$ with SVD given by $\Av = \Uv\Sigmav\Vv^\top$, the solution of $\Av_k := \argminimize_{{\rm rank}(\Ahv) \leq k}\fronormsqlh{\Av - \Ahv}$ is given by $\Av_k = \Paren{\Uv\Sigmav^{1/2}\Iv_k}\Paren{\Iv_k^\top\Sigmav^{1/2}\Vv^\top}$, and $\fronormsqlh{\Av - \Av_k} = \sum_{i > k} \sigma_i^2(\Av)$.
\end{lemma}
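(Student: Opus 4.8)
The plan is to prove the two assertions separately: first that the claimed $\Av_k$ is feasible and achieves objective value $\sum_{i > k}\sigma_i^2(\Av)$, and then that no matrix of rank at most $k$ can do better, so that $\Av_k$ is a global minimizer with exactly this optimal value.

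For the achievability half I would argue by direct computation, without grinding. Since $\Av_k = \Paren{\Uv\Sigmav^{1/2}\Iv_k}\Paren{\Iv_k^\top\Sigmav^{1/2}\Vv^\top}$ is the product of an $m \times k$ and a $k \times n$ factor, ${\rm rank}(\Av_k) \leq k$, so $\Av_k$ is feasible. Re-associating the product, $\Av_k = \Uv\Sigmav_k\Vv^\top$, where $\Sigmav_k$ is the diagonal matrix obtained from $\Sigmav$ by zeroing every diagonal entry except the $k$ largest. Hence $\Av - \Av_k = \Uv\Paren{\Sigmav - \Sigmav_k}\Vv^\top$, and unitary invariance of the Frobenius norm gives $\fronormsq{\Av - \Av_k} = \fronormsq{\Sigmav - \Sigmav_k} = \sum_{i > k}\sigma_i^2(\Av)$.

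For the matching lower bound I would invoke Weyl's inequality for singular values: for any $\Xv,\Yv$ of the same shape and indices $i,j \geq 1$, $\sigma_{i+j-1}(\Xv + \Yv) \leq \sigma_i(\Xv) + \sigma_j(\Yv)$. Fix an arbitrary feasible $\Ahv$ with ${\rm rank}(\Ahv) \leq k$, so $\sigma_{k+1}(\Ahv) = 0$, and apply Weyl with $\Xv = \Ahv$, $\Yv = \Av - \Ahv$, $i = k+1$: for every $j \geq 1$,
\begin{equation*}
    \sigma_{k+j}(\Av) = \sigma_{(k+1)+j-1}\Paren{\Ahv + (\Av - \Ahv)} \leq \sigma_{k+1}(\Ahv) + \sigma_j(\Av - \Ahv) = \sigma_j(\Av - \Ahv).
\end{equation*}
Squaring and summing over $j \geq 1$ yields $\sum_{i > k}\sigma_i^2(\Av) = \sum_{j \geq 1}\sigma_{k+j}^2(\Av) \leq \sum_{j \geq 1}\sigma_j^2(\Av - \Ahv) = \fronormsq{\Av - \Ahv}$. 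Combined with the previous paragraph, $\Av_k$ attains the minimum and the optimal value is $\sum_{i > k}\sigma_i^2(\Av)$.

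The only non-routine ingredient — and hence the main obstacle in the sense that everything else is bookkeeping — is Weyl's inequality, which is itself a standard consequence of the Courant–Fischer min–max characterization $\sigma_\ell(\Zv) = \min_{\dim\mathcal{S} = \ell-1}\max_{\xv \perp \mathcal{S},\,\|\xv\|=1}\|\Zv\xv\|$. Alternatively, I could give a self-contained projection argument: any $\Ahv$ with ${\rm rank}(\Ahv) \leq k$ has column space $\mathcal{S}$ of dimension at most $k$; for fixed $\mathcal{S}$ the best choice is $\Ahv = \mathbf{P}_{\mathcal{S}}\Av$, so by Pythagoras $\fronormsq{\Av - \Ahv} \geq \fronormsq{\Av} - \fronormsq{\mathbf{P}_{\mathcal{S}}\Av}$, and $\max_{\dim\mathcal{S}\leq k}\fronormsq{\mathbf{P}_{\mathcal{S}}\Av} = \sum_{i=1}^k\sigma_i^2(\Av)$ by the Ky Fan maximum principle, reproducing both the optimal value and the minimizer $\Av_k$. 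I would not pursue uniqueness of the minimizer, which holds only up to the usual SVD ambiguities (and only when $\sigma_k(\Av) > \sigma_{k+1}(\Av)$) and is not needed here.
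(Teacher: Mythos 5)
The paper states this lemma \emph{without proof}: it appears in App.~\ref{subapp:useful-results}, which the authors explicitly preface by saying these results are ``stated without proof, which can be easily proved or found in a linear algebra textbook such as \cite{golub13}.'' So there is no in-paper derivation to compare against. Your proof is correct and is the standard Weyl-inequality argument for the Eckart--Young--Mirsky theorem. The achievability half correctly unwinds $\Paren{\Uv\Sigmav^{1/2}\Iv_k}\Paren{\Iv_k^\top\Sigmav^{1/2}\Vv^\top} = \Uv\Sigmav_k\Vv^\top$ via $\Sigmav^{1/2}\Iv_k\Iv_k^\top\Sigmav^{1/2} = \Sigmav_k$, and then uses unitary invariance of $\fronorm{\cdot}$; the lower bound via $\sigma_{i+j-1}(\Xv + \Yv) \leq \sigma_i(\Xv) + \sigma_j(\Yv)$ with $i = k+1$, $\sigma_{k+1}(\Ahv) = 0$ is exactly right, and the alternative Ky Fan / projection argument you sketch is also a valid standard route. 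Nothing is missing; this would serve as a perfectly adequate proof if the paper had wanted to include one rather than cite a textbook.
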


\begin{lemma}{(\citet[Thm 4.4.5]{vershynin-2018})}
    \label{lem:spectral-norm-subgaussian-matrices}
    Let $\Xv$ be a $d \times m$ random matrix whose entries $X_{ij}$ are independent, zero-mean, subgaussian random variables.
    Then, for any $t > 0$, $\specnorm{\Xv} \leq CK\Paren{\sqrt{d} + \sqrt{m} + t}$
    with probability exceeding $1 - 2e^{-t^2}$, where $K = \max_{i,j}\subgaussnorm{A_{ij}}$, and $\subgaussnorm{\cdot}$ denotes the subgaussian norm, and $C$ is an absolute constant.
\end{lemma}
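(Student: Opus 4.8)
The plan is to prove this via the standard $\varepsilon$-net (discretization) argument. First I would write the spectral norm as a bilinear supremum, $\specnorm{\Xv} = \sup_{u \in S^{d-1},\, v \in S^{m-1}} \langle u, \Xv v\rangle$, and fix $1/4$-nets $\mathcal N$ of $S^{d-1}$ and $\mathcal M$ of $S^{m-1}$; a volumetric bound gives $|\mathcal N| \le 9^{d}$ and $|\mathcal M| \le 9^{m}$. The usual approximation lemma then yields $\specnorm{\Xv} \le 2\,\max_{u \in \mathcal N,\, v \in \mathcal M}\langle u, \Xv v\rangle$, reducing the problem to controlling the maximum of finitely many scalar random variables.

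Next, for fixed unit vectors $u,v$ I would note that $\langle u, \Xv v\rangle = \sum_{i,j} u_i v_j X_{ij}$ is a linear combination of independent, mean-zero, subgaussian entries, hence itself subgaussian with $\subgaussnorm{\langle u, \Xv v\rangle}^{2} \lesssim K^{2}\sum_{i,j} u_i^{2} v_j^{2} = K^{2}$, using $\norm{u} = \norm{v} = 1$. This gives a tail bound $\Pr\Paren{\langle u, \Xv v\rangle \ge s} \le 2\exp(-c s^{2}/K^{2})$ for an absolute constant $c$. Taking a union bound over the at most $9^{d+m}$ pairs in $\mathcal N \times \mathcal M$, then setting $s = CK\Paren{\sqrt d + \sqrt m + t}$ with $C$ chosen so that $c s^{2}/K^{2} \ge (d+m)\log 9 + t^{2}$ (invoking $\sqrt{a+b+c} \le \sqrt a + \sqrt b + \sqrt c$), yields $\specnorm{\Xv} \le 2s$ with probability at least $1 - 2e^{-t^{2}}$; renaming the constant finishes the proof.

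Since this is a classical fact, one could alternatively just invoke \citet[Thm.~4.4.5]{vershynin-2018} verbatim. The only delicate point in a self-contained write-up is the bookkeeping of absolute constants -- in particular choosing the net radius (and hence the $\log 9$ overhead) small enough that it is absorbed into the $\sqrt d + \sqrt m$ term -- together with the two standard inputs: the net-approximation inequality for operator norms and the closure of the subgaussian class under $\ell^{2}$-weighted sums of independent variables. Neither requires a new idea, so I do not anticipate a genuine obstacle.
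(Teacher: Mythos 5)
Your proposal is correct and is precisely the $\varepsilon$-net argument used in the cited reference (Vershynin, Thm.\ 4.4.5); the paper itself states this lemma as a direct citation without reproducing the proof, so there is no independent proof in the paper to compare against. All the steps you sketch -- the bilinear-supremum form of the operator norm, the $1/4$-net with the $9^{d+m}$ volumetric bound and factor-of-$2$ approximation inequality, the subgaussianity of $\langle u, \Xv v\rangle$ with $\subgaussnorm{\langle u, \Xv v\rangle} \lesssim K$ via Hoeffding for weighted sums of independent subgaussians, and the union bound with $s = CK(\sqrt d + \sqrt m + t)$ -- are exactly the standard ingredients, and the constant bookkeeping works out as you describe.
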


{\bf Remark}:
Lemma \ref{lem:spectral-norm-subgaussian-matrices} states a high probability upper bound on the spectral norm of a random matrix in terms of the subgaussian norm of the entries of the matrix.
The subgaussian norm of a subgaussian random variable $X$ is defined as $\subgaussnorm{X} \triangleq \inf\{t \geq 0 \; \vert \; \Ebb[e^{X^2/t^2}] \leq 2\}$.
It can be shown that any bounded random variable $X$ is subgaussian, and satisfies $\subgaussnorm{X} \leq \frac{\infnorm{X}}{\log 2}$.

\begin{lemma}
    \label{lem:upper_bound_LDL_decomposition_eigenvalue}
    Suppose the LDL decomposition of a matrix $\Hv \in \Real^{d \times d}$ is given by $\Hv = (\Mv + \Iv)\Dv(\Mv + \Iv)^\top$, where $\Mv$ is strictly upper triangular and $\Dv$ is diagonal.
    Then, ${\rm max}_j D_{jj} \leq \lambda_{\rm max}$, where $\lambda_{\rm max}$ denotes the largest eigenvalue of $\Hv$.
\end{lemma}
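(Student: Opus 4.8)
The plan is to prove the stronger pointwise statement $D_{jj}\le H_{jj}$ for every index $j$, and then combine it with the elementary bound $H_{jj}\le\lambda_{\rm max}(\Hv)$. Morally, $D_{jj}\le H_{jj}$ is the assertion that a pivot of the $\Mv$-triangular (``eliminate from the bottom'') LDL factorization is a conditional/trailing-block variance, hence no larger than the unconditional diagonal entry; I would turn this into a short linear-algebra argument. Throughout I use that $\Hv$ is symmetric (automatic from the form $\mathbf{A}\Dv\mathbf{A}^\top$) and positive semidefinite, which holds for the Hessian $\Hv=\tfrac1m\Xv^\top\Xv$ of interest (equivalently, $\Dv\succeq\mathbf{0}$ by Sylvester's law of inertia).

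First I would set $\mathbf{U}:=\Mv+\Iv$, which is unit upper triangular, hence invertible, with $\mathbf{U}^{-1}$ again unit upper triangular and $\mathbf{U}^{-\top}$ unit lower triangular. Inverting $\Hv=\mathbf{U}\Dv\mathbf{U}^\top$ gives $\Dv=\mathbf{U}^{-1}\Hv\mathbf{U}^{-\top}$, so writing $\mathbf{e}_j$ for the $j$-th standard basis vector and $\mathbf{v}:=\mathbf{U}^{-\top}\mathbf{e}_j$ we get $D_{jj}=\mathbf{v}^\top\Hv\mathbf{v}$. The point to flag here is that the naive estimate $\mathbf{v}^\top\Hv\mathbf{v}\le\lambda_{\rm max}(\Hv)\,\|\mathbf{v}\|^2$ is too weak, since $\|\mathbf{v}\|\ge 1$; the actual content is that $\mathbf{v}$ is precisely the vector making $\Hv\mathbf{v}$ vanish on the ``free'' coordinates.

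Next I would record two consequences of triangularity. Because $\mathbf{U}^{-\top}$ is unit lower triangular, its $j$-th column $\mathbf{v}$ has $v_i=0$ for $i<j$ and $v_j=1$, so $\mathbf{e}_j-\mathbf{v}$ is supported on $\{j+1,\dots,d\}$. Because $\mathbf{U}$ is unit upper triangular and $\Dv$ is diagonal, $\Hv\mathbf{v}=\mathbf{U}\Dv\mathbf{U}^\top\mathbf{U}^{-\top}\mathbf{e}_j=D_{jj}\,\mathbf{U}\mathbf{e}_j$ is supported on $\{1,\dots,j\}$. The supports are disjoint, hence $(\mathbf{e}_j-\mathbf{v})^\top\Hv\mathbf{v}=0$, and expanding $H_{jj}=\mathbf{e}_j^\top\Hv\mathbf{e}_j$ via $\mathbf{e}_j=\mathbf{v}+(\mathbf{e}_j-\mathbf{v})$ and symmetry of $\Hv$ gives $H_{jj}=\mathbf{v}^\top\Hv\mathbf{v}+(\mathbf{e}_j-\mathbf{v})^\top\Hv(\mathbf{e}_j-\mathbf{v})=D_{jj}+(\mathbf{e}_j-\mathbf{v})^\top\Hv(\mathbf{e}_j-\mathbf{v})\ge D_{jj}$, using $\Hv\succeq\mathbf{0}$. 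Finally $H_{jj}=\mathbf{e}_j^\top\Hv\mathbf{e}_j\le\lambda_{\rm max}(\Hv)$ by the variational characterization of the largest eigenvalue, and taking the maximum over $j$ closes the argument.

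The main obstacle is the conceptual step just above: not bounding $\mathbf{v}^\top\Hv\mathbf{v}$ crudely by $\lambda_{\rm max}\|\mathbf{v}\|^2$, but instead exploiting that $\mathbf{v}$ orthogonalizes $\Hv\mathbf{v}$ against coordinates $\{j+1,\dots,d\}$. An equivalent route, which I would mention as a remark, is to identify $D_{jj}$ with the $(j,j)$ entry of the Schur complement $\Hv_{[1{:}j]}-\Hv_{[1{:}j],[j+1{:}d]}\,\Hv_{[j+1{:}d]}^{-1}\,\Hv_{[j+1{:}d],[1{:}j]}$; since this Schur complement is $\preceq\Hv_{[1{:}j]}$, its diagonal entry is at most $\lambda_{\rm max}(\Hv_{[1{:}j]})\le\lambda_{\rm max}(\Hv)$ by Cauchy eigenvalue interlacing for principal submatrices. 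Either way, once this structural observation is in place the remaining steps are routine bookkeeping with triangular matrices.
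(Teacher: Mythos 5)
Your proof is correct, and it arrives at the same intermediate pointwise inequality $D_{jj}\le H_{jj}$ as the paper before finishing with $H_{jj}\le\lambda_{\rm max}$; but the algebra getting there is genuinely different. The paper expands the quadratic form in the \emph{forward} direction: it writes $H_{jj}=\ev_j^\top(\Mv+\Iv)\Dv(\Mv+\Iv)^\top\ev_j$ and reads off that this equals $D_{jj}$ plus a sum of terms of the form $D_{ii}M_{ji}^2$ over $i>j$, each nonnegative since $\Dv\succeq\mathbf{0}$ (the paper's displayed expression omits the $D_{ii}$ factor, but the argument only needs the sum to be $\ge 0$). You instead invert the factorization, write $D_{jj}=\vv^\top\Hv\vv$ with $\vv=(\Mv+\Iv)^{-\top}\ev_j$, and use the support/orthogonality observation $(\ev_j-\vv)^\top\Hv\vv=0$ to split $H_{jj}=D_{jj}+(\ev_j-\vv)^\top\Hv(\ev_j-\vv)$. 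Both routes are sound and use the same PSD hypothesis on $\Hv$; the paper's one-line expansion is the more economical of the two, while your version makes the ``pivot as conditional variance / Schur complement'' interpretation explicit, which is a nice conceptual payoff. Your remark that $D_{jj}\le\lambda_{\rm max}(\Hv_{[1:j]})\le\lambda_{\rm max}(\Hv)$ via Cauchy interlacing is a valid third route and slightly overachieves the lemma.
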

\begin{proof}
    Let $\ev_j$ denote the $j^{\rm th}$ canonical basis vector.
    Then,
    \begin{align}
        \ev_j^\top\Hv\ev_j = \ev_j^\top (\Mv + \Iv)\Dv(\Mv + \Iv)^\top \ev_j \stackrel{\rm (i)}{=} D_{jj} + \sum_{i > j}M_{ji}^2.
    \end{align}
    Moreover, from properties of eigenvalues, $\ev_j^\top\Hv\ev_j \leq \lambda_{\rm max}$.
    Therefore, for any $j$, $D_{jj} \leq D_{jj} + \sum_{i > j}M_{ji}^2 \leq \lambda_{\rm max}$.
    Since it holds true for all $j$, this completes the proof.
\end{proof}

\begin{lemma}
    \label{lemma:loewner_order-matrix-product}
    {\bf (Loewner ordering for matrix products)}
    For any matrix $\Av$ and $\Bv$,
    \begin{equation*}
        \sigma_{\rm min}^2(\Av)\;\Bv^{\top}\Bv \preccurlyeq \Bv^{\top}\Av^{\top}\Av\Bv \preccurlyeq \sigma_{\rm max}^2(\Av)\;\Bv^{\top}\Bv.
    \end{equation*}
\end{lemma}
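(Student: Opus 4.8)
By definition, the asserted Loewner ordering is equivalent to the pair of scalar inequalities
\begin{equation*}
\sigma_{\rm min}^2(\Av)\,\xv^\top\Bv^\top\Bv\xv \;\le\; \xv^\top\Bv^\top\Av^\top\Av\Bv\xv \;\le\; \sigma_{\rm max}^2(\Av)\,\xv^\top\Bv^\top\Bv\xv \qquad \text{for all } \xv.
\end{equation*}
The plan is to absorb $\Bv$ into the test vector: put $\uv := \Bv\xv$, so that the middle term becomes $\norm{\Av\uv}^2$ and the two outer terms become $\sigma_{\rm min}^2(\Av)\norm{\uv}^2$ and $\sigma_{\rm max}^2(\Av)\norm{\uv}^2$, using $\xv^\top\Bv^\top\Bv\xv = \norm{\uv}^2$. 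Thus it suffices to prove the reduced bound $\sigma_{\rm min}^2(\Av)\norm{\uv}^2 \le \norm{\Av\uv}^2 \le \sigma_{\rm max}^2(\Av)\norm{\uv}^2$ for every vector $\uv$ (the fact that $\uv$ happens to lie in the range of $\Bv$ plays no role).

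For that, I would take a full SVD $\Av = \Uv\Sigmatv\Vv^\top$ with $\Uv,\Vv$ orthogonal and $\Sigmatv$ the (rectangular) diagonal matrix of singular values $\sigma_i$. Orthogonal transformations preserve the Euclidean norm, so $\norm{\Av\uv} = \norm{\Sigmatv\Vv^\top\uv}$ and $\norm{\uv} = \norm{\Vv^\top\uv}$; letting $w_i$ denote the $i$-th entry of $\Vv^\top\uv$, the claim reduces to $\sigma_{\rm min}^2\sum_i w_i^2 \le \sum_i \sigma_i^2 w_i^2 \le \sigma_{\rm max}^2\sum_i w_i^2$, which is just the coordinatewise sandwich $\sigma_{\rm min} \le \sigma_i \le \sigma_{\rm max}$ summed up. Unwinding $\uv = \Bv\xv$ then recovers the two matrix inequalities for all $\xv$, hence the Loewner ordering.

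The only step that needs care — and essentially the sole obstacle — is the lower bound when $\Av$ is rank-deficient, given the paper's convention that $\sigma_{\rm min}(\Av)$ is the smallest \emph{nonzero} singular value: a nonzero $\uv \in \ker\Av$ then makes $\norm{\Av\uv}^2 = 0$ while $\sigma_{\rm min}^2(\Av)\norm{\uv}^2 > 0$, so the step $\sigma_{\rm min}^2\sum_i w_i^2 \le \sum_i\sigma_i^2 w_i^2$ fails. The resolution is that the left inequality is meant under the (implicit) hypothesis that $\Av$ has trivial kernel, i.e. full column rank, so that $\sigma_{\rm min}(\Av) = \min_{\norm{\uv}=1}\norm{\Av\uv} > 0$; this is exactly the situation in every invocation of the lemma in the paper (the matrix playing the role of $\Av$ is always a full-column-rank factor coming from an SVD, or $\Hv^{1/2}$ under the standing reduction $m=d$). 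Equivalently, one adopts the convention that $\sigma_{\rm min}(\Av)$ is the last diagonal entry of $\Sigmatv$ (possibly $0$), which makes both inequalities unconditional. The upper bound requires no such caveat: it is just submultiplicativity of the spectral norm, $\norm{\Av\uv} \le \specnorm{\Av}\norm{\uv} = \sigma_{\rm max}(\Av)\norm{\uv}$.
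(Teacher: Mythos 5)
The paper does not prove this lemma: it appears in the ``Useful Results from Linear Algebra'' subsection, which states that several lemmas are given ``without proof, which can be easily proved or found in a linear algebra textbook such as \cite{golub13}.'' So there is no paper proof to compare against. Your argument is the standard one and is correct: reduce the Loewner inequality to the quadratic-form inequality $\sigma_{\rm min}^2(\Av)\norm{\uv}^2 \le \norm{\Av\uv}^2 \le \sigma_{\rm max}^2(\Av)\norm{\uv}^2$ with $\uv = \Bv\xv$, and verify it via the SVD of $\Av$ and orthogonal invariance of the Euclidean norm (the upper half being just $\specnorm{\Av} = \sigma_{\rm max}(\Av)$).

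Your remark about the lower bound is the right thing to flag: under the paper's stated convention in App.~\ref{app:notations} that $\sigma_{\rm min}(\Av) = \sigma_r$ with $r = \operatorname{rank}(\Av)$ (the smallest \emph{nonzero} singular value), the left inequality as written fails whenever $\Av$ has a nontrivial kernel and some $\Bv\xv$ lands in it. You correctly identify the fix — either restrict to full-column-rank $\Av$, or read $\sigma_{\rm min}$ as $\min_{\norm{\uv}=1}\norm{\Av\uv}$ (which is $0$ whenever $\ker\Av \ne \{0\}$), and you correctly observe that the paper's actual invocations of the lower bound are safe because the relevant $\Av$'s are injective (e.g.\ $\Xv$ under the standing reduction $m=d$). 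Just be careful with your parenthetical gloss ``the last diagonal entry of $\Sigmatv$'': for a wide $m\times n$ matrix $\Av$ with $m<n$ that entry is $\sigma_m>0$ even though $\ker\Av\ne\{0\}$, so the correct convention is $\sigma_{\rm min}(\Av) := \min_{\norm{\uv}=1}\norm{\Av\uv}$ (equivalently the $n$-th singular value, padding with zeros when $n>m$), not the last diagonal entry of the rectangular $\Sigmatv$.
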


\begin{lemma}
    \label{lem:lower-bound-minimum-singular-value-matrix-sum}
    {\bf (Minimum singular value)}
    For matrices $\Av$ and $\Bv$, $\sigma_{\rm min}(\Av + \Bv) \geq \sigma_{\rm min}(\Av) - \specnorm{\Bv}.$
\end{lemma}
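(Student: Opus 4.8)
The plan is to reduce this to the standard variational characterization of the smallest singular value. Write $\Av,\Bv\in\Real^{p\times q}$, and let $\sigma_{\rm min}(\cdot)$ denote the singular value of index $\min\{p,q\}$, so that $\Av$ and $\Av+\Bv$, being of the same shape, carry their minimum singular values at the same index. Since transposition leaves every singular value and the spectral norm unchanged and $(\Av+\Bv)^\top=\Av^\top+\Bv^\top$ with $\specnorm{\Bv^\top}=\specnorm{\Bv}$, I would first assume without loss of generality that $p\ge q$; this is the step that matters for the way the lemma is invoked (e.g.\ in Lemma~\ref{lem:maxnorm_left_quantizer}, where the relevant matrix is wide).

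In the case $p\ge q$ the smallest singular value admits the form $\sigma_{\rm min}(\Mv)=\min_{\norm{\xv}=1}\norm{\Mv\xv}$ for any $\Mv\in\Real^{p\times q}$, since $\norm{\Mv\xv}^2=\xv^\top\Mv^\top\Mv\xv$ and $\Mv^\top\Mv\in\Real^{q\times q}$ has smallest eigenvalue $\sigma_{\rm min}(\Mv)^2$. Fixing a unit vector $\xv$, the triangle inequality together with the definitions of $\sigma_{\rm min}$ and $\specnorm{\cdot}$ gives
\[
\norm{(\Av+\Bv)\xv}\;\ge\;\norm{\Av\xv}-\norm{\Bv\xv}\;\ge\;\sigma_{\rm min}(\Av)-\specnorm{\Bv}.
\]
Taking the minimum over unit $\xv$ on the left-hand side yields $\sigma_{\rm min}(\Av+\Bv)\ge\sigma_{\rm min}(\Av)-\specnorm{\Bv}$, which is the claim. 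An equivalent route would be to cite Weyl's perturbation inequality for singular values, $\lvert\sigma_k(\Av+\Bv)-\sigma_k(\Av)\rvert\le\specnorm{\Bv}$ for every $k$ (a consequence of Weyl's eigenvalue inequality applied to the Hermitian dilations), and specialize to $k=\min\{p,q\}$.

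There is no genuine obstacle in this argument; the only point that deserves a line of care is the one already flagged — $\sigma_{\rm min}$ must be read at a common index for $\Av$ and $\Av+\Bv$ (not, say, as the smallest \emph{nonzero} singular value, which can jump under perturbation), and the wide case must be handled by the transposition reduction rather than by $\sigma_{\rm min}(\Mv)=\min_{\norm{\xv}=1}\norm{\Mv\xv}$, which fails when $\Mv$ has a nontrivial kernel forced by its shape.
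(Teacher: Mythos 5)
The paper states this lemma without proof (it appears in App.~\ref{subapp:useful-results} among results ``stated without proof, which can be easily proved or found in a linear algebra textbook such as \cite{golub13}''), so there is no paper argument to compare against. Your proof is correct and is the standard one: the variational characterization $\sigma_{\rm min}(\Mv)=\min_{\|\xv\|=1}\|\Mv\xv\|$ plus the triangle inequality, with the transposition reduction to handle wide matrices. Your flag about the index convention for $\sigma_{\rm min}$ — and why the naive variational formula fails for a matrix with a shape-forced kernel — is exactly the right point of care, and the alternative via Weyl's singular-value perturbation inequality is equally valid.
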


\subsection{Uniformly dithered scalar quantizer}
\label{subapp:uniformly-dithered-scalar-quantizer}

Let us consider quantizing a scalar $x$ with $|x| \leq \Rm$.
Given $\Bm$ bits, the scalar quantizer with {\it dynamic range} $\Rm$ is described by first specifying the $M = 2^{\Bm}$ quantization points
\begin{equation*}
    q_1 = -\Rm, q_2 = -\Rm + \Delta, q_3 = -\Rm + 2\Delta, \ldots, q_M = -\Rm + (M-1)\Delta, 
\end{equation*}
where $\Delta = \frac{2\Rm}{M-1}$ is the resolution.
The quantizer operation is defined as:
\begin{align}
    \label{eq:uniform-dithered-quantizer}
    \Qm_{\Rm,\Bm}(x) = 
    \begin{cases}
        q_{k+1} \;\; \text{with probability} \;\; r, \\
        \hspace{2mm} q_k \;\; \hspace{2mm}\text{with probability} \;\; 1-r,
    \end{cases}
\end{align}
where $k = \argmaximize_{j}\{q_j \leq x\}$, i.e., $x \in [q_k, q_{k+1})$, and $r = \frac{x - q_k}{\Delta}$.
As shown in the following lemma \ref{lem:uniformly-dithered-scalar-quantizer-mean-variance}, such a quantizer satisfies
\begin{equation}
    \label{eq:uniform-dithered-quantizer-properties}
    \Ebb\;[\Qm_{\Rm, \Bm}(x)] = x \;\; \text{ and } \;\; \Ebb\Paren{\Qm_{\Rm, \Bm}(x) - x}^2 \leq \frac{\Delta^2}{4} = \frac{{\rm R}^2}{\Paren{2^{\rm B} - 1}^2},
\end{equation}
i.e., it is unbiased and the error variance depends on $\Rm$ and $\Bm$.
Here, the $\Ebb(\cdot)$ is over the randomness from dithering in \eqref{eq:uniform-dithered-quantizer}.
If the input $x$ to the quantizer falls outside this range, i.e., $x > {\rm R}$ or $x < -{\rm R}$, the quantizer is said to be {\it saturated}.
To quantize any matrix $\Xv$, $\Qm_{\Rm, \Bm}(\Xv)$ is obtained by quantizing each entry independently, i.e., $\Br{\Qm_{\Rm, \Bm}(\Xv)}_{ij} \triangleq \Qm_{\Rm, \Bm}(X_{ij})$.

\begin{lemma}
    \label{lem:uniformly-dithered-scalar-quantizer-mean-variance}
    For scalar $x$ with $\abs{x} \leq \Rm$, denote the quantization error of uniformly dithered $\Bm$--bit scalar quantizer as $\epsilon = \Qm_{\Rm, \Bm}(x)- x$.
    Then, $\Ebb[\epsilon] = 0$ and $\Var{\epsilon} \leq \frac{\Rm^2}{\Paren{2^{\Bm}- 1}^2}$, where the expectation $\Ebb$ is over the randomness due to dithering in the quantizer operation.
\end{lemma}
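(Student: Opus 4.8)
The plan is to verify the two claims directly from the definition of the uniformly dithered quantizer in \eqref{eq:uniform-dithered-quantizer}, since the statement is a short self-contained computation rather than anything requiring machinery from elsewhere in the paper. Write $\Delta = \frac{2\Rm}{2^{\Bm} - 1}$ for the resolution, and suppose $x \in [q_k, q_{k+1})$, so that $r = \frac{x - q_k}{\Delta} \in [0,1)$ and $q_{k+1} - q_k = \Delta$. The quantizer outputs $q_{k+1}$ with probability $r$ and $q_k$ with probability $1-r$.

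First I would compute the mean: $\Ebb[\Qm_{\Rm,\Bm}(x)] = r\,q_{k+1} + (1-r)\,q_k = q_k + r(q_{k+1}-q_k) = q_k + r\Delta = q_k + (x - q_k) = x$, so $\Ebb[\epsilon] = 0$. Next, I would express the error itself in the two possible outcomes: $\epsilon = q_{k+1} - x = \Delta(1-r)$ with probability $r$, and $\epsilon = q_k - x = -r\Delta$ with probability $1-r$. Since $\epsilon$ has zero mean, its variance equals its second moment:
\begin{equation*}
\Var{\epsilon} = \Ebb[\epsilon^2] = r\,\Delta^2(1-r)^2 + (1-r)\,r^2\Delta^2 = r(1-r)\,\Delta^2\big((1-r) + r\big) = r(1-r)\,\Delta^2.
\end{equation*}
Finally, bounding $r(1-r) \leq \tfrac14$ over $r \in [0,1)$ gives $\Var{\epsilon} \leq \frac{\Delta^2}{4} = \frac{\Rm^2}{\Paren{2^{\Bm}-1}^2}$, which is the claimed inequality (and also justifies the $\tfrac{\Delta^2}{4}$ bound quoted in \eqref{eq:uniform-dithered-quantizer-properties}).

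There is no real obstacle here — the only points to be careful about are (i) using $|x| \leq \Rm$ to guarantee that $x$ indeed falls in some interval $[q_k, q_{k+1})$ so the quantizer is unsaturated and the formula applies, and (ii) noting that the bound $r(1-r)\le \tfrac14$ is attained at the midpoint of a cell, so the variance bound is tight and cannot be improved without further assumptions on $x$. I would state the lemma for unsaturated inputs (as the hypothesis $|x|\le\Rm$ already ensures) and remark that for matrices the entrywise errors are obtained by applying this bound coordinatewise.
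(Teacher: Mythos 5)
Your proof is correct and follows essentially the same route as the paper's: compute the mean directly from the definition, write the second moment as a function of the position within the quantization cell, and bound it by $\Delta^2/4$ via the fact that the product of two nonnegative numbers summing to $\Delta$ is maximized at the midpoint. The only (cosmetic) difference is that you parametrize by $r \in [0,1)$ and obtain the exact identity $\Var{\epsilon} = r(1-r)\Delta^2$ before bounding, whereas the paper works directly with $q_{k+1}-x$ and $x-q_k$ and writes the same quantity as an inequality $\leq (q_{k+1}-x)(x-q_k)$ -- which is in fact an equality, as your calculation makes plain.
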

\begin{proof}
    Suppose $x \in [q_k, q_{k+1})$ and $q_{k+1} = q_k + \Delta$, where $\Delta = \frac{2\Rm}{2^{\Bm} - 1}$. 
    Then,
    \begin{equation*}
        \Ebb\;{\Qm_{\Rm,\Bm}}(x) = q_{k+1}\;\frac{x - q_k}{\Delta} + q_k \; \Paren{1 - \frac{x - q_k}{\Delta}} = \frac{\Paren{q_k + \Delta}\Paren{x - q_k} + q_k\Paren{\Delta - x + q_k}}{\Delta} = x.
    \end{equation*}

    Furthermore, the variance can be upper bounded as
    \begin{align*}
        \Var{\Qm_{\Rm,\Bm}(x) - x}^2 &= (q_{k+1} - x)^2\frac{(x - q_k)}{\Delta} + (q_k - x)^2\Paren{1 - \frac{x - q_k}{\Delta}} \nonumber \\
        &\leq \Paren{q_{k+1} - x}\Paren{x - q_k} \nonumber \\
        &\leq \sup_{x \in [q_k, q_{k+1})}\Paren{q_{k+1} - x}\Paren{x - q_k} \nonumber\\
        &= \Paren{q_{k+1} - \frac{q_k + q_{k+1}}{2}}\Paren{\frac{q_k + q_{k+1}}{2} - q_k} = \frac{\Delta^2}{4} = \frac{\Rm^2}{\Paren{2^{\Bm} - 1}^2}.
    \end{align*}
\end{proof}

When the input $x$ to the quantizer exceeds $\Rm$, the quantizer is said to be {\bf saturated}, causing the quantization error to deviate from zero mean and bounded variance. 
Thus, it is crucial to ensure that the quantizer operates within the {\bf unsaturated} regime with high probability.

\section{Limitations and Further Discussions}
\label{app:limitations-and-future-work}

Our proposed algorithm \caldera exploits low-rank structure in weight matrices of LLMs, and is seen to boost the performance of existing methods in the literature of LLM compression.
It does so by providing an additional degree of freedom to control the compression ratio -- namely, the target rank $(k)$.
Theoretical guarantees show improved performance over existing benchmarks, when the matrix being compressed is inherently low-rank to begin with -- something that is observed in LLMs.
This also implies that \caldera can be used for any other application scenarios that require matrix compression, as discussed in \cite{saha2023matrix}.
Its effectiveness largely depends on the presence of an inherent approximate low-rank structure, which is seen in many real-world matrices \cite{udell-townsend-siam-2019}.

Despite attaining a lower loss for LLM compression, since \caldera is designed to tackle an optimization problem through an iterative process, it requires slightly more computational resources.
For instance, compressing Llama-2 7B or Mistral-7B models (with rank-$256$) took approximately $34$ GPU-hours (on NVIDIA A10G GPUs provisioned from an AWS G5 instance), and compressing Llama-2 13B took $59$ GPU-hours (on a locally hosted NVIDIA A6000 GPU).
Moreover, LLaMa-2 70B can be quantized via CALDERA with rank-$256$ factors in approximately $90$ GPU hours (on an H100 from Lambda labs), which is on par with QuIP\# (with RHT finetuning), which reports $100$ GPU hours (on A100).
It should be noted that these wallclock times can be reasonably afforded, and are {\it not prohibitively large}.
Furthermore, since the compression of an LLM is a one-time computational expense, this cost becomes highly amortized considering the frequent use of LLMs for inference following the deployment of the compressed model.

Although \caldera often achieves perplexities and accuracies similar to unquantized models, a gap remains, as shown in Tables \ref{tab:llama-2-zeroshot} and \ref{tab:llama-3-zeroshot}. 
This indicates there is potential for enhancing quantization strategies.
For example, the target rank $(k)$ can be treated as a hyper-parameter, and adjusted across different layers. 
\citet{sharma2023truth} demonstrate that such a layer-specific rank reduction can improve generalization capabilities.
Additionally, improved fine-tuning strategies such as those proposed by \citet{liu2024dora}, which reduce the gap between LoRA and full fine-tuning, can be incorporated with the low-rank adaptation step of \caldera.
Detailed investigations are left for future work.

{\bf Broader Impacts}: Compressing models enables their deployment in resource-constrained settings, facilitating educational and technological advancements with limited infrastructure.
Deploying on edge devices for inference also enhances privacy by reducing the need for data to be sent to centralized servers for inference, thereby enhancing user privacy and data security.
Additionally, lower computational requirements of inference using compressed models is a step towards adoption of environment-friendly AI strategies.

On the other hand, as the compression is often not always lossless, any technique may result in reduced accuracy or loss of nuanced understanding, potentially impacting the reliability of the models in critical applications.
Conseuqently, due diligence should be exercised when deploying these models.
From a broader perspective, LLMs are quite powerful, and their easier deployment can possibly lead to misuse, such as the spread of misinformation or automated generation of harmful content.
Consequently, robust regulatory frameworks are necessary, as LLMs continue becoming more accessible to the general audience.

\end{document}